\documentclass{article} % ICML 推荐使用默认 article 类，由 sty 文件控制版面

\usepackage{microtype}
\usepackage{graphicx}
\usepackage{booktabs} 
\usepackage{hyperref}

\usepackage[preprint]{icml2026}

\usepackage{times}
\usepackage{latexsym}
\usepackage{subcaption} % 推荐加入，方便排版子图

\usepackage[T1]{fontenc}
\usepackage[utf8]{inputenc}
\usepackage{inconsolata}

\usepackage{graphicx}
\usepackage{subcaption}
\usepackage{multirow}

\usepackage{amsmath}

\newtheorem{lemma}{Lemma}[section]
\newtheorem{proof}{Proof}[section]
\usepackage{paralist}

\newtheorem{proposition}{Proposition}
\newtheorem{assumption}{Assumption}
\newtheorem{corollary}{Corollary}

\usepackage[capitalize]{cleveref}
\crefname{section}{Sec.}{Secs.}
\Crefname{section}{Section}{Sections}
\Crefname{table}{Table}{Tables}
\crefname{table}{Tab.}{Tabs.}

\usepackage{microtype}
\usepackage{booktabs} % 

\definecolor{darkgreen}{rgb}{0.0, 0.5, 0.0} 

\newcommand{\comt}[1]{#1}

\renewcommand{\comt}[1]{}

\usepackage[dvipsnames,table]{xcolor}

\usepackage{tabularx}
\usepackage{multicol}
\definecolor{myblue}{RGB}{235,235,250}

\usepackage{xspace}
\usepackage{pifont}
\usepackage{latexsym}
\usepackage{inconsolata}
\usepackage{arydshln}
\usepackage{enumitem}
\usepackage{wrapfig}
\usepackage{array}
\usepackage{epigraph}

\usepackage{amsmath,amssymb}

\usepackage{times}
\usepackage{latexsym}
\usepackage{amsmath,amssymb}
\usepackage{graphicx}
\usepackage{booktabs}
\usepackage{microtype}

\definecolor{lightpink}{RGB}{204, 231, 207} %lightgreen tabcolor3
\definecolor{lightblue}{RGB}{210, 220, 250} %lightblue tabcolor5
\definecolor{lightgray}{RGB}{237, 237, 237} %lightblue tabcolor5

\definecolor{superlightred}{rgb}{0.99, 0.92, 0.92}
\definecolor{darkgreen}{RGB}{50,100,0}
\definecolor{darkred}{RGB}{200, 0, 0}

\usepackage{makecell}
\usepackage{colortbl}

\renewcommand{\multirowsetup}{\centering}
\definecolor{mygray}{gray}{.92}
\definecolor{mygreen1}{RGB}{253, 244, 244}%255, 234, 234
\definecolor{mygreen2}{RGB}{238, 243, 243}%223, 240, 240
\definecolor{ForestGreen}{RGB}{34,139,34}
\newcommand{\fg}[1]{\mathbf{\mathcolor{ForestGreen}{#1}}}
\definecolor{Forestred}{RGB}{220,50,50}

\begin{document}

% =========================================================
% ICML 作者与标题块 (必须放在 \twocolumn[...] 内部)
% =========================================================
\twocolumn[

% \icmltitle{CLASP: Prompt-Routed Integration and Sparsification of Multi-layer visual tokens}
\icmltitle{CLASP: Class-Adaptive Layer Fusion and Dual-Stage Pruning for Multimodal Large Language Models}

% List of affiliations: The first argument should be a (short)
% identifier you will use later to specify author affiliations
% Academic affiliations should list Department, University, City, Region, Country
% Industry affiliations should list Company, City, Region, Country

\begin{icmlauthorlist}
\icmlauthor{Yunkai Dang\textsuperscript{*}}{sch} 
\icmlauthor{Yizhu Jiang\textsuperscript{*}}{sch} 
\icmlauthor{Yifan Jiang}{sch}
\icmlauthor{Qi Fan}{sch}
\icmlauthor{Yinghuan Shi}{sch}
\icmlauthor{Wenbin Li\textsuperscript{$\dagger$}}{sch} % <--- 加上了 $ 符号解决报错
\icmlauthor{Yang Gao}{sch}
\end{icmlauthorlist}

\icmlaffiliation{sch}{School of Artificial Intelligence Science and Technology, Nanjing University}

\icmlcorrespondingauthor{Wenbin Li}{liwenbin.nju@gmail.com, yunkaidang1@gmail.com}

\icmlkeywords{Machine Learning, ICML, Vision Language Models} 

\vskip 0.3in
] 

% 使用 \quad 增加一个空格，然后在后面补上对应的 dagger 符号解释
\printAffiliationsAndNotice{\icmlEqualContribution \quad $\dagger$ Corresponding author.}
% 正文内容
% 

\begin{abstract} 
Multimodal Large Language Models (MLLMs) suffer from substantial computational overheads, driven by the massive redundancy of visual token sequences. 
To mitigate such redundancy, existing works typically rely on single-layer ViT features and static pruning strategies.
However, these fixed configurations often render them brittle across diverse instructions. 
To address these limitations, we present \textit{class-adaptive layer fusion and dual-stage pruning (CLASP)}, a plug-and-play token reduction framework. 
Specifically, we construct a category-specific visual representation via multi-layer vision feature fusion. 
Then we perform dual-stage pruning that allocates the token budget between attention-salient pivots (relevance) and redundancy-aware completion tokens (coverage). 
By class-adaptively pruning, our method shows that prompt-conditioned feature fusion and budget allocation enable aggressive yet robust visual token pruning. 
Experiments show that our method achieves superior performance across various benchmarks, pruning ratios, and MLLM architectures compared to existing methods.
Code will be available at \url{https://github.com/Yunkaidang/CLASP}.
\end{abstract}

\section{Introduction}
\label{sec:intro}

\begin{figure}[t]
    \centering
    \includegraphics[width=1\linewidth]{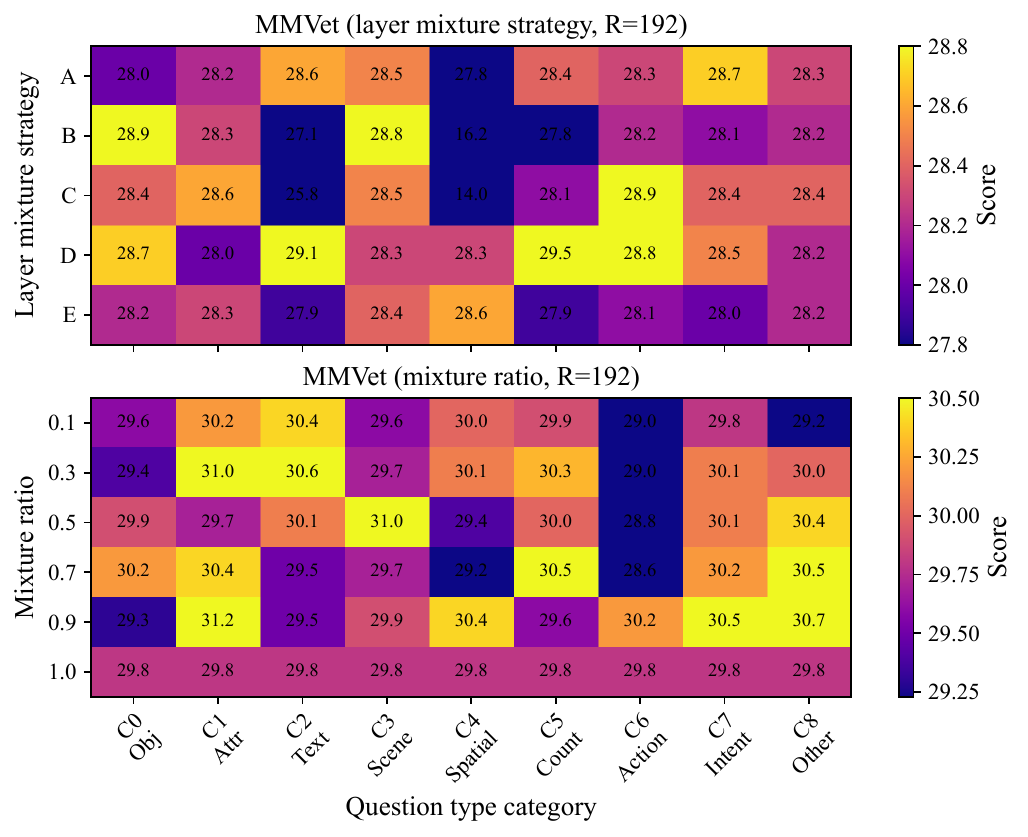}
    \caption{
    Impact of hyperparameter settings on MMVet dataset performance (LLaVA-v1.5-7B, 192 retained tokens).
    The heatmaps illustrate the score distribution across different question categories under varying conditions.
    \textbf{Top:} Evaluation of five representative layer-fusion strategies (A–E), ordered by an increasing proportion of weights assigned to deeper layers (\textit{i.e.}, shifting from shallow in A to deep in E).
    \textbf{Bottom:} Sweeping the attention--similarity mixing coefficient (higher: more attention; lower: more similarity).}
    \label{fig:motivation-1}
\end{figure}

Multimodal Large Language Models (MLLMs) extend the instruction-following and reasoning capabilities of LLMs to the visual domain~\citep{yin2024survey，dang2024explainable}.
These models typically align pre-trained vision encoders with autoregressive language decoders through an alignment module~\citep{wu2023multimodal}. 
Despite their impressive performance across various multimodal benchmarks~\citep{liu2024mmbench,fu2025mme}, MLLMs face significant challenges in practical utility. 
The primary bottleneck stems from the substantial computational overhead associated with processing visual tokens.
In these architectures, images are typically serialized into long, high-density sequences of patch-level tokens~\citep{jin2025efficient}. 
This representation causes the computational cost of self-attention to scale quadratically with the input resolution. 
For instance, increasing the input resolution from $336 \times 336$ in LLaVA-1.5~\citep{liu2023llava} to $672 \times 672$ in LLaVA-NeXT~\citep{liu2024llava} raises the token count from $576$ to a staggering $2,880$. 
Such a proliferation of tokens imposes excessive demands on both memory footprint and inference latency. 
Ultimately, these overheads create a major barrier to efficient inference and deployment~\citep{kong2025token}.

To reduce the inference cost of MLLMs, recent studies~\citep{xing2024pyramiddrop,chen2024image,zhang2024sparsevlm,bolya2022token,wen2025token,zou2025don} propose token reduction approaches to prune visual tokens.
Existing approaches can be broadly grouped into two primary directions.
The first line leverages cross-modal attention scores~\citep{chen2024image,zhang2024sparsevlm,zhan2024exploring,arif2025hired,zhang2025vispruner} to identify and retain instruction-relevant visual regions, primarily focusing on token relevance.
The second line employs similarity-based pruning~\citep{bolya2022token,xing2024pyramiddrop,yang2025visionzip,alvar2025divprune,zhang2025beyond} to remove redundant patches by merging or dropping low-saliency tokens, emphasizing the coverage of visual information.
While existing approaches have achieved promising results from various perspectives, most methods still rely on category-insensitive and fixed strategies for both visual feature extraction and token pruning.
Specifically, these methods rely on a single or a fixed set of ViT layers to derive visual representations.
These static representations serve as both pruning signals and decoder inputs, yet they fail to capture task-specific nuances. 
Building on this static foundation, existing methods~\citep{chen2024image,zhang2024sparsevlm,wen2025token,zou2025don} then apply a rigid pruning strategy with predefined parameters, regardless of the instruction category.
However, such methods overlook a critical nuance: different intents emphasize distinct levels of \textit{visual abstraction} and impose varying requirements on spatial \textit{relevance and coverage}.

Based on these insights, we contend that both visual feature extraction and token pruning should be dynamically adjusted according to input question categories. 
%We argue against applying a uniform, global rule across all types of instructions. 
To verify this, we evaluate the sensitivity of specific instruction categories to different configurations under a fixed token budget (MM-Vet, LLaVA-v1.5-7B, $R=192$). 
We first conduct a preliminary analysis on layer-mixture strategies to optimize visual feature extraction (Fig.~\ref{fig:motivation-1}, top). 
Specifically, we employ multiple schemes (A$\rightarrow$E) that shift the fusion emphasis from shallow to deep ViT layers. 
The results indicate that model accuracy varies significantly depending on the layer depth of the integrated features. 
For instance, the performance of the Count category degrades as the fusion emphasis shifts from hybrid to deep layers ($29.5$ \textit{vs.}$27.9$). 
% 为了xxx，我们
To further examine the impact of pruning criteria, we investigate the limitations of relying on a fixed balance between attention and similarity (Fig.~\ref{fig:motivation-1}, bottom). 
We implement a hybrid strategy that transitions the pruning logic between attention-led and similarity-led modes. 
The findings show that a single, static mixing ratio fails to achieve optimal performance for every category. 
Specifically, text-based categories reach peak accuracy at lower ratios ($\approx 0.3$), whereas counting categories require higher ratios ($0.7$--$0.9$) in MMVet. 
These results confirm that effective token reduction requires class-aware adaptation of both layer-wise feature integration and the balance between relevance and coverage.

% As shown in Fig.~\ref{fig:motivation-1}, we vary the layer used to score visual tokens and the relevance--coverage split under a fixed token budget.
% We find that visual token importance is prompt-dependent, and static pruning policies are brittle across intent categories.
% Fine-grained prompts (\textit{e.g.}, OCR and counting) rely on local strokes, textures, and layouts, which are better preserved in shallow-to-mid representations.
% In contrast, holistic prompts (\textit{e.g.}, captioning and scene reasoning) can rely more on semantic invariances in deeper layers and often prefer stronger coverage to preserve global context.
% The optimal relevance–coverage balance varies across instruction categories, and mismatched choices can cause sharp category-wise degradation.

Motivated by this evidence, we propose \textit{class-adaptive layer fusion and dual-stage pruning (CLASP)}, a framework that dynamically fuses visual features and prunes tokens. 
In particular, we first introduce \textit{class-adaptive layer fusion} to optimize the visual representation. 
Instead of relying on a single ViT layer, this module adaptively integrates visual features from multiple layers, conditioned on the input instructions. 
This process preserves category-relevant local details while effectively removing redundant information. 
Building on these optimized representations, we implement a \textit{class-adaptive dual-stage pruning strategy}. 
The pruning process consists of a relevance-preserving stage to protect key evidence. 
It is followed by a coverage-oriented stage to maintain the necessary context while further reducing redundancy. 
We evaluate our method across a comprehensive suite of eight image benchmarks and three video benchmarks. 
The results demonstrate that our approach consistently surpasses existing state-of-the-art token pruning methods. 
Notably, our method preserves approximately $94.7\%$ of the original performance even with a token reduction of up to $88.9\%$. 
Besides, our method is plug-and-play and easily integrable into various MLLM architectures, which makes our framework well-suited for efficient practical deployment.

In summary, our main contributions are as follows:
\begin{itemize}
    \item We reveal the limitations of visual feature extraction from fixed ViT layers and the reliance on fixed attention or fixed similarity mechanisms.
    \item We propose class-adaptive layer fusion to extract more category-relevant visual representations.
    \item We introduce a class-adaptive dual-stage pruning strategy to balance the relevance and coverage requirements of visual tokens.
    \item We evaluate our method across various benchmarks, demonstrating consistent improvements in the accuracy--efficiency trade-off over existing approaches.
\end{itemize}

\section{Related Work}

\paragraph{Vision-language Models (VLMs).}
Multimodal large language models (MLLMs) enable visual reasoning by coupling vision encoders with LLMs through alignment interfaces~\citep{yin2024survey，dang2025exploring}, such as projectors~\cite{liu2023llava, liu2023improvedllava} or query-based connectors~\citep{li2023blip, instructblip}.
To enhance these capabilities, subsequent systems have focused on scaling vision backbones, training data, and instruction-tuning recipes~\citep{zhu2023minigpt, Qwen2-VL, chen2024internvl, zhu2025internvl3,yang2026annotation}.
However, as these models scale, the serialization of redundant visual inputs into long token sequences imposes a significant computational burden.
This burden becomes particularly acute when handling high-resolution images, where the resulting explosion in sequence length leads to prohibitive inference latency~\citep{liu2023llava, liu2024llava, LLaVA-OneVision-1.5}.
Beyond efficiency, current MLLMs still struggle with fine-grained visual perception, often leading to perception gaps and hallucinations~\cite{Li-hallucination-2023,yu2024rlaif,dang2025fuse}.
These perceptual shortcomings indicate a pressing need for more faithful visual representations and scalable context handling~\cite{zou2025don, wen2025stop}.
A primary structural bottleneck is that most projector-based models extract tokens from only a single, static and often late, vision-encoder layer.
By relying on such a fixed extraction point, these models fail to dynamically adapt the level of visual abstraction to the specific requirements of diverse instructions.

\paragraph{Visual Token Compression and Pruning.}
Visual token compression and pruning have emerged as pivotal techniques to alleviate the quadratic attention cost of long sequences, particularly as MLLMs increasingly handle high-resolution inputs. 
To achieve such efficient reduction, various methodologies have been developed, differing in their importance signals and optimization goals, such as instruction relevance versus redundancy. 
Attention-based approaches, like FastV~\cite{chen2024image} and VTW~\cite{lin2025boosting}, typically prune tokens using cross-modal attention patterns, while PyramidDrop~\cite{xing2024pyramiddrop} operates on intra-modal attention inside the vision backbone.
Instruction-guided methods like SparseVLM~\cite{zhang2024sparsevlm} further strengthen the role of textual instructions to condition these attention signals.
However, attention-based criteria are often noisy and tend to retain highly duplicated tokens.
To address this, similarity-driven methods (\textit{e.g.}, ToMe~\cite{bolya2022token}, VisionZip~\cite{yang2025visionzip}, HiRED~\cite{arif2025hired}, VisPruner~\cite{zhang2025vispruner}) leverage intrinsic visual cues to manage redundancy and maintain representation diversity.
Nonetheless, these approaches are fundamentally limited by a fixed-layer visual readout, which fails to provide the adaptive level of visual abstraction required by diverse tasks. 
Furthermore, they employ a rigid pruning policy that cannot balance instruction relevance and visual coverage according to the specific intent category.
In contrast, CLASP leverages the inferred instruction category to dynamically guide both multi-layer visual feature fusion and hybrid attention-similarity pruning.

\paragraph{ }

\section{Preliminary}
\label{sec:method}

\begin{figure*}[t]
    \centering
    \includegraphics[width=0.98\linewidth]{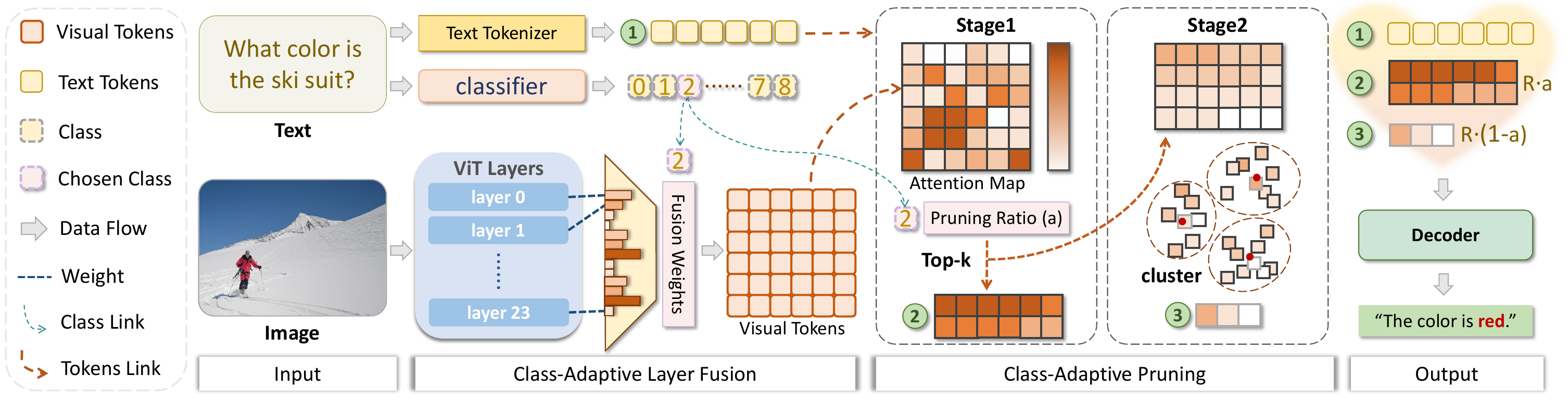}
    \caption{\textbf{Overview of our method.} The framework utilizes a prompt-to-class router to condition visual processing on textual intent. \textbf{(i) Class-Adaptive Layer Fusion:} ViT features are aggregated from multiple layers using class-specific mixture weights to capture an appropriate level of visual abstraction. \textbf{(ii) Class-Adaptive Pruning:} The projected visual token budget $R$ is dynamically split (ratio $a$) between attention-based selection (Stage 1) and similarity-based clustering (Stage 2) to balance instruction relevance with visual coverage.}
    \label{fig:motivation}
\end{figure*}

% \subsection{Preliminary}
% \label{sec:prelim}

\paragraph{Architecture of MLLMs.}
We consider a multimodal LLM consisting of (i) a vision encoder with $L$ layers,
(ii) a lightweight vision--text interface (\textit{e.g.}, a projector), and (iii) a causal text decoder.
Specifically, the vision encoder extracts visual features from input images, which are then aligned with the text embedding space to guide the generation process.
For sample $n$, let $\mathcal{V}_n$ denote the set of visual token indices. The vision encoder produces layer-wise \emph{patch token representations}
$\{\mathbf{Z}_n^{(l)}\}_{l=1}^L$, where $\mathbf{z}^{(l)}_{n,t}\in\mathbb{R}^{d_v}$ lies in the
vision-encoder embedding space:
\begin{equation}
\mathbf{Z}^{(l)}_n \triangleq \{\mathbf{z}^{(l)}_{n,t}\}_{t\in\mathcal{V}_n},\qquad l=1,\ldots,L.
\label{eq:prelim-vision}
\end{equation}
The interface module (\textit{e.g.}, a projector) maps the selected/fused visual representations into
the decoder embedding space, yielding \emph{decoder-space visual tokens} $\tilde{\mathbf{Z}}_n$,
which are concatenated with the text embeddings of the prompt $x_n$.
The decoder generates $\mathbf{y}_n$ autoregressively:
\begin{equation}
p(\mathbf{y}_n\mid x_n,\tilde{\mathbf{Z}}_n)=\prod_i p\!\left(y_{n,i}\mid y_{n,<i},x_n,\tilde{\mathbf{Z}}_n\right),
\end{equation}

\paragraph{Attention Mechanism for Token Relevance.}
Long dense visual token sequences are a primary efficiency bottleneck in MLLMs.
A common training-free strategy is to use attention weights as a proxy for token relevance and perform token reduction accordingly
(\textit{e.g.}, by retaining the most attended tokens)~\citep{zhang2024sparsevlm,wen2025stop,zhang2025vispruner}.
Given \emph{visual} token representations $\tilde{\mathbf{Z}}_n$ (as part of the input sequence) participating in an attention block
(\textit{e.g.}, a vision-encoder or multimodal-decoder layer), self-attention is:
\begin{equation}
\mathrm{Attn}(\mathbf{Q},\mathbf{K},\mathbf{V})
=\mathrm{softmax}\left(\frac{\mathbf{Q}\mathbf{K}^{\top}}{\sqrt{d_k}}\right)\mathbf{V},
\label{eq:prelim-attn}
\end{equation}
with $\mathbf{Q},\mathbf{K},\mathbf{V}$ denoting the query/key/value projections and $d_k$ the key dimension (per head).
Let $\mathbf{A}_n=\mathrm{softmax}(\mathbf{Q}\mathbf{K}^{\top}/\sqrt{d_k})$ be the corresponding attention matrix
(mean over heads in practice), where $\mathbf{A}_{n,i,t}$ denotes the attention mass assigned by query token $i$ to key token $t$.
To score a visual token $t\in\mathcal{V}_n$, we measure the attention it receives from a compact reference set $\mathcal{S}$
(\textit{e.g.}, instruction tokens or designated summary/query tokens):
\begin{equation}
\phi_{n,t}=\frac{1}{|\mathcal{S}|}\sum_{i\in\mathcal{S}}\mathbf{A}_{n,i,t},
\qquad t\in\mathcal{V}_n.
\label{eq:prelim-saliency}
\end{equation}
Given a retention budget $K$, attention-based selection keeps the top-$K$ highest-scoring tokens:
\begin{equation}
\mathcal{P}_n=\mathrm{Top}_{K}\bigl( \{ \phi_{n,t} \}_{t\in\mathcal{V}_n} \bigr),
\label{eq:prelim-attn-keep}
\end{equation}
where $\mathrm{Top}_{K}(\cdot)$ returns indices of the $K$ largest values.

\paragraph{Similarity Mechanism for Coverage.}
Attention is a strong relevance cue, yet attention-only pruning may retain many near-duplicate tokens and can be unstable under attention
shift/dispersion~\citep{zhang2025vispruner,wen2025stop}.
Therefore, several approaches incorporate similarity-based signals to ensure broader coverage~\citep{zhang2025vispruner}.

Given token vectors $\{\tilde{\mathbf{z}}_{n,t}\}_{t\in\mathcal{V}_n}$, define normalized features
$\mathbf{u}_{n,t}=\tilde{\mathbf{z}}_{n,t}/\|\tilde{\mathbf{z}}_{n,t}\|_2$ so that
$\mathrm{sim}(t,t')=\mathbf{u}_{n,t}^\top\mathbf{u}_{n,t'}$ is cosine similarity.
Let $\mathcal{P}_n$ be the attention-selected pivot set in Eq.~\eqref{eq:prelim-attn-keep}.
For any candidate token $t \in \mathcal{V}_n \setminus \mathcal{P}_n$,
we define its \emph{worst-case} redundancy with respect to the pivots as:
\begin{equation}
\rho_{n,t}=\max_{j\in\mathcal{P}_n}\mathbf{u}_{n,t}^\top \mathbf{u}_{n,j}.
\label{eq:prelim-redundancy}
\end{equation}
A smaller $\rho_{n,t}$ indicates that token $t$ is weakly covered by the pivot set (low cosine overlap)
and thus provides more complementary information, which is preferred for enhancing contextual coverage.

\section{Method}

\paragraph{Overview.}
Given a prompt $x_n$ and an input image $\mathbf{I}_n$, our goal is to reduce the \emph{effective} number of visual tokens processed by the multimodal decoder to a target budget $R$, while preserving instruction-relevant content and maintaining token coverage.
Following SparseVLM~\citep{zhang2024sparsevlm}, we adopt a three-stage progressive pruning schedule and sparsify the visual token set at intermediate decoder layers $\{2,6,15\}$.
The stage-wise retention budgets are configured to match the target effective budget $R$.
As illustrated in Figure~\ref{fig:motivation}, our method introduces two class-adaptive components driven by a prompt-to-class router:
(i) \emph{class-adaptive layer fusion}, which forms a category-conditioned mixture of multi-layer vision representations, and
(ii) \emph{class-adaptive pruning}, which allocates each stage budget between attention-salient pivots (relevance) and low-redundancy completion tokens (coverage).
The complete procedure is summarized in Algorithm~\ref{alg:cap}.

% =========================
% Main paper (Sec. 4): Method
% Replace/overwrite Sec. 4.1 and Sec. 4.2 with the following LaTeX.
% =========================

\subsection{Class-Adaptive Layer Fusion}
\label{sec:class-fusion}

\paragraph{Motivation and Design Principle.}
Vision backbones in MLLMs (\textit{e.g.}, ViT-style encoders) exhibit a well-known depth hierarchy:
shallower layers preserve local texture, edges, and fine spatial layouts, while deeper layers become increasingly semantic and invariant.
Our study (Sec.~\ref{sec:intro}, Fig.~\ref{fig:motivation-1}) indicates that \emph{the optimal feature granularity for pruning depends on the instruction category}:
detail-centric instructions (\textit{e.g.}, OCR, counting) benefit from retaining mid/shallow cues, whereas holistic instructions (\textit{e.g.}, scene understanding) prefer deeper abstractions.
Therefore, instead of extracting visual representations from a fixed depth, we construct a prompt-conditioned depth mixture that adaptively balances fine-grained and semantic representations.

\paragraph{Prompt-to-class Routing.}
We assume a discrete category space $\mathcal{C}=\{0,1,\ldots,C-1\}$ defined by a text taxonomy (\textit{e.g.}, benchmark question types or a user-defined schema).
A lightweight router $\mathrm{Route}(\cdot)$ is employed to map the input prompt $x_n$ to a category index:
\begin{equation}
c_n = \mathrm{Route}(x_n),\qquad c_n\in\mathcal{C}.
\label{eq:router}
\end{equation}
The text-only router runs before token reduction, incurring negligible overhead compared to multimodal decoding.

\paragraph{Category-conditioned Layer Mixture.}
To encode category-specific layer preferences, we maintain a score matrix $\mathbf{W}\in\mathbb{R}^{C\times L}$, 
where the $c$-th row $\mathbf{w}_c$ assigns unnormalized importance to each vision layer (detailed in~\ref{sec:appendix-method}).
For sample $n$, we convert $\mathbf{w}_{c_n}$ to mixture weights via a temperature-controlled softmax $\boldsymbol{\alpha}_n=\mathrm{softmax}\!\left(\tau\,\mathbf{w}_{c_n}\right)$:
\begin{equation}
\boldsymbol{\alpha}_n=\{\alpha_{n,l}\}_{l=1}^{L},\ \sum_{l=1}^{L}\alpha_{n,l}=1,
\label{eq:layer-mix}
\end{equation}
where $\tau>0$ controls the sharpness of the mixture (small $\tau$: uniform averaging; large $\tau$: near one-hot selection).
This formulation strictly generalizes fixed-layer heuristics: choosing a single layer corresponds to a one-hot $\boldsymbol{\alpha}_n$.

\paragraph{Token-wise Fusion.}
Let $\mathbf{Z}_n^{(l)}=\{\mathbf{z}^{(l)}_{n,t}\}_{t\in\mathcal{V}_n}$ be layer-wise \emph{patch representations} as in Eq.~\eqref{eq:prelim-vision}.
Directly mixing intermediate activations across layers can be sensitive to scale differences.
We then fuse multi-layer \emph{features} via adaptive weights by a convex combination:
\begin{equation}
\bar{\mathbf{Z}}_n \triangleq \{\bar{\mathbf{z}}_{n,t}\}_{t\in\mathcal{V}_n},
\qquad
\bar{\mathbf{z}}_{n,t}=\sum_{l=1}^{L}\alpha_{n,l}\,{\mathbf{z}}^{(l)}_{n,t}.
\label{eq:fusion}
\end{equation}
The convexity is deliberate: it provides stability and interpretability (see Appendix~\ref{sec:appendix-theory} for formal properties).
Finally, the original frozen MLP projector is employed to map the fused features into the decoder embedding space:
\begin{equation}
\tilde{\mathbf{Z}}_n = f_{\text{proj}}(\bar{\mathbf{Z}}_n),
\qquad
\tilde{\mathbf{Z}}_n=\{\tilde{\mathbf{z}}_{n,t}\}_{t\in\mathcal{V}_n},\ \tilde{\mathbf{z}}_{n,t}\in\mathbb{R}^{d}.
\label{eq:proj}
\end{equation}
While we assume ViT-style token alignment, hierarchical backbones are supported via a simple token-alignment mapping to a common grid, and fusion adds only an $O(L\,|\mathcal{V}_n|\,d_v)$ overhead (Appendix~\ref{app:hier-fusion} and Appendix~B.6).

\begin{algorithm}[t]
\caption{Class-adaptive layer fusion and pruning.}
\label{alg:cap}
\begin{algorithmic}[1]
\REQUIRE prompt $x_n$; layer-wise tokens $\{\mathbf{Z}^{(l)}_n\}_{l=1}^{L}$ with indices $\mathcal{V}_n$; final budget $R$;
progressive schedule $\{R^{(2)},R^{(6)},R^{(15)}\}$;
router $\mathrm{Route}(\cdot)$; layer-score matrix $\mathbf{W}$; split ratios $\mathbf{a}$;
temperature $\tau$; projector $f_{\text{proj}}$; reference set $\mathcal{S}$
\STATE $c_n \leftarrow \mathrm{Route}(x_n)$
\STATE $\boldsymbol{\alpha}_n \leftarrow \mathrm{softmax}(\tau\,\mathbf{w}_{c_n})$
\STATE $\tilde{\mathbf{Z}}_n \leftarrow f_{\text{proj}}\!\Big(\sum_{l=1}^{L}\alpha_{n,l}\,{\mathbf{Z}}^{(l)}_n\Big)$
\STATE $\mathcal{V} \leftarrow \mathcal{V}_n$
\FOR{$(l_s,R_s)\in\{(2,R^{(2)}),(6,R^{(6)}),(15,R^{(15)})\}$}
    \STATE Compute relevance $\{\phi_{n,t}\}_{t\in\mathcal{V}}$ at layer $l_s$
    \STATE $K_1 \leftarrow \lfloor a_{c_n}R_s \rfloor$;\quad $K_2 \leftarrow R_s-K_1$
    \STATE $\mathcal{P} \leftarrow \mathrm{Top}_{K_1}(\{\phi_{n,t}\}_{t\in\mathcal{V}})$ \hfill (Eq.~\eqref{eq:pivots})
    \STATE Initialize centers using $\mathrm{Bottom}_{K_2}(\{\rho_{n,t}(\mathcal{P})\})$ \hfill (Eq.~\eqref{eq:redundancy-inst})
    \STATE Refine centers via Spherical $K$-Means ($T$ iters) \hfill (Eqs.~\eqref{eq:kmeans-assign}, \eqref{eq:kmeans-update})
    \STATE $\mathcal{Q} \leftarrow$ Select cluster medoids \hfill (Eqs.~\eqref{eq:diverse-q}, \eqref{eq:diverse-set})
    \STATE $\mathcal{V} \leftarrow \mathcal{P}\cup\mathcal{Q}$
\ENDFOR
\STATE \textbf{return} $\{\tilde{\mathbf{z}}_{n,t}\}_{t\in\mathcal{V}}$
\end{algorithmic}
\end{algorithm}

\subsection{Class-Adaptive Pruning}
\label{sec:class-prune}

Given aligned visual tokens $\tilde{\mathbf{Z}}_n=\{\tilde{\mathbf{z}}_{n,t}\}_{t\in\mathcal{V}_n}$, we retain exactly $R$ tokens. 
Conceptually, we want a subset that (i) preserves instruction-critical evidence (\emph{relevance}) and (ii) avoids wasting budget on redundant patches (\emph{coverage}). 
A single scalar scoring rule cannot reliably satisfy both criteria across heterogeneous instruction types. 
We therefore instantiate pruning as an explicit \emph{two-stage} procedure, and make the relevance--coverage budget split category-dependent.

\paragraph{Category-dependent Split Ratio.}
We maintain a class-wise ratio vector $\mathbf{a}\in[0,1]^C$ and set $a_n\triangleq a_{c_n}$. 
Intuitively, large $a_n$ allocates more slots to attention-salient tokens (useful for detail-centric prompts), 
while small $a_n$ allocates more slots to coverage completion (useful for holistic prompts). 
Given total budget $R$, we compute:
\begin{equation}
K_1=\left\lfloor a_n R\right\rfloor,\qquad K_2=R-K_1.
\label{eq:split}
\end{equation}

\paragraph{Stage I: Attention Pivots for Relevance Preservation.}
We select $K_1$ \emph{pivot} tokens that maximize total relevance:
\begin{equation}
\mathcal{P}_n=\mathrm{Top}_{K_1}\bigl(\{\phi_{n,t}\}_{t\in\mathcal{V}_n}\bigr).
\label{eq:pivots}
\end{equation}
This stage is intentionally aggressive in preserving query- or summary-attended evidence and acts as an ``anchor'' set that protects small but critical local visual regions (\textit{e.g.}, text glyphs, counting targets). 
Appendix~\ref{sec:appendix-theory} formalizes the optimality of $\mathrm{Top}_{K_1}$ for additive relevance.

\paragraph{Stage II: Coverage Completion via Redundancy-aware Clustering.}
Attention pivots alone can be redundant: multiple high-attention tokens may correspond to near-identical patches. 
To complement them, we allocate $K_2$ slots to a clustering-based completion stage that explicitly improves coverage, consistent with the overview depicted in Fig.~2. 
Let us define $\mathcal{U}_n=\mathcal{V}_n\setminus\mathcal{P}_n$ and the normalized features $\mathbf{u}_{n,t}=\tilde{\mathbf{z}}_{n,t}/\|\tilde{\mathbf{z}}_{n,t}\|_2$. 
We first measure redundancy of a candidate token in the pivot set as:
\begin{equation}
\rho_{n,t}(\mathcal{P}_n)=\max_{j\in\mathcal{P}_n}\mathbf{u}_{n,t}^\top\mathbf{u}_{n,j},
\qquad t\in\mathcal{U}_n.
\label{eq:redundancy-inst}
\end{equation}

Instead of directly taking $\mathrm{Bottom}_{K_2}$, we use $\rho_{n,t}$ to construct a deterministic, redundancy-minimizing initialization for the clustering phase: we choose the $K_2$ least redundant tokens as seeds and set their features as initial centers. We then run $T$ iterations of spherical $K$-means~\citep{hornik2012spherical} on $\{\mathbf{u}_{n,t}\}_{t\in\mathcal{U}_n}$ computed with cosine similarity:
\begin{align}
s^{(r)}_{n,t} &= \arg\max_{k\in\{1,\dots,K_2\}}\mathbf{u}_{n,t}^\top\boldsymbol{\mu}^{(r-1)}_{n,k}, \label{eq:kmeans-assign} \\[1ex]
\boldsymbol{\mu}^{(r)}_{n,k} &= \frac{\sum_{t:s^{(r)}_{n,t}=k}\mathbf{u}_{n,t}}{\left\|\sum_{t:s^{(r)}_{n,t}=k}\mathbf{u}_{n,t}\right\|_2}, \label{eq:kmeans-update}
\end{align}
for $r=1,\dots,T$. Finally, we select one representative (medoid) token per resulting cluster:
\begin{align}
q_{n,k} &= \arg\max_{t:s^{(T)}_{n,t}=k}\mathbf{u}_{n,t}^\top\boldsymbol{\mu}^{(T)}_{n,k}, \label{eq:diverse-q} \\
\mathcal{Q}_n &= \{q_{n,k}\}_{k=1}^{K_2}. \label{eq:diverse-set}
\end{align}
Geometrically, the redundancy-aware seeding enforces angular separation from the pivot set (low $\rho_{n,t}$), while the clustering refinement reduces mutual duplication among completion tokens by encouraging them to cover multiple modes on the unit sphere. 
We use $T{=}5$ iterations by default following the ablation in Table~13.

\subsection{Retained Tokens and Decoder Inference.}
The final retained index set is constructed as the union $\mathcal{P}_n\cup\mathcal{Q}_n$, thereby yielding a total of exactly $R$ tokens:
\begin{equation}
\{\tilde{\mathbf{z}}_{n,t}\mid t\in \mathcal{P}_n\cup\mathcal{Q}_n\},
\qquad \bigl|\mathcal{P}_n\cup\mathcal{Q}_n\bigr|=R.
\label{eq:keep}
\end{equation}
We feed the retained visual tokens as a compact prefix to the decoder. Conditioned on $x_n$ and the retained tokens, the decoder generates the response $\mathbf{y}_n$ autoregressively:
\begin{equation}
\begin{aligned}
p\bigl(\mathbf{y}_n \mid x_n,&\{\tilde{\mathbf{z}}_{n,t}\}_{t\in \mathcal{P}_n\cup\mathcal{Q}_n}\bigr) \\
&=\prod_i p\Bigl(y_{n,i}\mid y_{n,<i},x_n, \{\tilde{\mathbf{z}}_{n,t}\}_{t\in \mathcal{P}_n\cup\mathcal{Q}_n}\Bigr).
\end{aligned}
\label{eq:decode}
\end{equation}
This hybrid formulation ensures that the language decoder receives a compressed yet representative visual context, where $\mathcal{P}_n$ safeguards task-critical \emph{relevance} while $\mathcal{Q}_n$ maintains global representational \emph{coverage}.

\section{Experiment}

% 备注：请确保已定义 MidnightBlue（例如：\usepackage[dvipsnames]{xcolor}）
\begin{table*}[t]
    \centering
    \footnotesize
    \caption{Performance comparison of various methods on LLaVA-v1.5-7B across different benchmarks. Results are shown for different pruning ratios, with accuracy and average performance highlighted. Best results in \textcolor{MidnightBlue}{\textbf{blue}}.}
     \label{tab1:main_table1}
    \resizebox{0.9\linewidth}{!}{
    \begin{tabular}{l | *{7}{>{\centering\arraybackslash}p{0.92cm}} |>{\centering\arraybackslash}p{1.15cm}}
        \midrule
        \textbf{\;Methods}
        & \textbf{GQA}
        & \textbf{MMB}
        & \textbf{MME}
        & \textbf{POPE}
        & \textbf{SQA}
        & \textbf{VQA}$_{\text{V2}}$
        & \textbf{VQA}$_{\text{Text}}$
        & \makecell[c]{\textbf{Average}}\\
        \midrule
        
        \textcolor{gray}{Upper Bound, 576 Tokens} & \textcolor{gray}{61.9} & \textcolor{gray}{64.7} & \textcolor{gray}{1862} & \textcolor{gray}{85.9} & \textcolor{gray}{69.5} & \textcolor{gray}{78.4} & \textcolor{gray}{58.2} & \multirow{1}*{\textcolor{gray}{100.0\%}} \\
        \midrule

        \rowcolor{mygray}
        LLaVA-1.5 \textcolor{gray}{7B} & \multicolumn{8}{c}{\textit{Retain 192 Tokens} \ $\fg{(\downarrow 66.7\%)}$}\\
        ToMe \texttt{\scriptsize{(ICLR23)}}\citep{bolya2022token} & 54.3 & 60.5 & 1563 & 72.4 & 65.2 & 68.0 & 52.1 & 88.5\% \\
        FastV \texttt{\scriptsize{(ECCV24)}}\citep{chen2024image} & 52.7 & 61.2 & 1612 & 64.8 & 67.3 & 67.1 & 52.5 & 87.8\% \\
        MustDrop \texttt{\scriptsize{(2024.11)}}\citep{liu2024multi} & 58.2 & 62.3 & 1787 & 82.6 & 69.2 & 76.0 & 56.5 & 96.6\% \\
        LLaVA-PruMerge \texttt{\scriptsize{(ICCV25)}}\citep{shang2025llava}\;\; & 54.3 & 59.6 & 1632 & 71.3 & 67.9 & 70.6 & 54.3 & 90.2\% \\
        PDrop \texttt{\scriptsize{(CVPR25)}}\citep{xing2024pyramiddrop} & 57.1 & 63.2 & 1766 & 82.3 & 68.8 & 75.1 & 56.1 & 96.0\% \\
        FiCoCo-V \texttt{\scriptsize{(2025.03)}}\citep{han2025filtercorrelatecompresstrainingfree} & 58.5 & 62.3 & 1732 & 82.5 & 67.8 & 74.4 & 55.7 & 95.4\% \\
        HiRED \texttt{\scriptsize{(AAAI25)}}\citep{arif2025hired} & 58.7 & 62.8 & 1737 & 82.8 & 68.4 & 74.9 & 47.4 & 93.9\%     \\
        VisionZip \texttt{\scriptsize{(CVPR25)}}\citep{yang2025visionzip} 
            & 59.3 
            & \textcolor{MidnightBlue}{\textbf{64.5}} 
            & 1767 
            & \textcolor{MidnightBlue}{\textbf{86.4}} 
            & 68.9 
            & 76.8 
            & 57.3 
            & 98.1\%  \\
        SparseVLM \texttt{\scriptsize{(ICML25)}}\citep{zhang2024sparsevlm} & 57.6 & 62.5 & 1721 & {83.6} & 69.1 & 75.6 & 56.1 & 95.9\% \\
        DART \texttt{\scriptsize{(EMNLP25)}}\citep{wen2025stop} 
            & 58.9 
            & 63.6 
            & \textcolor{MidnightBlue}{\textbf{1856}} 
            & 82.8 
            & \textcolor{MidnightBlue}{\textbf{69.8}} 
            & 76.7 
            & 57.4
            & 98.1\% \\
        % HoloV \scriptsize{(NeurIPS25)}\citep{zou2025don} & 59.0 & \textcolor{MidnightBlue}{\textbf{65.4}} & {1820} & \textcolor{MidnightBlue}{\textbf{85.6}} & \textcolor{MidnightBlue}{\textbf{69.8}} & 76.7 & \textcolor{MidnightBlue}{\textbf{57.4}} & \textcolor{MidnightBlue}{\textbf{98.7\%}} \\
        \rowcolor{mygreen2}
        CLASP (ours) 
            & \textcolor{MidnightBlue}{\textbf{60.4}}
            & 61.3
            & 1848
            & 85.6
            & 69.6
            & \textcolor{MidnightBlue}{\textbf{77.1}}
            & \textcolor{MidnightBlue}{\textbf{57.6}}
            & \textcolor{MidnightBlue}{\textbf{98.4\%}} \\
        
        \midrule

        \rowcolor{mygray}
        LLaVA-1.5 \textcolor{gray}{7B} & \multicolumn{8}{c}{\textit{Retain 128 Tokens} \ $\fg{(\downarrow 77.8\%)}$}\\
        ToMe \texttt{\scriptsize{(ICLR23)}}\citep{bolya2022token} & 52.4 & 53.3 & 1343 & 62.8 & 59.6 & 63.0 & 49.1 & 80.4\% \\
        FastV \texttt{\scriptsize{(ECCV24)}}\citep{chen2024image} & 49.6 & 56.1 & 1490 & 59.6 & 60.2 & 61.8 & 50.6 & 81.2\%\\
        MustDrop \texttt{\scriptsize{(2024.11)}}\citep{liu2024multi} & 56.9 & 61.1 & 1745 & 78.7 & 68.5 & 74.6 & 56.3 & 94.6\% \\
        LLaVA-PruMerge \texttt{\scriptsize{(ICCV25)}}\citep{shang2025llava} & 53.3 & 58.1 & 1554 & 67.2 & 67.1 & 68.8 & 54.3 & 87.9\%  \\
        PDrop \texttt{\scriptsize{(CVPR25)}}\citep{xing2024pyramiddrop} & 56.0 & 61.1 & 1644 & {82.3} & 68.3 & 72.9 & 55.1 & 93.6\% \\
        FiCoCo-V \texttt{\scriptsize{(2025.03)}}\citep{han2025filtercorrelatecompresstrainingfree} & 57.6 & 61.1 & 1711 & 82.2 & 68.3 & 73.1 & 55.6 & 94.6\% \\
        HiRED \texttt{\scriptsize{(AAAI25)}}\citep{arif2025hired} & 57.2 & 61.5 & 1710 & 79.8 & 68.1 & 73.4 & 46.1 & 91.9\% \\
        VisionZip \texttt{\scriptsize{(CVPR25)}}\citep{yang2025visionzip} 
            & 57.6 
            & \textcolor{MidnightBlue}{\textbf{63.4}} 
            & 1768 
            & 84.7 
            & 68.8 
            & 75.6 
            & \textcolor{MidnightBlue}{\textbf{56.8}} 
            & 96.8\% \\
        SparseVLM \texttt{\scriptsize{(ICML25)}}\citep{zhang2024sparsevlm} & 56.0 & 60.0 & 1696 & 80.5 & 67.1 & 73.8 & 54.9 & 93.3\% \\
        DART \texttt{\scriptsize{(EMNLP25)}}\citep{wen2025stop} 
            & 57.9 
            & 63.2 
            & \textcolor{MidnightBlue}{\textbf{1845}} 
            & 80.1 
            & \textcolor{MidnightBlue}{\textbf{69.1}} 
            & 75.9 
            & 56.4 
            & 96.7\% \\
        % HoloV \texttt{\scriptsize{(NeurIPS25)}}\citep{zou2025don} & 57.7 & \textcolor{MidnightBlue}{\textbf{63.9}} & 1802 & \textcolor{MidnightBlue}{\textbf{84.0}} & \textcolor{MidnightBlue}{\textbf{69.8}} & 75.5 & \textcolor{MidnightBlue}{\textbf{56.8}} & \textcolor{MidnightBlue}{\textbf{97.3\%}} \\
        \rowcolor{mygreen2}
        CLASP (ours) 
            & \textcolor{MidnightBlue}{\textbf{58.9}}
            & 60.7
            & 1790
            & \textcolor{MidnightBlue}{\textbf{85.2}}
            & 69.0
            & \textcolor{MidnightBlue}{\textbf{76.7}}
            & 56.7
            & \textcolor{MidnightBlue}{\textbf{97.0\%}} \\
        
        \midrule

        \rowcolor{mygray}
        LLaVA-1.5 \textcolor{gray}{7B} & \multicolumn{8}{c}{\textit{Retain 64 Tokens} \ $\fg{(\downarrow 88.9\%)}$}\\
        ToMe \texttt{\scriptsize{(ICLR23)}}\citep{bolya2022token} & 48.6 & 43.7 & 1138 & 52.5 & 50.0 & 57.1 & 45.3 & 70.1\%\\
        FastV \texttt{\scriptsize{(ECCV24)}}\citep{chen2024image} & 46.1 & 48.0 & 1256 & 48.0 & 51.1 & 55.0 & 47.8 & 71.1\% \\
        MustDrop \texttt{\scriptsize{(2024.11)}}\citep{liu2024multi} & 53.1 & 60.0 & 1612 & 68.0 & 63.4 & 69.3 & 54.2 & 88.1\%    \\
        LLaVA-PruMerge \texttt{\scriptsize{(ICCV25)}}\citep{shang2025llava} & 51.9 & 55.3 & 1549 & 65.3 & 68.1 & 67.4 & 54.0 & 86.5\% \\
        PDrop \texttt{\scriptsize{(CVPR25)}}\citep{xing2024pyramiddrop} & 41.9 & 33.3 & 1092 & 55.9 & 68.6 & 69.2 & 45.9 & 72.7\% \\
        FiCoCo-V \texttt{\scriptsize{(2025.03)}}\citep{han2025filtercorrelatecompresstrainingfree} & 52.4 & 60.3 & 1591 & {76.0} & 68.1 & 71.3 & 53.6 & 90.4\% \\
        HiRED \texttt{\scriptsize{(AAAI25)}}\citep{arif2025hired} & 54.6 & 60.2 & 1599 & 73.6 & 68.2 & 69.7 & 44.2 & 88.0\% \\
        VisionZip \texttt{\scriptsize{(CVPR25)}}\citep{yang2025visionzip} 
            & 55.1 
            & 60.1 
            & 1690 
            & 77.0 
            & 69.0 
            & 72.4 
            & \textcolor{MidnightBlue}{\textbf{55.5}} 
            & 92.8\% \\
        SparseVLM \texttt{\scriptsize{(ICML25)}}\citep{zhang2024sparsevlm} & 52.7 & 56.2 & 1505 & 75.1 & 62.2 & 68.2 & 51.8 & 86.5\% \\
        DART \texttt{\scriptsize{(EMNLP25)}}\citep{wen2025stop} 
            & 55.9 
            & \textcolor{MidnightBlue}{\textbf{60.6}} 
            & \textcolor{MidnightBlue}{\textbf{1765}} 
            & 73.9 
            & \textcolor{MidnightBlue}{\textbf{69.8}} 
            & 72.4 
            & 54.4 
            & 93.0\% \\
        % HoloV \scriptsize{(NeurIPS25)}\citep{zou2025don} & 55.3 & \textcolor{MidnightBlue}{\textbf{63.3}} & 1715 & \textcolor{MidnightBlue}{\textbf{80.3}} & 69.5 & \textcolor{MidnightBlue}{\textbf{72.8}} & 55.4 & \textcolor{MidnightBlue}{\textbf{94.4\%}} \\
        \rowcolor{mygreen2}
        CLASP (ours) 
            & \textcolor{MidnightBlue}{\textbf{57.0}}
            & 59.1
            & 1709
            & \textcolor{MidnightBlue}{\textbf{82.8}}
            & \textcolor{MidnightBlue}{\textbf{69.8}}
            & \textcolor{MidnightBlue}{\textbf{75.2}}
            & 55.2
            & \textcolor{MidnightBlue}{\textbf{94.7\%}} \\

        \midrule
	\end{tabular}
    }
\end{table*}

\textbf{Benchmarks.}
We evaluate multimodal capability on ten established benchmarks spanning general visual understanding, compositional reasoning, OCR-centric reasoning, real-world robustness, object hallucination, and video comprehension. 
For image understanding, we use 
GQA~\citep{hudson2019gqa}, 
MMBench~\citep{liu2024mmbench}, 
MME~\citep{fu2025mme}, 
POPE~\citep{Li-hallucination-2023}, 
VQAv2~\citep{goyal2017making}, 
ScienceQA~\citep{lu2022learn}, 
and TextVQA~\citep{singh2019towards}. 
To assess temporal reasoning and video capability, we further include TGIF~\citep{li2016tgif} for animated GIF description, MSVD~\citep{chen2011collecting} for video captioning, and MSRVTT~\citep{xu2016msr} for open-domain video description.
We follow the official dataset splits and evaluation protocols, reporting the standard metrics for each benchmark to ensure reproducibility and fair comparison.

\textbf{Models and Comparison Methods.}
We apply our method to various MLLM architectures, including the LLaVA
series: LLaVA-1.5~\citep{liu2023improvedllava} for image understanding, LLaVA-NeXT~\citep{liu2024llava}  for high-resolution inputs, and Video-LLaVA~\citep{zhang2024videoinstructiontuningsynthetic} for video understanding, as well as the open-source model Qwen2.5-VL~\citep{Qwen2.5-VL}. 
For efficiency-oriented comparisons, we benchmark against state-of-the-art token reduction methods for MLLMs, including ToMe~\citep{bolya2022token}, LLaVA-PruMerge~\citep{shang2025llava}, FastV~\citep{chen2024image}, HiRED~\citep{arif2025hired}, PDrop~\citep{xing2024pyramiddrop}, 
Multi-Stage Vision Token Dropping~\citep{liu2024multi}, SparseVLM~\citep{zhang2024sparsevlm}, VisionZip~\citep{yang2025visionzip}, and DART~\citep{wen2025stop}.
More details regarding the model architectures and additional experimental results are provided in Appendix~\ref{Detailed_experiment_settings} and Appendix~\ref{sec:appendix-method-2}.
% \textcolor{red}{modified}

\begin{table*}[t]
    \centering
    \footnotesize
    \caption{Performance comparison of various methods on LLaVA-NeXT-7B across different benchmarks. Results are shown for different pruning ratios, with accuracy and average performance highlighted. Best results in \textcolor{MidnightBlue}{\textbf{blue}}.}
    \label{tab2:main}

    \resizebox{0.9\linewidth}{!}{%
    \begin{tabular}{l !{\vrule width 0.6pt} *{7}{>{\centering\arraybackslash}p{0.92cm}} !{\vrule width 0.6pt} >{\centering\arraybackslash}p{1.15cm}}
        \toprule
        \textbf{Methods} & \textbf{GQA} & \textbf{MMB} & \textbf{MME} & \textbf{POPE} & \textbf{SQA} & \textbf{VQA}$_{\text{V2}}$ & \textbf{VQA}$_{\text{Text}}$ & \makecell[c]{\textbf{Average}} \\
        \midrule
        
        \textcolor{gray}{Upper Bound, 2880 Tokens}
        & \textcolor{gray}{64.2} & \textcolor{gray}{67.4} & \textcolor{gray}{1851} & \textcolor{gray}{86.5}
        & \textcolor{gray}{70.1} & \textcolor{gray}{81.8} & \textcolor{gray}{64.9} & \textcolor{gray}{100.0\%} \\
        \midrule

        \rowcolor{mygray}
        LLaVA-NeXT \textcolor{gray}{7B} & \multicolumn{8}{c}{\textit{Retain 320 Tokens} \ $\fg{(\downarrow 88.9\%)}$} \\
        
        FastV \texttt{\scriptsize{(ECCV24)}}\citep{chen2024image} 
        & 55.9 & 61.6 & 1661 & 71.7 & 62.8 & 71.9 & 55.7 & 88.0\% \\
        LLaVA-PruMerge \texttt{\scriptsize{(ICCV25)}}\citep{shang2025llava} 
        & 53.6 & 61.3 & 1534 & 60.8 & 66.4 & 69.7 & 50.6 & 85.6\% \\
        PDrop \texttt{\scriptsize{(CVPR25)}}\citep{xing2024pyramiddrop} 
        & 56.4 & 63.4 & 1663 & 77.6 & 67.5 & 73.5 & 54.4 & 90.9\% \\
        MustDrop \texttt{\scriptsize{(2024.11)}}\citep{liu2024multi} 
        & 57.3 & 62.8 & 1641 & 82.1 & 68.0 & 73.7 & 59.9 & 92.2\% \\
        FasterVLM\texttt{\scriptsize{(ICCV25)}}\citep{zhang2024fastervlm}  
        & 56.9 & 61.6 & 1701 & 83.6 & 66.5 & 74.0 & 56.5 & 91.1\% \\
        HiRED \texttt{\scriptsize{(AAAI25)}}\citep{arif2025hired} 
        & 59.3 & 64.2 & 1690 & 83.3 & 66.7 & 75.7 & 58.8 & 93.3\% \\
        SparseVLM \texttt{\scriptsize{(ICML25)}}\citep{zhang2024sparsevlm} 
        & 56.1 & 60.6 & 1533 & 82.4 & 66.1 & 71.5 & 58.4 & 89.6\% \\
        % GlobalCom$^2$ \texttt{\scriptsize{(2025.3)}} 
        % & 57.1 & 61.8 & 1698 & 83.8 & 67.4 & 76.7 & 57.2 & 92.2\% \\
        DART \texttt{\scriptsize{(EMNLP25)}}\citep{wen2025stop} 
        & 61.7 & \textcolor{MidnightBlue}{\textbf{65.3}} & 1710 & 84.1 & \textcolor{MidnightBlue}{\textbf{68.4}} & \textcolor{MidnightBlue}{\textbf{79.1}} & 58.7 & 93.9\% \\
        
        \rowcolor{mygreen2}
        CLASP (ours)
        & \textcolor{MidnightBlue}{\textbf{62.7}}
        & 61.2
        & \textcolor{MidnightBlue}{\textbf{1723}}
        & \textcolor{MidnightBlue}{\textbf{85.8}}
        & 67.0
        & 78.0
        & \textcolor{MidnightBlue}{\textbf{61.7}}
        & \textcolor{MidnightBlue}{\textbf{95.2\%}} \\
        
        \bottomrule
    \end{tabular}%
    }
\end{table*}

\textbf{Main Results.}
In Table~\ref{tab1:main_table1}, we compare our method with representative token merging and dropping methods on LLaVA-1.5-7B across seven diverse and challenging image-understanding benchmarks.
For a fair comparison, we report the raw task scores and additionally normalize the vanilla 576-token model to a 100\% upper bound, using the resulting normalized average to compare methods.
Across three retention budgets (192/128/64 tokens; $\downarrow$66.7\%/$\downarrow$77.8\%/$\downarrow$88.9\%), our method consistently achieves the best normalized average (98.4\%/97.0\%/94.7\%).
With 192 tokens, we retain 98.4\% overall and obtain the strongest GQA result (60.4) together with competitive performance on the remaining benchmarks, while improving over classical reduction baselines by large margins (\textit{e.g.}, +9.9\% and +10.6\% over ToMe~\citep{bolya2022token} and FastV~\citep{chen2024image}).
At 128 tokens, we maintain 97.0\% overall and reach the best POPE accuracy (85.2), suggesting improved faithfulness under tight budgets.
Even with 64 tokens, our method preserves 94.7\% overall, outperforming the similarity-based method DART~\citep{wen2025stop} by +1.7\%, while further extending the margin over attention-based methods (+22.0\% over PDrop~\citep{xing2024pyramiddrop} and +23.6\% over FastV~\citep{chen2024image}).
Overall, these results show that our method preserves critical visual evidence more effectively, especially in the extreme low-token setting.

\textbf{Main Results on Higher Resolution.}
For further comprehensive evaluation, we report results on LLaVA-NeXT 7B when reducing the visual tokens from 2880 to 320 (a $\,\downarrow 88.9\%$ token reduction) in Table~\ref{tab2:main}.
Following the same evaluation protocol as Table~\ref{tab1:main_table1}, we report raw task scores and a normalized average where the 2880-token model is set to 100\% for cross-benchmark comparison.
Under this aggressive budget, our method achieves the best normalized average of 95.2\%, improving over the similarity-based pruning method DART~\citep{wen2025stop} by 1.3\%, and outperforming recent competitors such as HiRED~\citep{arif2025hired} by an even larger margin (+1.9\%).
In addition to the overall ranking, our method attains the best scores on four out of seven benchmarks, including GQA (62.7), MME (1723), POPE (85.8), and TextVQA (61.7), indicating stronger retention of task-critical visual evidence under severe token reduction.
These results show that our pruning strategy scales well to higher-resolution token settings and remains highly effective, preserving accuracy while substantially shrinking the visual context length.

\textbf{Results on Qwen Architecture.}
Beyond LLaVA, we extend our method to Qwen2.5-VL-7B~\citep{Qwen2.5-VL} to validate its generalizability, evaluating it under three token pruning rates (66.7\%, 77.8\%, and 88.9\%). 
As shown in Table~\ref{tab:Qwen}, our method consistently retains a larger fraction of the upper-bound performance than SparseVLM~\citep{zhang2024sparsevlm} at every budget, and the advantage widens as pruning becomes more aggressive. 
We improve the normalized average from 94.1\% to 96.5\% at 66.7\% pruning (+2.4\%), from 90.8\% to 94.4\% at 77.8\% pruning (+3.6\%), and from 82.9\% to 89.0\% at 88.9\% pruning (+6.1\%). 
These results show that our pruning strategy transfers effectively to Qwen2.5-VL. 
The gains are most pronounced in high-compression settings, where preserving diverse yet instruction-relevant visual evidence is critical, highlighting strong generalizability on advanced MLLM architectures.
\renewcommand{\multirowsetup}{\centering}
\definecolor{mygray}{gray}{.92}
\definecolor{mygreen1}{RGB}{253, 244, 244}%255, 234, 234
\definecolor{mygreen2}{RGB}{238, 243, 243}%223, 240, 240
\definecolor{ForestGreen}{RGB}{34,139,34}
\definecolor{Forestred}{RGB}{220,50,50}

\begin{table}[t]
    \centering
    \footnotesize
    \caption{Performance comparison on Qwen2.5-VL-7B across widely-used benchmarks.
    Best results in \textcolor{MidnightBlue}{\textbf{blue}}.
    }
    \label{tab:Qwen}
    \resizebox{0.95\linewidth}{!}{%
    \begin{tabular}{l | *{6}{c}}
        \toprule
        \textbf{Methods}  & \textbf{MME}  & \textbf{POPE} & \textbf{SQA} & \textbf{VQA}$_{\text{Text}}$ & \textbf{MMB} & \textbf{Avg} \\
        \midrule
        
        \textcolor{gray}{Upper Bound} 
        & \textcolor{gray}{2308} 
        & \textcolor{gray}{86.1} 
        & \textcolor{gray}{78.0} 
        & \textcolor{gray}{77.8} 
        & \textcolor{gray}{82.2}
        & \textcolor{gray}{100\%} \\
        \midrule

        \rowcolor{mygray}
        Qwen2.5-VL-7B & \multicolumn{6}{c}{\textit{TokenPruningRate = 66.7\%}}\\
        SparseVLM & 2165 & 81.7 & 73.8 & 71.5 & 78.4 & 94.1\% \\
        \rowcolor{mygreen2}
        CLASP (ours) & \textcolor{MidnightBlue}{\textbf{2197}} 
            & \textcolor{MidnightBlue}{\textbf{85.1}} 
            & \textcolor{MidnightBlue}{\textbf{75.4}} 
            & \textcolor{MidnightBlue}{\textbf{74.7}} 
            & \textcolor{MidnightBlue}{\textbf{78.6}}
            & \textcolor{MidnightBlue}{\textbf{96.5\%}} \\
        \midrule

        \rowcolor{mygray}
        Qwen2.5-VL-7B & \multicolumn{6}{c}{\textit{TokenPruningRate = 77.8\%}}\\
        SparseVLM & 2086 & 77.7 & 72.8 & 68.7 & 75.6 & 90.8\% \\
        \rowcolor{mygreen2}
        CLASP (ours) & \textcolor{MidnightBlue}{\textbf{2154}} 
            & \textcolor{MidnightBlue}{\textbf{83.0}} 
            & \textcolor{MidnightBlue}{\textbf{74.5}} 
            & \textcolor{MidnightBlue}{\textbf{73.1}} 
            & \textcolor{MidnightBlue}{\textbf{76.2}}
            & \textcolor{MidnightBlue}{\textbf{94.4\%}} \\
        \midrule

        \rowcolor{mygray}
        Qwen2.5-VL-7B & \multicolumn{6}{c}{\textit{TokenPruningRate = 88.9\%}}\\
        SparseVLM & 1813 & 66.7 & 72.0 & 61.2 & 72.1 & 82.9\% \\
        \rowcolor{mygreen2}
        CLASP (ours) & \textcolor{MidnightBlue}{\textbf{2010}} 
            & \textcolor{MidnightBlue}{\textbf{76.6}} 
            & \textcolor{MidnightBlue}{\textbf{72.5}} 
            & \textcolor{MidnightBlue}{\textbf{68.0}} 
            & \textcolor{MidnightBlue}{\textbf{72.7}}
            & \textcolor{MidnightBlue}{\textbf{89.0\%}} \\
        \bottomrule
    \end{tabular}%
    }
    \vspace{-0.25em}
\end{table}

\textbf{Main Results on Video Benchmarks.} To verify the versatility of our framework beyond static images, we extend our evaluation to the video domain using Video-LLaVA. As presented in Table~\ref{tab:video_performance}, we compare CLASP against the full-token Upper Bound and the baseline SparseVLM on three representative benchmarks (TGIF, MSVD, and MSRVTT). Remarkably, CLASP achieves the highest average score of 52.95, surpassing not only SparseVLM (51.22) but also the unpruned Upper Bound (52.23). This performance gain over the full-token baseline is particularly evident on TGIF (45.65 vs. 43.47) and MSRVTT (51.58 vs. 51.28). We attribute this to the high temporal and spatial redundancy inherent in video streams. By aggressively filtering out repetitive or non-informative tokens, our method likely reduces noise and allows the model to focus more effectively on core temporal semantics. 
Overall, these findings confirm that our relevance-diversity pruning mechanism is highly effective for video understanding.

\begin{table}[t]
    \centering
    \footnotesize
    \caption{Performance comparison on Video-LLaVA-7B across video benchmarks (TGIF, MSVD, MSRVTT). Best results in \textcolor{MidnightBlue}{\textbf{blue}}.}
    \label{tab:video_performance}
    \resizebox{0.85\linewidth}{!}{
    % 修改此处：l | c ... 在第一列和第二列之间加竖线
    \begin{tabular}{l | c c c c}
        \toprule
        \textbf{Method} & \textbf{TGIF} & \textbf{MSVD} & \textbf{MSRVTT} & \textbf{Average} \\
        \midrule
        Upper Bound  & 43.47 & 61.93 & 51.28 & 52.23 \\
        \midrule
        SparseVLM & 44.67 & 59.29 & 49.69 & 51.22 \\
        \rowcolor{mygreen2}
        CLASP (ours) & \textcolor{MidnightBlue}{\textbf{45.65}} & \textcolor{MidnightBlue}{\textbf{61.62}} & \textcolor{MidnightBlue}{\textbf{51.58}} & \textcolor{MidnightBlue}{\textbf{52.95}} \\
        \bottomrule
    \end{tabular}
    }
\end{table}

\textbf{Efficiency Analysis.}
Table~\ref{tab:efficiency_analysis} compares the computational cost and performance of our method against the full-token upper bound and the baseline SparseVLM~\citep{zhang2024sparsevlm} on the POPE benchmark.
Our approach consistently achieves a superior trade-off between efficiency and accuracy.
With a token budget of $R=192$, we attain a $1.5\times$ end-to-end speedup while retaining $99.6\%$ of the upper-bound performance, significantly outperforming SparseVLM ($1.2\times$ speedup, $97.3\%$ accuracy) in both throughput and fidelity.
Notably, in the highly aggressive compression setting ($R=58$), our method demonstrates robust resilience, delivering a $2.1\times$ speedup with only a $4.6\%$ relative performance drop, whereas SparseVLM suffers a sharp degradation to $92.3\%$.
These results confirm that the synergy of class-adaptive layer fusion and dual-stage pruning effectively identifies and preserves the most information-dense tokens, enabling substantial latency reduction while maintaining high accuracy. 

\begin{table}[t]
    \centering
    \footnotesize
    \caption{Efficiency analysis on POPE at different retention levels. 
    $\Delta$ denotes the speedup ratio in Time.}
    \label{tab:efficiency_analysis}
    \resizebox{0.95\linewidth}{!}{
    \begin{tabular}{l | c c c c c c}
        \toprule
        \textbf{Method} & \textbf{Time} & \textbf{Prefill} & \textbf{Latency} & \textbf{Mem.} & \textbf{Acc.} & \textbf{$\Delta$} \\
        \midrule
        
        % --- R = 192 Section ---
        \rowcolor{mygray}
        \multicolumn{7}{c}{\textit{Retain 192 Tokens $\fg{(\downarrow 66.7\%)}$ }}  \\
        Upper Bound & 49:41 & 0.5ms & 0.334s & 19.0G & 100.0\% & - \\
        \midrule
        SparseVLM   & 40:51 & 0.6ms & 0.251s & \textcolor{MidnightBlue}{\textbf{15.8G}} & 97.3\% & 1.2$\times$ \\
        \rowcolor{mygreen2}
        CLASP (ours) & \textcolor{MidnightBlue}{\textbf{33:08}} & \textcolor{MidnightBlue}{\textbf{0.2ms}} & \textcolor{MidnightBlue}{\textbf{0.216s}} & 17.6G & \textcolor{MidnightBlue}{\textbf{99.6\%}} & \textcolor{MidnightBlue}{\textbf{1.5$\times$}} \\
        
        \midrule
        
        % --- R = 58 Section ---
        \rowcolor{mygray}
        \multicolumn{7}{c}{\textit{Retain 58 Tokens $\fg{(\downarrow 89.9\%)}$ }} \\
        Upper Bound & 49:41 & 0.5ms & 0.334s & 19.0G & 100.0\% & - \\
        \midrule
        SparseVLM   & 31:28 & 0.6ms & 0.212s & \textcolor{MidnightBlue}{\textbf{14.6G}} & 92.3\% & 1.6$\times$ \\
        \rowcolor{mygreen2}
        CLASP (ours) & \textcolor{MidnightBlue}{\textbf{24:10}} & \textcolor{MidnightBlue}{\textbf{0.1ms}} & \textcolor{MidnightBlue}{\textbf{0.155s}} & 17.6G & \textcolor{MidnightBlue}{\textbf{95.4\%}} & \textcolor{MidnightBlue}{\textbf{2.1$\times$}} \\
        \bottomrule
    \end{tabular}
    }
    \vspace{-0.5em}
\end{table}

% \textbf{Ablation of our method.}
% In this ablation study, we examine the individual contributions of two key components in our method: attention-based scoring and similarity-based scoring. As shown in \textcolor{red}{Table}, the results demonstrate that both components enhance the model’s performance across different pruning rates. For the MME dataset, SparseVLM achieves a performance of 1721 at a pruning rate of 66.7\%, while incorporating attention-based scoring improves the score to 1814, and adding similarity-based scoring further enhances it to 1848, showing a significant performance boost. Similar improvements are observed on the GQA dataset, where attention-based scoring raises the performance from 57.6 to 59.57, and similarity-based scoring results in a further increase to 60.44. On the MMEvet dataset, the combination of both components yields the highest score of 33.3, compared to the initial 33.1 from SparseVLM. For the POPE dataset, attention-based scoring improves the SparseVLM score of 83.6 to 85.02, and similarity-based scoring results in the highest performance of 85.55. These findings highlight that both attention-based scoring and similarity-based scoring play crucial roles in improving performance, with similarity-based scoring providing the most significant gains across various datasets and pruning rates.

\textbf{Token Pruning Visualization.}
Complementing the quantitative results, Fig.~\ref{fig-Visual-visual} visualizes the spatial distribution of retained tokens across samples. Unlike baselines that often fail to retain salient regions, our method consistently preserves critical areas. We further categorize the retained visual regions: blue regions represent tokens selected based on attention scores, indicating high relevance, while red regions represent tokens selected based on similarity metrics, providing contextual coverage. This distinction emphasizes that our method balances different visual aspects: blue regions capture the primary foreground, while red regions ensure background completeness. As pruning rates increase, our method robustly retains these essential cues, effectively filtering redundancy while preserving the core semantics needed for reasoning. Overall, this qualitative evidence confirms that our pruning strategy preserves pivotal information for visual understanding, ensuring robust alignment for downstream tasks.

\begin{figure}[t]
    \centering
    \includegraphics[width=0.98\linewidth]{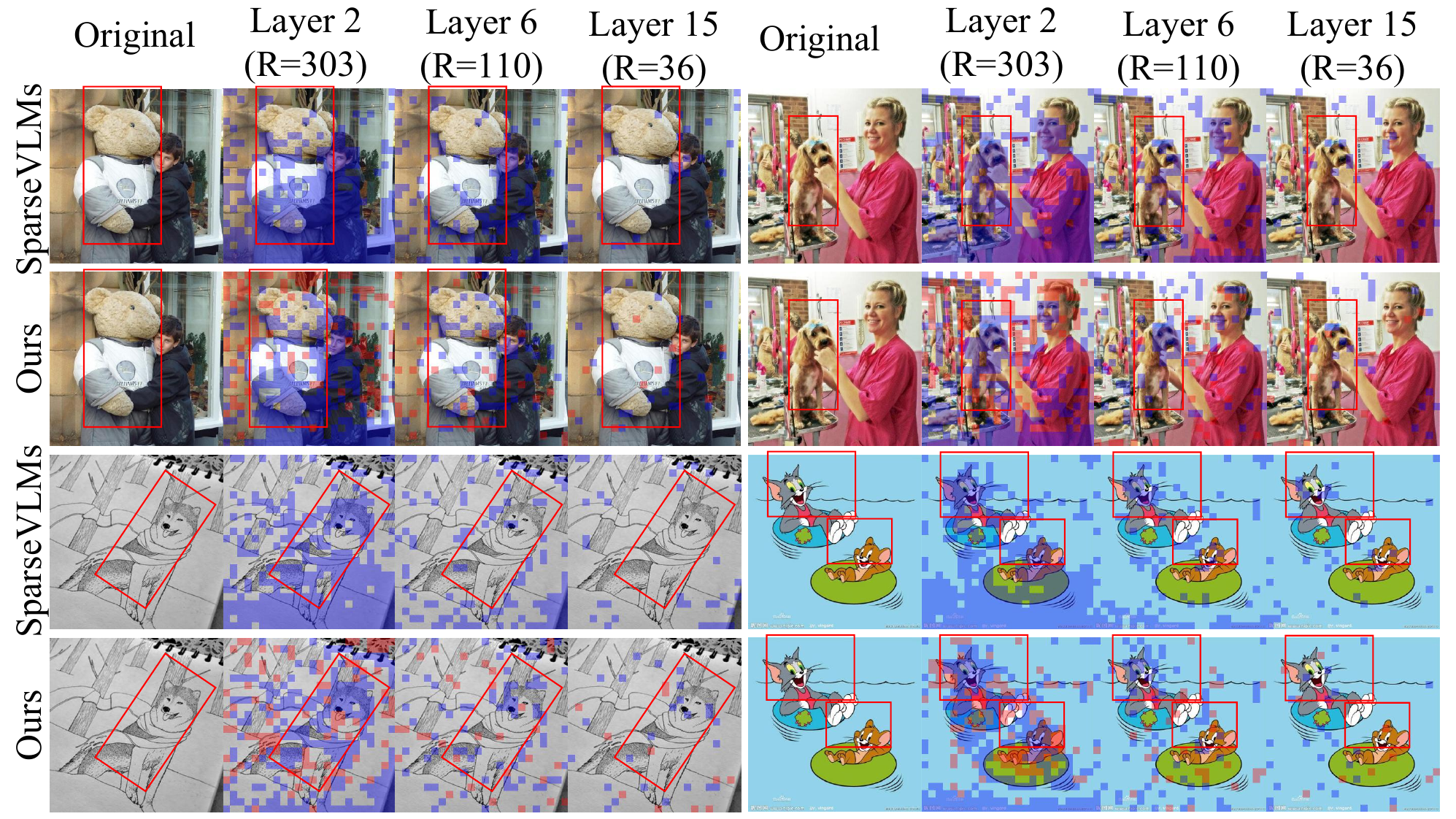}
    \caption{Example visualization of the original image and the corresponding token-retention map.}
    \label{fig-Visual-visual}
\end{figure}

\section{Conclusion} In this paper, we proposed CLASP, a framework synergizing class-adaptive layer fusion with dual-stage pruning to balance token relevance and spatial coverage. Our results demonstrate that dynamic reduction minimizes redundancy without compromising fine-grained perception, marking a fundamental shift from static to conditional visual encoding. Future work should aim to transition from manual heuristic priors to end-to-end learnable routing policies, alongside tighter hardware-aware co-design to fully realize efficient multimodal intelligence on edge and embodied systems.

\newpage

\clearpage

\section*{Impact Statement}
This work reduces compute, memory, and latency of MLLM inference by pruning redundant visual tokens without retraining, which can lower cost and energy per query and enable deployment on resource-constrained hardware. Improved efficiency may broaden access to multimodal systems in assistive and on-device settings. However, lowering the marginal cost of multimodal inference can also facilitate harmful uses, including privacy-invasive analytics, large-scale surveillance, and high-volume content generation. CLASP does not add new capabilities beyond faster inference, but responsible deployment remains important: practitioners should follow data minimization and consent practices, apply secure storage and access control for visual inputs, monitor misuse, and validate robustness under aggressive pruning before use in sensitive domains.

% =========================================================
% 参考文献
% =========================================================
\bibliographystyle{icml2026} % 【必须添加】指定引用格式
\bibliography{custom}

% =========================================================
% 附录
% =========================================================
\newpage
\appendix
\onecolumn % ICML 附录通常建议单栏，方便放宽图表

\appendix

\section{Detailed Experiment Settings.}
\label{Detailed_experiment_settings}

\paragraph{Benchmarks and Metrics.}
We conduct experiments on a comprehensive suite of benchmarks to evaluate our model's multimodal capabilities across both static and dynamic scenarios.
For \textbf{image understanding}, we assess general perception, compositional reasoning, OCR, and hallucination using eight widely used datasets in our main evaluation:
GQA~\citep{hudson2019gqa}, MMBench (MMB)~\citep{liu2024mmbench}, MME~\citep{fu2025mme}, POPE~\citep{Li-hallucination-2023}, ScienceQA (SQA)~\citep{lu2022learn}, VQAv2~\citep{goyal2017making}, TextVQA~\citep{singh2019towards}, and MMVet~\citep{yu2023mm}. Additionally, to further demonstrate the robustness of our approach, we provide supplementary evaluations on LLaVA-Bench (LLaVA-B)~\citep{liu2023visual}, VizWiz~\citep{gurari2018vizwiz}, MMBench-Chinese (MMB-CN)~\citep{liu2024mmbench}, and SEED-Bench (SEED)~\citep{li2024seed} in the appendix.
Furthermore, to verify that our pruning method maintains \textbf{temporal reasoning and video comprehension} capabilities, we evaluate on three standard video benchmarks:
TGIF~\citep{li2016tgif}, MSVD~\citep{chen2011collecting}, and MSRVTT~\citep{xu2016msr}.
Unless otherwise specified, we report accuracy following the official evaluation protocols to ensure a fair comparison.

\paragraph{GQA.} 
GQA~\citep{hudson2019gqa} evaluates scene understanding and compositional reasoning. It is built upon images, structured scene graphs, and questions designed to probe fine-grained object attributes, spatial relations, and multi-step reasoning. Generated from the Visual Genome dataset, GQA contains over 22 million questions across approximately 113K images. A key characteristic of GQA is its rigorous balancing of the answer distribution for each question group, which significantly mitigates language priors and forces models to rely on visual evidence rather than statistical correlations. In the context of model compression, GQA provides a diagnostic view of a model's ability to jointly perceive and reason over complex visual scenes, serving as a stress test for preserving multi-hop reasoning capabilities after pruning.

\paragraph{MMBench.} 
MMBench~\citep{liu2024mmbench} provides a multi-dimensional evaluation framework organized in a three-level hierarchy of capabilities. Level-1 targets the two core abilities of perception and reasoning; Level-2 expands these into six sub-abilities; and Level-3 further refines the assessment into 20 fine-grained ability dimensions. Constructed from various sources, MMBench contains approximately 3,000 multiple-choice questions covering diverse domains. A distinguishing feature of MMBench is its CircularEval strategy, which inputs the same question with shifted options multiple times to the model. This mechanism effectively mitigates the model's sensitivity to option ordering and random guessing. For VLM pruning, MMBench serves as a critical benchmark to verify that the compressed model maintains robust instruction-following and reasoning capabilities across a broad spectrum of tasks, rather than overfitting to specific patterns.

\paragraph{MME.} 
MME~\citep{fu2025mme} is a comprehensive and quantitative benchmark designed to evaluate multimodal LLMs across 14 distinct subtasks. These subtasks are structurally categorized into two primary axes: Perception (\textit{e.g.}, coarse-grained recognition, OCR, color) and Cognition (\textit{e.g.}, commonsense reasoning, numerical calculation). MME employs concise instruction--answer pairs, predominantly in a "Yes/No" format, to minimize the influence of prompt engineering and reduce potential data leakage. With manually annotated samples, it facilitates a reliable measurement of multimodal performance. For pruning studies, MME is particularly useful for diagnosing whether parameter reduction disproportionately impacts basic visual perception or higher-order cognitive reasoning.

\paragraph{POPE.} 
POPE~\citep{Li-hallucination-2023} is a specialized benchmark focusing on evaluating object hallucination in VLMs. It reformulates hallucination assessment as a series of binary "Yes/No" questions about the presence of specific objects in an image (based on MSCOCO validation sets). Crucially, POPE evaluates models under three distinct sampling settings to disentangle visual perception from language priors: Random (objects not in the image), Popular (frequent objects in the dataset), and Adversarial (objects that often co-occur with present objects but are absent). By reporting Accuracy, Recall, Precision, and F1 across these settings, POPE provides a robust quantification of hallucination tendencies. For pruned models, maintaining high performance on the Adversarial setting is a strong indicator that the compression has not compromised the model's ability to ground answers in actual visual evidence.

\paragraph{ScienceQA.}
ScienceQA~\citep{lu2022learn} spans a wide array of domains, including natural, social, and language sciences. Questions are hierarchically categorized into 26 topics, 127 categories, and 379 skills, providing a diverse and comprehensive testbed for evaluating multimodal understanding and multi-step reasoning. Distinctively, ScienceQA includes annotated lectures and explanations, facilitating the assessment of Chain-of-Thought (CoT) capabilities. In the context of VLM pruning, ScienceQA is essential for evaluating whether the compressed model retains domain-specific knowledge and the ability to perform interpretable reasoning, ensuring that parameter reduction does not compromise the model's capacity to handle complex, knowledge-intensive tasks.

\paragraph{VQAv2.} 
VQAv2~\citep{goyal2017making} is a standard open-ended VQA benchmark covering diverse real-world images and questions derived from the MS COCO dataset. A critical feature of VQAv2 is its design to minimize language bias through balanced pairs: for every question, the dataset includes complementary images that result in different answers (\textit{e.g.}, "Yes" vs. "No"). This structure compels the model to rely on visual evidence rather than exploiting statistical language correlations. With over 1.1 million questions and multiple human annotations per question, VQAv2 serves as a large-scale testbed for general visual question answering. For pruning, it provides a fundamental baseline to ensure that the compressed model retains generalized visual recognition capabilities and aligns visual features correctly with textual queries.

\paragraph{TextVQA.} TextVQA~\citep{singh2019towards} evaluates reasoning over embedded text in images, addressing a specific capability often overlooked by general VQA benchmarks. Built upon images from the Open Images dataset, it comprises 45,336 questions that require the model to detect, recognize, and reason about text appearing in diverse scenes (\textit{e.g.}, signboards, book covers). Successful performance demands a tight integration of optical character recognition (OCR) and semantic reasoning, as the model must often transcribe specific text strings from the image to formulate the answer. For VLM pruning, TextVQA serves as a rigorous test of fine-grained feature preservation, verifying that the compressed model retains sufficient resolution and local attention to process small, symbol-rich visual elements without degradation.

\paragraph{MM-Vet.} MM-Vet~\citep{yu2023mm} targets complex multimodal problem solving by defining six core vision--language capabilities: Recognition, Knowledge, OCR, Spatial Awareness, Language Generation, and Math. Beyond testing these in isolation, MM-Vet evaluates 16 distinct integrations of these capabilities (\textit{e.g.}, recognizing an object and then answering a knowledge-based question about it). The benchmark employs an LLM-based evaluation system (typically GPT-4) to score open-ended responses, offering a nuance that rigid string-matching metrics lack. For pruning, MM-Vet is instrumental in verifying that the compressed model retains the synergy required to chain multiple reasoning steps, ensuring that the removal of parameters does not sever the functional connections between different cognitive modules.

\paragraph{LLaVA-Bench.}
LLaVA-Bench~\citep{liu2023visual} is designed to evaluate the capability of MLLMs in handling complex, open-ended visual instructions in real-world scenarios. Comprising a diverse set of ``in-the-wild'' images and detailed queries, it relies on GPT-4 as an impartial judge to score the model's generated responses against reference answers. Unlike standard objective benchmarks, LLaVA-Bench captures the nuances of conversational fluency, helpfulness, and visual grounding. In the context of token pruning, it is crucial for verifying that aggressive visual compression does not degrade the model's capacity for rich, free-form multimodal generation and its adherence to unpredictable human instructions.

\paragraph{VizWiz.}
VizWiz~\citep{gurari2018vizwiz} originates from authentic visual questions asked by blind and visually impaired individuals. Consequently, the dataset is characterized by images with severe real-world noise, such as poor lighting, blurriness, and occlusions, coupled with occasionally unanswerable queries. This benchmark rigorously tests a model's robustness and zero-shot generalization under suboptimal visual conditions. For VLM pruning, VizWiz serves as a unique diagnostic tool: it assesses whether the pruning algorithm remains robust when visual cues are inherently scarce or distorted, ensuring that the token reduction process does not disproportionately discard critical, albeit noisy, visual information.

\paragraph{MMBench-Chinese.}
MMBench-Chinese (MMB-CN)~\citep{liu2024mmbench} is the bilingual extension of the MMBench evaluation framework, translating the rigorous, multi-dimensional assessment into the Chinese linguistic context. Retaining the CircularEval strategy to mitigate option-selection bias, it evaluates the same broad spectrum of perception and reasoning capabilities but fundamentally requires robust cross-lingual alignment. Within our pruning study, MMB-CN is utilized to confirm that the class-adaptive layer fusion and token reduction mechanisms generalize effectively across diverse linguistic spaces, ensuring that cross-modal semantic alignment is not structurally impaired when evaluated in non-English languages.

\paragraph{SEED-Bench.}
SEED-Bench~\citep{li2024seed} is a comprehensive multimodal benchmark specifically designed to evaluate fine-grained visual understanding and spatial reasoning. It comprises thousands of multiple-choice questions with meticulously crafted distractors, targeting specific dimensions such as object attributes, instance locations, and spatial relations. The precise nature of these queries forces models to rely on exact visual evidence rather than holistic semantic guessing. In the evaluation of pruned MLLMs, SEED-Bench acts as a stringent stress test for fine-grained token retention; it reveals whether our dual-stage pruning strategy successfully preserves the crucial, highly localized visual tokens required to distinguish between subtle, misleading options.

\paragraph{TGIF.}
TGIF~\citep{li2016tgif} is a dataset focusing on animated GIFs, which captures motion semantics and temporal dynamics. It contains over 100K animated images sourced from social media, paired with natural language descriptions. Unlike static image datasets, TGIF challenges models to reason about actions, repetitions, and state transitions over time. In the context of VLM pruning, TGIF serves as a critical benchmark to assess whether the compressed model preserves \textit{temporal consistency}. Since pruning often involves reducing redundancy in visual tokens, performance on TGIF indicates whether the method effectively distinguishes between essential motion cues and redundant background frames.

\paragraph{MSVD.}
MSVD~\citep{chen2011collecting} constitutes a fundamental benchmark for video question answering, consisting of approximately 1,970 short video clips collected from YouTube. Unlike the captioning task, MSVD requires the model to provide precise answers to questions regarding video content based on spatio-temporal reasoning. For pruning studies, MSVD acts as a baseline sensitivity test. High performance here ensures that the reduction in model parameters has not destroyed the basic alignment between spatio-temporal visual features and the language decoder, serving as a sanity check for general video-to-text alignment. We report the accuracy following the official evaluation protocol to ensure consistency with other video benchmarks.

\paragraph{MSRVTT.}
MSRVTT~\citep{xu2016msr} is a dataset designed for open-domain video question answering, containing 10,000 video clips categorized into 20 distinct classes (\textit{e.g.}, music, sports, news). This diversity requires models to handle a wide vocabulary and complex visual scenes to answer questions regarding fine-grained events. In the context of model compression, MSRVTT is particularly useful for diagnosing "catastrophic forgetting" in the video domain. Maintaining high accuracy on MSRVTT implies that the pruned model retains robust feature representations across diverse categories and is not overfitting to specific visual patterns. Consistent with the official settings, we report accuracy as the primary metric.

\paragraph{Models.} 
We instantiate and evaluate CLASP on top of several representative \emph{open-source} multimodal large language models (MLLMs). For \textbf{image understanding}, we primarily build on the LLaVA family, including the standard LLaVA-1.5~\citep{liu2023improvedllava} and the stronger LLaVA-NeXT~\citep{liu2024llava}. We specifically adopt LLaVA-NeXT to validate performance under \textbf{high-resolution} visual inputs, following the official inference settings. To demonstrate the generalization of our method across different architectures, we further include Qwen2.5-VL~\citep{Qwen2.5-VL}. As a state-of-the-art model, it introduces Naive Dynamic Resolution mechanisms to handle arbitrary aspect ratios, serving as a rigorous testbed for our pruning strategy on variable-length visual tokens. Finally, to assess capabilities in the \textbf{temporal domain}, we extend our evaluation to Video-LLaVA~\citep{lin2023video}. This model unifies image and video feature alignment, allowing us to verify whether \textsc{CLASP} can effectively reduce temporal redundancies without compromising motion understanding.

\paragraph{ToMe.} 
ToMe~\citep{bolya2022token} accelerates vision transformers by merging similar tokens within transformer layers using lightweight token matching. Unlike traditional pruning methods that discard "unimportant" tokens, ToMe employs a bipartite matching algorithm based on feature similarity (typically using the Key or Query projections) to identify redundant tokens. It then aggregates these tokens via weighted averaging, progressively reducing the sequence length across the network depth. Since it requires no additional parameters or retraining, ToMe serves as a highly efficient, plug-and-play baseline for evaluating the trade-off between inference speed and information preservation in the VLM's visual encoder.

\paragraph{LLaVA-PruMerge.} 
LLaVA-PruMerge~\citep{shang2025llava} introduces an adaptive hybrid strategy that combines the benefits of both pruning and merging. It first evaluates the importance of visual tokens based on the attention scores from the special [CLS] token to image patches, effectively identifying regions most relevant to the global semantic context. Based on these scores, the method divides tokens into subsets: tokens with low importance scores are directly pruned to remove background noise, while highly correlated foreground tokens are merged based on key similarity. This approach significantly reduces the sequence length of the visual encoder without retraining, ensuring that critical visual details are preserved while spatial redundancy is efficiently eliminated before the tokens enter the LLM.

\paragraph{FastV.} 
FastV~\citep{chen2024image} identifies the inefficiency of visual tokens within Large Language Models (LLMs) and performs early-stage token pruning. It is grounded in the observation that while visual tokens are crucial in the initial transformer layers, the attention mechanism in deeper layers tends to ignore the vast majority of them. Leveraging this, FastV ranks visual tokens based on their average attention weights in the early layers (\textit{e.g.}, the second layer) and discards the least significant ones for all subsequent layers. This strategy significantly reduces the KV-cache memory footprint and FLOPs during inference, allowing the model to maintain high performance while processing significantly fewer tokens in the computation-heavy deep layers.

\paragraph{HiRED.} 
HiRED~\citep{arif2025hired} introduces a spatially-aware pruning strategy designed to mitigate the "tunnel vision" problem often observed in global ranking methods. Instead of competing all visual tokens in a single pool, HiRED divides the image into spatial partitions. It then dynamically allocates a specific token budget to each partition based on the attention distribution of the [CLS] token. By selecting the most informative tokens within these localized budgets, HiRED ensures that the pruned model maintains broad spatial coverage across the image. This hierarchical approach allows for aggressive compression while preventing the complete suppression of background regions or secondary objects, which are essential for tasks requiring holistic scene understanding.

\paragraph{PDrop.} 
PDrop~\citep{xing2024pyramiddrop}, or PyramidDrop, implements a progressive pruning strategy that mimics the hierarchical structure of CNNs within a standard Vision Transformer. Instead of maintaining a constant sequence length throughout the encoder, PDrop drastically reduces the token count at specific intermediate layers based on attention importance scores. This constructs a "pyramid-like" information flow, where high-resolution details are processed in shallow layers, while only the most semantically salient tokens are retained for the computation-heavy deep layers. By effectively funneling visual information, PDrop achieves a significant reduction in FLOPs and latency, serving as a representative baseline for structural pruning methods.

\paragraph{Multi-Stage Vision Token Dropping (MustDrop).} 
Multi-Stage Vision Token Dropping~\citep{liu2024multi} adopts a progressive compression strategy within the vision encoder to optimize computational efficiency. Instead of performing a single-step reduction, it executes token dropping at multiple intermediate layers (stages) based on attention significance or learned policies. This hierarchical approach allows the model to retain fine-grained low-level details in the shallow stages while aggressively reducing spatial redundancy in the deeper, more semantic layers. Consequently, it significantly lowers the FLOPs of the vision backbone and minimizes the sequence length passed to the multimodal projector, achieving a favorable balance between inference speed and task performance.

\paragraph{SparseVLM.} 
SparseVLM~\citep{zhang2024sparsevlm} presents a dynamic pruning framework that goes beyond uni-modal visual salience. It ranks token importance using cross-modal attention, ensuring that visual tokens are preserved based on their relevance to the specific textual query or instruction. Recognizing that different images contain varying amounts of information, SparseVLM introduces adaptive sparsity ratios, dynamically adjusting the token budget for each input. Furthermore, to mitigate the risks of aggressive pruning, it proposes a novel token recycling mechanism. Instead of permanently discarding pruned tokens, this mechanism aggregates or buffers them, allowing the model to retrieve context from these "discarded" regions if necessary. This significantly improves efficiency under varying input complexity without compromising the semantic integrity of the visual representation.

\paragraph{VisionZip.}
VisionZip~\citep{yang2025visionzip} addresses the spatial redundancy problem often found in attention-based pruning methods. While standard approaches simply select the top-$k$ tokens with the highest attention scores, VisionZip observes that these high-scoring tokens tend to cluster around the same visual object, leading to repetitive information.
To mitigate this, it implements a two-step pipeline: first, it evaluates token saliency via encoder attention to filter out background noise; second, it clusters the remaining high-saliency tokens based on key similarity.
By selecting representative tokens from these clusters, VisionZip ensures that the final compressed sequence maintains high semantic diversity, covering various distinct objects and regions within the image for downstream multimodal reasoning.

\paragraph{DART.} 
DART~\citep{wen2025stop} proposes a Duplication-Aware Reduction Transformer strategy designed to maximize information diversity while maintaining hardware efficiency. Unlike magnitude-based methods that may select redundant high-norm tokens, DART explicitly filters out redundancy. It operates by selecting a small set of "pivot" tokens and computing the cosine similarity between these pivots and the remaining tokens. Tokens with high similarity (high duplication) are discarded, ensuring that the retained set covers a broad semantic range. Crucially, DART is designed to be compatible with FlashAttention and other efficient attention operators. By avoiding complex gather-scatter operations or irregular memory access patterns, it ensures that the theoretical reduction in FLOPs translates directly into significant wall-clock speedups during inference.

\paragraph{Implementation details.}
For image-based benchmarks, we run experiments using the official LLaVA implementation. For video benchmarks, we build the model following the official LLaVA-NeXT codebase and conduct evaluation via \texttt{lmms-eval}. For more recent VLM architectures (\textit{e.g.}, Qwen2.5-VL), we rely on \texttt{VLMEvalKit} as the evaluation toolkit. All inference evaluations were executed on NVIDIA A100 (80GB) GPUs. To ensure reproducibility, we detail the specific configurations for dataset construction, parameter validation, and model-specific architectures below.

\textbf{Dataset Categorization and Intent Mapping.} 
To enable class-adaptive routing, we constructed a categorized dataset using the Qwen3-8B model to identify visual question answering intents. We designed a system prompt defining 15 fine-grained intents (\textit{e.g.}, \textit{arithmetic reasoning}, \textit{temporal order}) and instructed the model to output classifications in a strict JSON format. To enhance robustness, we applied deterministic heuristic overrides (\textit{e.g.}, mapping keywords like ``who wrote'' to \textit{Text/Symbol Recognition}). These labels were merged into the 9 core categories (Table~\ref{tab:hyperparams}) to ensure sufficient sample density for routing.

\textbf{Hyperparameter Search and Optimization.} 
To determine the optimal fusion weights and pruning ratios, we employed the \textit{Discrete Subspace Search Algorithm} detailed in Appendix~\ref{sec:appendix-method}. Specifically, we performed this calibration on a balanced dataset of 2,000 samples per category. Although fully unconstrained gradient-based optimization can risk overfitting on limited calibration data, we found that our approach---which restricts the search space to discrete, semantically meaningful layer prototypes---effectively prevents such overfitting. Crucially, our experiments indicated that this \textit{data-driven calibration} yielded superior zero-shot generalization compared to task-specific manual heuristics. By automatically adapting to the level of visual abstraction required for each intent (e.g., balancing fine-grained details and high-level semantics), the search algorithm identifies more robust configurations. Consequently, all results reported in this paper utilize the calibrated configurations obtained via this search process, as provided in Table~\ref{tab:hyperparams}.

\textbf{Model-Specific Configurations.} 
We devised distinct fusion strategies tailored to the architecture of the vision encoders. The LLaVA family (LLaVA-v1.5, LLaVA-NeXT, and Video-LLaVA) shares a unified set of fusion parameters, while Qwen2.5-VL employs a separate configuration adapted to its layer structure. The split ratios ($a_c$) for the two-stage pruning were kept consistent across all models.

\paragraph{Pruning Schedule and Budgets.}
We adopt the progressive sparsification strategy from SparseVLM, executing token pruning at three designated intermediate layers of the multimodal decoder on the \textbf{projected (decoder-space) visual tokens}: \textbf{Layer 2, Layer 6, and Layer 15}. 
Following SparseVLM~\citep{zhang2024sparsevlm}, we report an \textbf{effective token budget} $R$ under progressive pruning; the stage-wise budgets $[K_{L2}, K_{L6}, K_{L15}]$ are chosen to match the same cumulative computational cost (FLOPs) as maintaining a constant sequence length of $R$ throughout the decoder. 
Taking the standard input resolution ($N=576$ projected visual tokens) as a baseline, the stepwise retention counts are configured as follows:
(1) For the \textbf{192-token} setting ($R=192$), we retain $[300, 200, 110]$ tokens respectively;
(2) For the \textbf{128-token} setting ($R=128$), the schedule is set to $[303, 110, 36]$;
(3) For the \textbf{64-token} setting ($R=64$), we employ an aggressive schedule of $[66, 30, 17]$.
For high-resolution architectures (\textit{e.g.}, LLaVA-NeXT) processing $N=2880$ tokens, these budgets are scaled proportionally.

\paragraph{Relation to SparseVLM.}
For fair efficiency comparisons, we follow SparseVLM's~\citep{zhang2024sparsevlm} progressive three-stage pruning schedule in the multimodal decoder (Layer 2/6/15), while differing in how the prompt conditions the visual representation and how the token budget is allocated. SparseVLM ranks visual tokens using query-conditioned cross-modal attention and further mitigates aggressive sparsification via token recycling and adaptive sparsity ratios~\citep{zhang2024sparsevlm}. In contrast, we adopt a lightweight text-only router and use its predicted category to drive two class-adaptive components: (i) \emph{class-adaptive multi-layer feature fusion}, where token representations are formed by a temperature-controlled mixture over vision layers to match the level of visual abstraction required by the prompt; and (ii) \emph{attention--similarity adaptive-ratio pruning}, where a fixed budget $R$ is split into $K_1=\lfloor a_c R\rfloor$ attention-salient pivots (relevance) and $K_2=R-K_1$ similarity-based completion tokens selected by minimizing redundancy under cosine similarity (diversity). This formulation explicitly combines relevance-driven attention selection (\textit{e.g.}, FastV~\citep{chen2024image}, HiRED~\citep{arif2025hired}, PyramidDrop~\citep{xing2024pyramiddrop}) with redundancy-aware retention strategies (\textit{e.g.}, ToMe~\citep{bolya2022token}, VisionZip~\citep{yang2025visionzip}, DART~\citep{wen2025stop}, DivPrune~\citep{alvar2025divprune}, conditional-diversity pruning~\citep{zhang2025beyond}), while keeping the backbone frozen and introducing negligible inference overhead.

% preamble:
% \usepackage{booktabs}
% \usepackage{tabularx}   % 可选，但推荐用于 table* 自动适配宽度

\begin{table*}[t]
  \centering
  \small
  \caption{Hyperparameters for class-adaptive layer fusion and pruning across MLLM backbones.
  \textbf{Fusion weights} are reported as \{layer: coefficient\}.
  The \textbf{split ratio} $a_c$ governs the prompt-conditioned budget allocation between attention-salient pivots for relevance and redundancy-aware completion tokens for coverage.}
  \label{tab:hyperparams}

  \resizebox{0.9\textwidth}{!}{%
  \begin{tabular}{c l l l c}
    \toprule
    \textbf{ID} & \textbf{Category} &
    \textbf{Fusion weights (LLaVA family)} &
    \textbf{Fusion weights (Qwen2.5-VL)} &
    \textbf{Split ratio $a_c$} \\
    \midrule
    0 & \mbox{Object identification}
      & $\{L_5\!:\!0.2,\; L_{15}\!:\!0.3,\; L_{22}\!:\!0.5\}$
      & $\{L_9\!:\!0.2,\; L_{22}\!:\!0.3,\; L_{31}\!:\!0.5\}$
      & 0.8 \\
    1 & \mbox{Attribute / breed identification}
      & $\{L_5\!:\!0.2,\; L_{22}\!:\!0.8\}$
      & $\{L_9\!:\!0.2,\; L_{31}\!:\!0.8\}$
      & 0.4 \\
    2 & \mbox{Text / symbol recognition}
      & $\{L_5\!:\!0.2,\; L_{22}\!:\!0.8\}$
      & $\{L_9\!:\!0.2,\; L_{31}\!:\!0.8\}$
      & 0.7 \\
    3 & \mbox{Scene understanding}
      & $\{L_{20}\!:\!0.2,\; L_{22}\!:\!0.8\}$
      & $\{L_{28}\!:\!0.2,\; L_{31}\!:\!0.8\}$
      & 0.7 \\
    4 & \mbox{Spatial relations}
      & $\{L_{14}\!:\!0.2,\; L_{17}\!:\!0.3,\; L_{22}\!:\!0.5\}$
      & $\{L_{21}\!:\!0.2,\; L_{24}\!:\!0.3,\; L_{31}\!:\!0.5\}$
      & 0.7 \\
    5 & \mbox{Counting}
      & $\{L_5\!:\!0.2,\; L_{15}\!:\!0.3,\; L_{22}\!:\!0.5\}$
      & $\{L_9\!:\!0.2,\; L_{22}\!:\!0.3,\; L_{31}\!:\!0.5\}$
      & 0.6 \\
    6 & \mbox{Action / interaction}
      & $\{L_{12}\!:\!0.2,\; L_{15}\!:\!0.3,\; L_{19}\!:\!0.5\}$
      & $\{L_{18}\!:\!0.2,\; L_{22}\!:\!0.3,\; L_{28}\!:\!0.5\}$
      & 0.8 \\
    7 & \mbox{Intention / function}
      & $\{L_3\!:\!0.2,\; L_{12}\!:\!0.3,\; L_{18}\!:\!0.5\}$
      & $\{L_6\!:\!0.2,\; L_{18}\!:\!0.3,\; L_{26}\!:\!0.5\}$
      & 0.2 \\
    8 & \mbox{Default}
      & $\{L_{20}\!:\!0.2,\; L_{22}\!:\!0.8\}$
      & $\{L_{29}\!:\!0.2,\; L_{31}\!:\!0.8\}$
      & 0.9 \\
    \bottomrule
  \end{tabular}%
  }
\end{table*}

\begin{table*}[t]
  \centering
  \small
  \caption{Detailed statistical distribution of sample counts per task category across all evaluated multimodal benchmarks.}
  \label{tab:dataset_by_class}
  \resizebox{0.9\textwidth}{!}{%
  \begin{tabular}{c l r r r r r r r r}
    \toprule
    \textbf{ID} & \textbf{Category} & \textbf{GQA} & \textbf{MMB} & \textbf{MME} & \textbf{MMVet} & \textbf{POPE} & \textbf{SQA} & \textbf{VQA}$_{\text{v2}}$ & \textbf{VQA}$_{\text{Text}}$ \\
    \midrule
    0 & Object identification            & 2095 & 272  & 355  & 21 & 2277 & 178  & 18263 & 239  \\
    1 & Attribute / breed identification & 4015 & 268  & 85   & 19 & 0    & 410  & 19380 & 1037 \\
    2 & Text / symbol recognition        & 75   & 111  & 147  & 12 & 0    & 465  & 4368  & 2856 \\
    3 & Scene understanding              & 2319 & 1950 & 1209 & 58 & 6633 & 521  & 35553 & 388  \\
    4 & Spatial relations                & 3632 & 379  & 70   & 18 & 0    & 193  & 7378  & 39   \\
    5 & Counting                         & 9    & 100  & 87   & 5  & 0    & 54   & 10007 & 53   \\
    6 & Action / interaction             & 200  & 74   & 2    & 4  & 0    & 1    & 5850  & 8    \\
    7 & Intention / function             & 43   & 287  & 79   & 9  & 0    & 8    & 3440  & 121  \\
    8 & Default                          & 190  & 936  & 340  & 72 & 0    & 2411 & 3155  & 259  \\
    \bottomrule
  \end{tabular}%
  }
\end{table*}

\section{Additional Method Details}
\label{sec:appendix-method}

\subsection{Calibration of $\mathbf{W}$ and $\mathbf{a}$}
\label{sec:appendix-calibration}

The adaptive parameters in our method are low dimensional ($C\times L$ layer scores and $C$ split ratios) and can be set
without finetuning any parameters of the underlying MLLM (vision encoder, projector, or LLM).
We outline a practical calibration protocol to obtain stable per-category configurations.

\textbf{Data and objective.}
Assume access to a small calibration set
$\mathcal{D}_{\text{cal}}=\{(x_n, \mathrm{img}_n, y_n^\star, c_n)\}_{n=1}^{N_{\text{cal}}}$
that contains (i) prompts $x_n$, (ii) images $\mathrm{img}_n$, (iii) reference outputs $y_n^\star$
(\textit{e.g.}, gold answers for QA / multiple-choice labels), and (iv) category labels $c_n\in\mathcal{N}_c$.
The category labels can be obtained from benchmark taxonomies (\textit{e.g.}, question-type tags) or a lightweight manual/automatic
annotation effort.

For a fixed effective token budget $R$, we select $(\mathbf{W},\mathbf{a})$ to maximize the task score under pruning:
\begin{equation}
\max_{\mathbf{W},\mathbf{a}} \ \ \mathrm{Score}\!\left(\mathcal{D}_{\text{cal}}; R, \mathbf{W}, \mathbf{a}\right)
\quad
\text{s.t.}\ \ a_c\in[0,1],\ \forall c.
\label{eq:cal-obj}
\end{equation}
Here $\mathrm{Score}(\cdot)$ uses the same decoding and evaluation protocol as in the main experiments, but restricted to
$\mathcal{D}_{\text{cal}}$. Concretely, we write
\begin{equation}
\mathrm{Score}\!\left(\mathcal{D}_{\text{cal}}; R, \mathbf{W}, \mathbf{a}\right)
=
\frac{1}{|\mathcal{D}_{\text{cal}}|}
\sum_{(x_n,\mathrm{img}_n,y_n^\star,c_n)\in \mathcal{D}_{\text{cal}}}
\mathrm{Eval}\!\left(\hat{y}_n(R,\mathbf{W},\mathbf{a}),\ y_n^\star\right),
\label{eq:score-def}
\end{equation}
where $\hat{y}_n(R,\mathbf{W},\mathbf{a})$ is the model prediction produced when applying our class-adaptive layer fusion
and dual-stage pruning under budget $R$, and $\mathrm{Eval}(\cdot)$ is the benchmark metric (\textit{e.g.}, exact-match accuracy for QA,
multiple-choice accuracy for MMBench-style evaluation, or any dataset-specific scorer).

\textbf{Why calibration does not require MLLM finetuning.}
The calibration variables $(\mathbf{W},\mathbf{a})$ are extremely low-dimensional and represent high-level architectural priors rather than dense feature mappings. Because the MLLM weights are frozen, the scoring function $\mathrm{Score}$ can be treated as a lightweight objective that does not require computing gradients through the large-scale backbone. While $(\mathbf{W},\mathbf{a})$ could theoretically be optimized via \emph{forward-only} black-box search or lightweight gradient updates, our empirical analysis reveals that the system is remarkably robust to these parameters. Specifically, we find that our discrete subspace search—which selects from structured prototypes tailored to the required level of visual abstraction (\textit{e.g.}, semantic-rich deep layers for identification vs. detail-oriented shallow layers for OCR)—consistently outperforms unconstrained continuous calibration in terms of zero-shot generalization. This inherent robustness allows \textsc{CLASP} to function as a true plug-and-play solution, bypassing the need for expensive calibration cycles or backward activations entirely.

\textbf{Per-class decomposition.}
Since $c_n$ selects the row $\mathbf{w}_{c_n}$ and ratio $a_{c_n}$, the calibration naturally decomposes over categories:
each class $c$ can be calibrated on the subset $\mathcal{D}_{\text{cal}}^{(c)}=\{(x_n,\mathrm{img}_n,y_n^\star,c_n)\in\mathcal{D}_{\text{cal}}:\ c_n=c\}.
$ independently,
which improves stability and reduces search complexity. Algorithm~\ref{alg:calibration} summarizes a simple per-class procedure.

\textbf{Efficient evaluation in practice.}
Calibration can be further accelerated by:
(i) caching the frozen vision-encoder layer outputs $\{Z^{(l)}_n\}_{l=1}^L$ for each $(x_n,\mathrm{img}_n)$ once,
so re-evaluating a candidate $\mathbf{w}_c$ only requires a cheap weighted sum (fusion) plus the standard forward decode;
(ii) using a small, balanced $\mathcal{D}_{\text{cal}}$ per category (hundreds to a few thousand samples typically suffice);
and (iii) optionally using early rejection (evaluate candidates on a small subset first, and fully score only the top few).

\paragraph{Impact of Calibration Set Size and Distribution.}
To rigorously justify our choices regarding the calibration dataset ($\mathcal{D}_{\text{cal}}$), we investigate how the size of the calibration set and its categorical distribution (sampling strategy) affect the final pruned model's performance. 
We evaluate two data selection strategies: a \textit{Balanced} distribution (enforcing an equal number of samples per intent category) and a \textit{Random} distribution (naturally imbalanced based on the source dataset's available distribution). We test total calibration sizes of 4.5k, 9k, and 18k samples. To account for statistical variance, we evaluate each configuration across five random seeds ($[42, 43, 44, 45, 46]$) during the redundancy-aware initialization and report the mean and standard deviation.

As shown in Table~\ref{tab:calibration_size}, our offline search process is exceptionally sample-efficient. Performance strictly saturates around 9k samples, with negligible gains observed when doubling the size to 18k. Furthermore, in the data-scarce regime (4.5k), the \textit{Balanced} sampling strategy significantly reduces variance compared to \textit{Random} sampling (\textit{e.g.}, $\pm6$ vs. $\pm15$ on MME, and $\pm0.3$ vs. $\pm0.6$ on TextVQA). This empirically proves that maintaining a balanced category distribution is crucial for stabilizing the search space and avoiding overfitting to dominant task categories. Consequently, we confirm that using a moderately sized (around 9k), class-balanced calibration set is the optimal strategy to achieve high-performance configurations with minimal computational overhead.

\begin{table}[ht]
    \centering
    \footnotesize
    \caption{\textbf{Ablation on calibration set size and sampling strategy.} Results are reported as mean $\pm$ standard deviation across 5 random seeds. The framework achieves optimal, low-variance performance with a balanced dataset of merely 9k samples, demonstrating excellent sample efficiency.}
    \label{tab:calibration_size}
    \begin{tabular}{c l c c c}
        \toprule
        \textbf{Calibration Size} & \textbf{Sampling Strategy} & \textbf{MME} & \textbf{TextVQA} & \textbf{MMVet} \\
        \midrule
        4.5k & Balanced (Avg) & $1814 \pm 6$  & $56.6 \pm 0.3$ & $32.0 \pm 0.3$ \\
        4.5k & Random         & $1809 \pm 15$ & $56.4 \pm 0.6$ & $32.1 \pm 0.4$ \\
        \midrule
        9k   & Balanced (Avg) & $1845 \pm 3$  & $57.5 \pm 0.1$ & $33.2 \pm 0.1$ \\
        9k   & Random         & $1840 \pm 5$  & $57.6 \pm 0.2$ & $33.2 \pm 0.2$ \\
        \midrule
        18k  & Balanced (Avg) & $1845 \pm 2$  & $57.5 \pm 0.1$ & $33.3 \pm 0.0$ \\
        18k  & Random         & $1845 \pm 3$  & $57.6 \pm 0.1$ & $33.2 \pm 0.1$ \\
        \bottomrule
    \end{tabular}
\end{table}

\textbf{Structured parameterization for fast search.}
To avoid overfitting and to reduce search complexity, we recommend a structured parameterization:
(i) restrict $\mathbf{w}_c$ to have support on a small candidate layer set (\textit{e.g.}, 3--5 layers spanning shallow/mid/deep),
and (ii) restrict $a_c$ to a coarse grid (\textit{e.g.}, $\{0.2,0.4,0.6,0.8\}$ or a slightly denser grid when needed).
This yields a compact search space while preserving the key adaptivity signals revealed by our motivation study.

\textbf{Optional continuous refinement (if desired).}
If one wants to refine beyond a discrete candidate set, one can keep the same constraints but allow non-uniform weights on
the selected layers (still normalized by softmax), and use derivative-free optimizers (\textit{e.g.}, coordinate search / random search)
on the low-dimensional parameters. This is optional and not required for stable gains.

\paragraph{Empirical Cost of Offline Calibration and Online Routing.}
To quantitatively support the efficiency claims of our calibration procedure, we report the wall-clock time and peak GPU memory usage required for both the offline search and the online intent classification. 
The measurements were conducted on a single NVIDIA A100 (80GB) GPU.

As detailed in Table~\ref{tab:calibration_cost}, the offline calibration is highly resource-efficient. 
Thanks to the caching of frozen vision-encoder features (as discussed in our efficient evaluation strategy), the discrete subspace search for Layer Weights ($W$) and Split Ratios ($a$) takes approximately $1.5$ hours and $1.0$ hour, respectively. The peak memory footprint remains well under 20GB, making this calibration completely feasible on consumer-grade hardware without requiring large-scale distributed clusters or gradient backpropagation.
Furthermore, during online inference, the intent classifier introduces a negligible latency of less than $2$ms and requires only $15.5$GB of memory (which includes the loaded weights of the lightweight routing model), ensuring that dynamic class-adaptive routing does not become a system bottleneck.

\begin{table}[ht]
    \centering
    \footnotesize
    \caption{\textbf{Time and memory cost of offline calibration and online routing.} The offline calibration of layer weights and split ratios takes only a few hours on a single GPU without requiring backpropagation. The online intent classifier introduces negligible latency.}
    \label{tab:calibration_cost}
    \begin{tabular}{l c c}
        \toprule
        \textbf{Component} & \textbf{Time Cost} & \textbf{Memory Usage} \\
        \midrule
        Intent Classifier (Online Inference) & $<2$ ms & 15.5 GB \\
        Calibration of Layer Weights ($W$) (Offline) & $\approx 1.5$ h & 19.4 GB \\
        Calibration of Split Ratios ($a$) (Offline) & $\approx 1.0$ h & 17.6 GB \\
        \bottomrule
    \end{tabular}
\end{table}

\subsection{Fusion with hierarchical backbones}
\label{app:hier-fusion}

Our exposition assumes token indices align across layers (a standard property of ViT-style encoders).
For hierarchical encoders with resolution changes (\textit{e.g.}, feature pyramids), fusion can be supported by mapping each
layer to a common token set before applying Eq.~\eqref{eq:fusion}.

\textbf{Token alignment via spatial resampling.}
Let layer $l$ output a grid of patch tokens with spatial resolution $H_l\times W_l$ (excluding any special token),
and let the target resolution be $H_\star\times W_\star$ (typically the final layer resolution).
We define an alignment operator $\mathrm{Align}_l(\cdot)$ that maps
$Z^{(l)}_n \in \mathbb{R}^{(H_lW_l)\times d_v}$ to $\hat{Z}^{(l)}_n \in \mathbb{R}^{(H_\star W_\star)\times d_v}$:
\begin{equation}
\hat{Z}^{(l)}_n = \mathrm{Align}_l\!\left(Z^{(l)}_n\right),
\qquad
\mathrm{Align}_l \;=\;
\begin{cases}
\text{bilinear/nearest interpolation (upsample)} & \text{if } H_lW_l < H_\star W_\star,\\
\text{average pooling / strided pooling (downsample)} & \text{if } H_lW_l > H_\star W_\star,\\
\text{identity} & \text{if } H_lW_l = H_\star W_\star.
\end{cases}
\label{eq:hier-align}
\end{equation}
In practice, we reshape tokens back to a feature map of size $H_l\times W_l\times d_v$, apply standard 2D resampling, and
flatten back to a sequence. If the backbone includes a special token (\textit{e.g.}, [CLS]), we either (i) keep it separate and fuse it
with the same mixture weights, or (ii) drop it when it is not used by the downstream projector.

After alignment, layer fusion proceeds exactly as in Eq.~\eqref{eq:fusion} by mixing the aligned tokens
$\{\hat{Z}^{(l)}_n\}_{l=1}^L$.
The computational overhead of fusion remains linear in the number of visual tokens (and linear in the number of fused layers);
see Appendix~B.6 for a step-wise complexity breakdown.

\begin{algorithm}[t]
\caption{Discrete Subspace Search for Optimal Layer Prototypes and Pruning Ratios.}
\label{alg:calibration}
\begin{algorithmic}[1]
\REQUIRE calibration set $\mathcal{D}_{\text{cal}}$ with labels $c_n$; candidate layer sets $\{\mathcal{L}^{(m)}\}$;
candidate ratios $\mathcal{A}$; budget $R$
\FOR{$c\in\mathcal{N}_c$}
    \STATE Initialize best score $s^\star\leftarrow -\infty$
    \FOR{layer candidate $\mathcal{L}^{(m)}$}
        \FOR{$a\in\mathcal{A}$}
            \STATE Define $\mathbf{w}_c$ supported on $\mathcal{L}^{(m)}$ (\textit{e.g.}, uniform over $\mathcal{L}^{(m)}$, $-\infty$ elsewhere)
            \STATE Evaluate $\mathrm{Score}$ on $\mathcal{D}_{\text{cal}}^{(c)}$ using $(\mathbf{w}_c,a)$
            \IF{score $> s^\star$}
                \STATE Update $(\mathbf{w}_c^\star,a_c^\star)\leftarrow(\mathbf{w}_c,a)$; $s^\star\leftarrow$ score
            \ENDIF
        \ENDFOR
    \ENDFOR
\ENDFOR
\STATE Return $\mathbf{W}=[\mathbf{w}_0^\star;\ldots;\mathbf{w}_{C-1}^\star]$, $\mathbf{a}=[a_0^\star,\ldots,a_{C-1}^\star]$
\end{algorithmic}
\end{algorithm}

\section{Additional Theoretical Analysis}
\label{sec:appendix-theory}

\subsection{Notation and setup}
For sample $n$, let $M_n \triangleq |\mathcal{V}_n|$ denote the number of visual patch tokens.
Layer-wise tokens are $\mathbf{z}^{(l)}_{n,t}\in\mathbb{R}^{d_v}$ for $l\in\{1,\ldots,L\}$ and $t\in\mathcal{V}_n$.
Class-adaptive fusion produces $\bar{\mathbf{z}}_{n,t}\in\mathbb{R}^{d_v}$ and aligned tokens
$\tilde{\mathbf{z}}_{n,t}=f_{\mathrm{proj}}(\bar{\mathbf{z}}_{n,t})\in\mathbb{R}^{d}$.

\paragraph{Selection operators.}
For a collection of real-valued scores $\{s_t\}_{t\in\mathcal{I}}$ indexed by $\mathcal{I}$, we denote by
$\mathrm{Top}_{K}(\{s_t\}_{t\in\mathcal{I}})$ the index set of the $K$ largest scores, and by
$\mathrm{Bottom}_{K}(\{s_t\}_{t\in\mathcal{I}})$ the index set of the $K$ smallest scores.
When ties occur, any deterministic tie-breaking rule suffices and does not affect the statements below.

\paragraph{Simplex notation.}
We denote the probability simplex by
\begin{equation}
\Delta^{L-1}\triangleq \Bigl\{\boldsymbol{\gamma}\in\mathbb{R}^{L}\;:\;\gamma_l\ge 0~(\forall l),\ \sum_{l=1}^{L}\gamma_l=1\Bigr\}.
\label{eq:app-simplex}
\end{equation}

\paragraph{Unit-normalization for cosine similarity.}
Whenever cosine similarity is used, we work with $\ell_2$-normalized features
\begin{equation}
\mathbf{u}_{n,t}\triangleq \frac{\tilde{\mathbf{z}}_{n,t}}{\|\tilde{\mathbf{z}}_{n,t}\|_2}\in\mathbb{R}^{d}.
\label{eq:app-unit}
\end{equation}
This normalization isolates \emph{directional} information and makes inner products
$\mathbf{u}_{n,t}^{\top}\mathbf{u}_{n,j}$ equal cosine similarity, which is the natural notion of redundancy/novelty on the unit sphere. Consequently, Euclidean distance on this manifold becomes strictly monotonic with respect to angular separation, simplifying the geometric proofs in Section~\ref{sec:app-geometry}.

% ============================================================
\subsection{Properties of class-adaptive layer fusion}
\label{sec:app-fusion}

\paragraph{Fusion equations in execution order.}
For clarity, we restate the fusion module as a short sequence of equations.

\begin{enumerate}
\item \textbf{Routing and mixture weights.}
Given the prompt $x_n$, a text-only router predicts a class $c_n$, which selects a row $\mathbf{w}_{c_n}\in\mathbb{R}^{L}$ from the layer-score matrix $\mathbf{W}$.
We then convert it into a probability distribution over layers:
\begin{equation}
c_n \leftarrow \mathrm{Route}(x_n),
\qquad
\boldsymbol{\alpha}_n \triangleq \mathrm{softmax}(\tau\,\mathbf{w}_{c_n})\in\Delta^{L-1}.
\label{eq:app-alpha}
\end{equation}
Here $\tau>0$ is a temperature that controls how ``peaky'' the layer preference is.

\item \textbf{Token-wise convex fusion across layers.}
For each token index $t\in\mathcal{V}_n$, we fuse its layer-wise representations by a weighted sum. This dynamically integrates features from varying levels of visual abstraction:
\begin{equation}
\bar{\mathbf{z}}_{n,t}=\sum_{l=1}^{L}\alpha_{n,l}\mathbf{z}^{(l)}_{n,t}.
\label{eq:app-fusion}
\end{equation}

\item \textbf{Projection into the decoder space.}
The fused token is mapped to the decoder embedding space through $f_{\mathrm{proj}}$:
\begin{equation}
\tilde{\mathbf{z}}_{n,t}=f_{\mathrm{proj}}(\bar{\mathbf{z}}_{n,t})\in\mathbb{R}^{d}.
\label{eq:app-proj}
\end{equation}
\end{enumerate}

\begin{lemma}[To keep fused features inside the layer-wise convex hull, we fuse by a convex combination]
\label{lem:convex-fusion}
For any sample $n$ and token index $t\in\mathcal{V}_n$, the fused token in Eq.~\eqref{eq:app-fusion}
lies in the convex hull of $\{\mathbf{z}^{(l)}_{n,t}\}_{l=1}^{L}$.
\end{lemma}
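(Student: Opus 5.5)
The plan is to reduce the statement to the definition of the convex hull, namely the set of all finite convex combinations of the given points. The key observation is that the mixture weights $\boldsymbol{\alpha}_n$ from Eq.~\eqref{eq:app-alpha} lie in the simplex $\Delta^{L-1}$ of Eq.~\eqref{eq:app-simplex}. Once that is established, Eq.~\eqref{eq:app-fusion} exhibits $\bar{\mathbf{z}}_{n,t}$ explicitly as a convex combination of $\{\mathbf{z}^{(l)}_{n,t}\}_{l=1}^{L}$, so it belongs to their convex hull.

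The steps, in order, are as follows. First, I will verify $\boldsymbol{\alpha}_n\in\Delta^{L-1}$ directly from the softmax formula
\[
\alpha_{n,l}=\frac{\exp(\tau w_{c_n,l})}{\sum_{l'=1}^{L}\exp(\tau w_{c_n,l'})}.
\]
Each numerator is nonnegative and the denominator is positive, so $\alpha_{n,l}\ge 0$. Summing over $l$ gives exactly $1$.

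Second, I will handle the degenerate case used in the calibration procedure (Algorithm~\ref{alg:calibration}), where some entries of $\mathbf{w}_c$ are $-\infty$. Adopting the convention $\exp(-\infty)=0$, those weights vanish. Provided at least one entry is finite, which holds because the support $\mathcal{L}^{(m)}$ is nonempty, the denominator stays positive. The remaining weights are still nonnegative and sum to one, so $\boldsymbol{\alpha}_n$ remains in the simplex, now supported on a subset of layers.

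Third, I will substitute these weights into Eq.~\eqref{eq:app-fusion}. Since $\alpha_{n,l}\ge0$ and $\sum_l\alpha_{n,l}=1$, the sum $\sum_l \alpha_{n,l}\mathbf{z}^{(l)}_{n,t}$ is by definition a point of $\mathrm{conv}\{\mathbf{z}^{(l)}_{n,t}\}_{l=1}^{L}$. This conclusion holds for every $n$ and every $t\in\mathcal{V}_n$, because the same $\boldsymbol{\alpha}_n$ is shared across all tokens of the sample.

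There is essentially no obstacle here. The only point needing care is the $-\infty$ or masked-weight convention in the second step, together with a remark on hierarchical backbones: there the same argument applies verbatim to the aligned tokens $\hat{\mathbf{z}}^{(l)}_{n,t}$ from Eq.~\eqref{eq:hier-align}. As a corollary I would note that for any convex function $g$, such as a norm, $g(\bar{\mathbf{z}}_{n,t})\le\max_l g(\mathbf{z}^{(l)}_{n,t})$, which gives the advertised stability bound.
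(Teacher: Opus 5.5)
Your proof is correct and follows the paper's own proof: the softmax weights are nonnegative and sum to one, so Eq.~\eqref{eq:app-fusion} is a convex combination and therefore lies in the convex hull. Your treatment of the $-\infty$ masking from Algorithm~\ref{alg:calibration} adds rigor the paper omits. Note, however, that your closing Jensen remark only gives the norm bound $\|\bar{\mathbf{z}}_{n,t}\|_2\le B$ under Assumption~\ref{assm:bounded}; it does not give the drift bound of Proposition~\ref{prop:end2end-stability}, which the paper obtains from the triangle inequality and Lemma~\ref{lem:softmax-lipschitz}.
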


\begin{proof}
By Eq.~\eqref{eq:app-alpha}, $\boldsymbol{\alpha}_n=\mathrm{softmax}(\tau \mathbf{w}_{c_n})$ satisfies $\alpha_{n,l}\ge 0$ and $\sum_{l=1}^{L}\alpha_{n,l}=1$,
hence Eq.~\eqref{eq:app-fusion} is a convex combination.
Geometrically, the fused vector lies inside the layer-wise feature polytope.
\end{proof}

\paragraph{Interpretation.}
Eq.~\eqref{eq:app-fusion} prevents ``out-of-manifold'' extrapolation across layers:
fusion interpolates between representations at different depths but cannot invent directions outside their convex span.
This is desirable when fusion weights are controlled by a text router, ensuring the resulting representation remains within the valid semantic latent space.

\begin{lemma}[To interpolate between uniform averaging and hard layer selection, we analyze the temperature limits]
\label{lem:temp-limits}
Let $\boldsymbol{\alpha}(\tau)=\mathrm{softmax}(\tau\mathbf{w})$ for any fixed $\mathbf{w}\in\mathbb{R}^{L}$.
(i) As $\tau\to 0$, $\boldsymbol{\alpha}(\tau)\to (1/L)\mathbf{1}$.
(ii) As $\tau\to\infty$, $\boldsymbol{\alpha}(\tau)$ converges to a one-hot distribution supported on $\arg\max_{l} w_l$ when the maximizer is unique.
\end{lemma}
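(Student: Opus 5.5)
The plan is to write the softmax components explicitly, $\alpha_l(\tau)=\exp(\tau w_l)/\sum_{k=1}^{L}\exp(\tau w_k)$, and handle each limit by elementary continuity and ratio arguments. The argument is finite-dimensional with $L$ fixed, so componentwise convergence is the same as convergence of the vector $\boldsymbol{\alpha}(\tau)$ in any norm on $\mathbb{R}^L$. I will treat each component separately.

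\textbf{Part (i).} The map $\tau\mapsto\exp(\tau w_l)$ is continuous at $\tau=0$ with value $1$, for every $l$. Hence the numerator of $\alpha_l(\tau)$ tends to $1$ and the denominator tends to $L>0$. Because the denominator's limit is nonzero, the quotient rule for limits gives $\alpha_l(\tau)\to 1/L$. This holds for each $l$, so $\boldsymbol{\alpha}(\tau)\to(1/L)\mathbf{1}$. Note that this is really a two-sided statement at $0$, even though only $\tau\to0^+$ is relevant given $\tau>0$.

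\textbf{Part (ii).} Let $l^\star$ be the unique maximizer and set $\delta\triangleq w_{l^\star}-\max_{l\neq l^\star}w_l>0$. Uniqueness is exactly what makes $\delta$ strictly positive. Dividing numerator and denominator by $\exp(\tau w_{l^\star})$ gives the shift-invariant form
\begin{equation*}
\alpha_l(\tau)=\frac{\exp\bigl(\tau(w_l-w_{l^\star})\bigr)}{1+\sum_{k\neq l^\star}\exp\bigl(\tau(w_k-w_{l^\star})\bigr)}.
\end{equation*}
For $k\neq l^\star$ each exponent is at most $-\tau\delta$, so each term in the denominator's sum is bounded by $e^{-\tau\delta}$, which tends to $0$ as $\tau\to\infty$.

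Two consequences follow. First, $\alpha_{l^\star}(\tau)=1/(1+\epsilon(\tau))$ with $0\le\epsilon(\tau)\le(L-1)e^{-\tau\delta}\to0$, so $\alpha_{l^\star}(\tau)\to1$. Second, for $l\neq l^\star$ we have $0\le\alpha_l(\tau)\le e^{-\tau\delta}\to0$. Together these give convergence to the one-hot vector $\mathbf{e}_{l^\star}$, with an explicit exponential rate $\|\boldsymbol{\alpha}(\tau)-\mathbf{e}_{l^\star}\|_1\le 2(L-1)e^{-\tau\delta}$.

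\textbf{Main obstacle.} The only delicate point is the role of uniqueness. The proof must isolate the gap $\delta>0$, and I will also remark that without uniqueness the limit is the uniform distribution over the set of maximizers. The same ratio argument shows this: the maximizers share equal mass and all other terms decay at rate $e^{-\tau\delta'}$, where $\delta'$ is the gap to the best non-maximizer. This clarifies why the hypothesis appears in the statement, but it is not needed for the claim itself.
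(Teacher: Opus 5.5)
Your proposal is correct and follows the same route as the paper. For (i) you use continuity of $\tau\mapsto\mathrm{softmax}(\tau\mathbf{w})$ at $\tau=0$, and for (ii) you use domination of the denominator by $\exp(\tau w_{l^\star})$; you simply make the paper's one-line sketch rigorous by isolating the gap $\delta>0$ and giving the explicit rate $\|\boldsymbol{\alpha}(\tau)-\mathbf{e}_{l^\star}\|_1\le 2(L-1)e^{-\tau\delta}$.
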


\begin{proof}
(i) $\mathrm{softmax}(0\cdot \mathbf{w})$ is uniform; continuity gives the limit.
(ii) For large $\tau$, $\exp(\tau w_l)$ is dominated by the largest $w_l$, yielding the standard softmax one-hot asymptotics, effectively selecting the single maximal layer.
\end{proof}

\begin{assumption}[To enable norm-based perturbation bounds, we assume bounded layer-wise token norms]
\label{assm:bounded}
There exists $B>0$ such that for all $n,t,l$, $\|\mathbf{z}^{(l)}_{n,t}\|_2 \le B$, uniformly bounding the feature magnitude.
\end{assumption}

\begin{lemma}[To control how routing-score noise changes fusion weights, we show softmax is $\ell_1$-Lipschitz]
\label{lem:softmax-lipschitz}
Let $\boldsymbol{\alpha}(\mathbf{u})=\mathrm{softmax}(\mathbf{u})$.
Then for any $\mathbf{u},\mathbf{v}\in\mathbb{R}^{L}$, representing arbitrary input logit vectors,
\begin{equation}
\|\boldsymbol{\alpha}(\mathbf{u})-\boldsymbol{\alpha}(\mathbf{v})\|_1
\le \frac{1}{2}\|\mathbf{u}-\mathbf{v}\|_1.
\label{eq:app-softmax-lip}
\end{equation}
Consequently, with $\boldsymbol{\alpha}(\tau\mathbf{w})=\mathrm{softmax}(\tau\mathbf{w})$, applied to the temperature-scaled inputs,
\begin{equation}
\|\boldsymbol{\alpha}(\tau\mathbf{w})-\boldsymbol{\alpha}(\tau\mathbf{w}')\|_1
\le \frac{\tau}{2}\|\mathbf{w}-\mathbf{w}'\|_1.
\label{eq:app-softmax-lip-scaled}
\end{equation}
\end{lemma}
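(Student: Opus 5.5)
The plan is a mean-value (line-integral) argument on the Jacobian of the softmax map, bounding its induced $\ell_1\!\to\!\ell_1$ operator norm uniformly by $1/2$.

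\textbf{Jacobian.} Write $\mathbf{p}=\boldsymbol{\alpha}(\mathbf{x})=\mathrm{softmax}(\mathbf{x})$ for an arbitrary $\mathbf{x}\in\mathbb{R}^{L}$. A direct differentiation of $p_i=e^{x_i}/\sum_k e^{x_k}$ gives the standard formula
\begin{equation*}
\mathbf{J}(\mathbf{x})\triangleq\frac{\partial\boldsymbol{\alpha}}{\partial\mathbf{x}}=\mathrm{diag}(\mathbf{p})-\mathbf{p}\mathbf{p}^{\top},
\qquad
J_{ij}=p_i(\delta_{ij}-p_j).
\end{equation*}

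\textbf{Operator norm.} The induced $\ell_1\!\to\!\ell_1$ norm of a matrix is its maximal absolute column sum. Fix a column $j$. Its diagonal entry is $p_j(1-p_j)\ge 0$. Its off-diagonal entries are $-p_ip_j$ for $i\ne j$, and their absolute values sum to $p_j\sum_{i\ne j}p_i=p_j(1-p_j)$. Hence the column sum equals
\begin{equation*}
2p_j(1-p_j)\le 2\cdot\tfrac14=\tfrac12 .
\end{equation*}
Therefore $\|\mathbf{J}(\mathbf{x})\|_{1\to1}\le\tfrac12$ for every $\mathbf{x}\in\mathbb{R}^{L}$. I expect this step to be the main obstacle, because the whole constant comes from it. In particular one must use the column-sum characterization rather than a spectral or Frobenius bound, which would give a worse constant.

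\textbf{Integration.} Softmax is smooth, so along the segment $\mathbf{x}(s)=\mathbf{v}+s(\mathbf{u}-\mathbf{v})$, $s\in[0,1]$, the fundamental theorem of calculus gives
\begin{equation*}
\boldsymbol{\alpha}(\mathbf{u})-\boldsymbol{\alpha}(\mathbf{v})=\int_0^1\mathbf{J}(\mathbf{x}(s))(\mathbf{u}-\mathbf{v})\,ds .
\end{equation*}
Taking $\ell_1$ norms, moving the norm inside the integral, and applying the uniform operator-norm bound yields
\begin{equation*}
\|\boldsymbol{\alpha}(\mathbf{u})-\boldsymbol{\alpha}(\mathbf{v})\|_1\le\int_0^1\|\mathbf{J}(\mathbf{x}(s))\|_{1\to1}\,\|\mathbf{u}-\mathbf{v}\|_1\,ds\le\tfrac12\|\mathbf{u}-\mathbf{v}\|_1 ,
\end{equation*}
which is Eq.~\eqref{eq:app-softmax-lip}.

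\textbf{Scaled form.} Substituting $\mathbf{u}=\tau\mathbf{w}$ and $\mathbf{v}=\tau\mathbf{w}'$, and using $\|\tau(\mathbf{w}-\mathbf{w}')\|_1=\tau\|\mathbf{w}-\mathbf{w}'\|_1$ for $\tau>0$, gives Eq.~\eqref{eq:app-softmax-lip-scaled}.
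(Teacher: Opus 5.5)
Your proposal is correct and follows the paper's proof step for step. The paper also uses the Jacobian $J_{ij}=\alpha_i(\delta_{ij}-\alpha_j)$, the column-sum bound $2\alpha_j(1-\alpha_j)\le \tfrac12$ on the induced $\ell_1\to\ell_1$ norm, a mean-value argument, and the substitution $\mathbf{u}=\tau\mathbf{w}$, $\mathbf{v}=\tau\mathbf{w}'$; your line-integral version is simply the rigorous form of the mean-value inequality the paper invokes for this vector-valued map.
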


\begin{proof}
We make the dependence explicit in three steps, mirroring the execution logic.

\begin{enumerate}
\item[(a)] \textbf{Jacobian form.}
Let $J(\mathbf{u})=\nabla_{\mathbf{u}}\boldsymbol{\alpha}(\mathbf{u})$ with entries
$J_{ij}=\alpha_i(\delta_{ij}-\alpha_j)$.

\item[(b)] \textbf{Column-wise $\ell_1$ bound.}
For any column $j$, summing the absolute values of entries,
\[
\sum_i |J_{ij}|
= |\alpha_j(1-\alpha_j)|+\sum_{i\ne j}|\!-\alpha_i\alpha_j|
= \alpha_j(1-\alpha_j)+\alpha_j\sum_{i\ne j}\alpha_i
=2\alpha_j(1-\alpha_j)\le \tfrac12,
\]
since $\max_{x\in[0,1]} 2x(1-x)=1/2$.

\item[(c)] \textbf{Mean value theorem.}
The induced operator norm satisfies $\|J(\mathbf{u})\|_{1\to 1}\le 1/2$ for all $\mathbf{u}$.
Therefore,
\[
\|\boldsymbol{\alpha}(\mathbf{u})-\boldsymbol{\alpha}(\mathbf{v})\|_1
\le \sup_{\xi}\|J(\xi)\|_{1\to 1}\,\|\mathbf{u}-\mathbf{v}\|_1 \le \tfrac12\|\mathbf{u}-\mathbf{v}\|_1.
\]
Eq.~\eqref{eq:app-softmax-lip-scaled} follows by substituting $\mathbf{u}=\tau\mathbf{w}$ and $\mathbf{v}=\tau\mathbf{w}'$.
\end{enumerate}
\end{proof}

\begin{proposition}[To guarantee robustness to routing-score perturbations, we bound fused-token drift linearly in $\tau$]
\label{prop:end2end-stability}
Under Assumption~\ref{assm:bounded}, for any two layer-score vectors $\mathbf{w},\mathbf{w}'\in\mathbb{R}^{L}$ and fixed $\tau$,
\begin{equation}
\|\bar{\mathbf{z}}_{n,t}(\mathbf{w})-\bar{\mathbf{z}}_{n,t}(\mathbf{w}')\|_2
\le \frac{B\tau}{2}\|\mathbf{w}-\mathbf{w}'\|_1.
\label{eq:app-end2end}
\end{equation}
If $f_{\mathrm{proj}}$ is $L_{\mathrm{proj}}$-Lipschitz, then
$\|\tilde{\mathbf{z}}_{n,t}(\mathbf{w})-\tilde{\mathbf{z}}_{n,t}(\mathbf{w}')\|_2
\le \frac{B\tau L_{\mathrm{proj}}}{2}\|\mathbf{w}-\mathbf{w}'\|_1$.
\end{proposition}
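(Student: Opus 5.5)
The plan is to write the fused-token difference as a linear combination of the layer-wise tokens with the difference of mixture weights as coefficients. Then the triangle inequality, Assumption~\ref{assm:bounded}, and Lemma~\ref{lem:softmax-lipschitz} give the bound directly.

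\textbf{Step 1: reduce to a weight difference.} Fix $n$ and $t$, and write $\boldsymbol{\alpha}=\mathrm{softmax}(\tau\mathbf{w})$ and $\boldsymbol{\alpha}'=\mathrm{softmax}(\tau\mathbf{w}')$. By Eq.~\eqref{eq:app-fusion},
\[
\bar{\mathbf{z}}_{n,t}(\mathbf{w})-\bar{\mathbf{z}}_{n,t}(\mathbf{w}')=\sum_{l=1}^{L}(\alpha_l-\alpha'_l)\,\mathbf{z}^{(l)}_{n,t}.
\]
Applying the triangle inequality and the uniform bound $\|\mathbf{z}^{(l)}_{n,t}\|_2\le B$ from Assumption~\ref{assm:bounded} gives
\[
\Bigl\|\sum_{l}(\alpha_l-\alpha'_l)\mathbf{z}^{(l)}_{n,t}\Bigr\|_2\le B\sum_l|\alpha_l-\alpha'_l| = B\,\|\boldsymbol{\alpha}-\boldsymbol{\alpha}'\|_1 .
\]
This is a H\"older-type inequality pairing $\ell_1$ on the weights with $\ell_\infty$ over the layer norms.

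\textbf{Step 2: invoke the softmax Lipschitz bound.} Eq.~\eqref{eq:app-softmax-lip-scaled} of Lemma~\ref{lem:softmax-lipschitz} gives $\|\boldsymbol{\alpha}-\boldsymbol{\alpha}'\|_1\le \frac{\tau}{2}\|\mathbf{w}-\mathbf{w}'\|_1$. Substituting this into Step~1 yields Eq.~\eqref{eq:app-end2end}.

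\textbf{Step 3: the projected statement.} If $f_{\mathrm{proj}}$ is $L_{\mathrm{proj}}$-Lipschitz, then by Eq.~\eqref{eq:app-proj}
\[
\|\tilde{\mathbf{z}}_{n,t}(\mathbf{w})-\tilde{\mathbf{z}}_{n,t}(\mathbf{w}')\|_2\le L_{\mathrm{proj}}\|\bar{\mathbf{z}}_{n,t}(\mathbf{w})-\bar{\mathbf{z}}_{n,t}(\mathbf{w}')\|_2 ,
\]
and Step~2 then gives the second claim.

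Every step is routine given the earlier results, so there is no real obstacle; the only substantive ingredient is the softmax Lipschitz constant, which is taken as given from Lemma~\ref{lem:softmax-lipschitz}. One point of care is the case where scores are $-\infty$, since Algorithm~\ref{alg:calibration} allows this off the support. There $\|\mathbf{w}-\mathbf{w}'\|_1$ may be infinite, and the bound becomes trivially true, so I would restrict the statement to finite real score vectors as in its hypothesis.
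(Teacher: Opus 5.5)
Your proposal is correct and follows the paper's own proof. The paper applies the triangle inequality with the norm bound $B$ to get $B\|\boldsymbol{\alpha}-\boldsymbol{\alpha}'\|_1$, then uses Lemma~\ref{lem:softmax-lipschitz} for the factor $\tau/2$, and finally the Lipschitzness of $f_{\mathrm{proj}}$ for the projected bound.
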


\begin{proof}
By applying the triangle inequality with the norm bound $B$ and Lemma~\ref{lem:softmax-lipschitz}, we have:
\[
\|\bar{\mathbf{z}}(\mathbf{w})-\bar{\mathbf{z}}(\mathbf{w}')\|_2
\le B\|\boldsymbol{\alpha}(\tau\mathbf{w})-\boldsymbol{\alpha}(\tau\mathbf{w}')\|_1
\le \frac{B\tau}{2}\|\mathbf{w}-\mathbf{w}'\|_1.
\]
The projector bound follows from Lipschitzness of $f_{\mathrm{proj}}$, scaling the error by the constant.
\end{proof}

\paragraph{Practical reading.}
Eq.~\eqref{eq:app-end2end} makes the role of $\tau$ explicit:
larger $\tau$ yields sharper (more class-distinct) depth preferences (Lemma~\ref{lem:temp-limits}),
but also linearly amplifies sensitivity to score perturbations.
This justifies conservative $\tau$ (or annealing) to effectively dampen the amplification of routing noise when router uncertainty is non-negligible.

\begin{corollary}[To upper bound misrouting sensitivity, we relate class-to-class drift to inter-row distance in $\mathbf{W}$]
\label{cor:misroute}
Let $c$ and $c'$ be two classes with score vectors $\mathbf{w}_{c}$ and $\mathbf{w}_{c'}$.
Under Assumption~\ref{assm:bounded}, the fused-token drift caused by misrouting $c\rightarrow c'$, which corresponds to the feature shift induced by the incorrect selection, satisfies
\begin{equation}
\|\bar{\mathbf{z}}_{n,t}(\mathbf{w}_{c})-\bar{\mathbf{z}}_{n,t}(\mathbf{w}_{c'})\|_2
\le \frac{B\tau}{2}\|\mathbf{w}_{c}-\mathbf{w}_{c'}\|_1.
\label{eq:app-misroute}
\end{equation}
\end{corollary}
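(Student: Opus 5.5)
This corollary is a direct specialization of Proposition~\ref{prop:end2end-stability}. I would instantiate that proposition with the two layer-score vectors $\mathbf{w}\triangleq\mathbf{w}_{c}$ and $\mathbf{w}'\triangleq\mathbf{w}_{c'}$, i.e., the $c$-th and $c'$-th rows of $\mathbf{W}$. The token $t$, the sample $n$ and the temperature $\tau$ are held fixed throughout.

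Under misrouting, the router outputs $c'$ instead of $c$. By Eq.~\eqref{eq:app-alpha} the only quantity that changes is the mixture vector, which becomes $\boldsymbol{\alpha}(\tau\mathbf{w}_{c'})$ instead of $\boldsymbol{\alpha}(\tau\mathbf{w}_{c})$. The layer-wise tokens $\mathbf{z}^{(l)}_{n,t}$ are unchanged, since the vision encoder is frozen and runs independently of the prompt.

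To keep the argument self-contained, I would still write out the two-line chain behind the proposition. By Eq.~\eqref{eq:app-fusion}, the drift is
\[
\bar{\mathbf{z}}_{n,t}(\mathbf{w}_{c})-\bar{\mathbf{z}}_{n,t}(\mathbf{w}_{c'})=\sum_{l=1}^{L}\bigl(\alpha_l(\tau\mathbf{w}_{c})-\alpha_l(\tau\mathbf{w}_{c'})\bigr)\mathbf{z}^{(l)}_{n,t}.
\]
Applying the triangle inequality and then Assumption~\ref{assm:bounded} bounds its norm by $B\,\|\boldsymbol{\alpha}(\tau\mathbf{w}_{c})-\boldsymbol{\alpha}(\tau\mathbf{w}_{c'})\|_1$. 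Eq.~\eqref{eq:app-softmax-lip-scaled} from Lemma~\ref{lem:softmax-lipschitz} then gives the factor $\tfrac{\tau}{2}\|\mathbf{w}_{c}-\mathbf{w}_{c'}\|_1$, which is exactly Eq.~\eqref{eq:app-misroute}. If the projector is $L_{\mathrm{proj}}$-Lipschitz, the same argument yields the decoder-space version with the extra factor $L_{\mathrm{proj}}$.

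There is no real obstacle. The one point to handle is that Algorithm~\ref{alg:calibration} can set entries of $\mathbf{w}_c$ to $-\infty$ outside the chosen layer support. In that case $\|\mathbf{w}_c-\mathbf{w}_{c'}\|_1$ may be infinite, and the bound holds trivially. Alternatively, one can apply it to the finite surrogate with large negative logits and take the limit, since both sides are continuous in the finite entries.
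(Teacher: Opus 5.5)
Your proposal is correct and matches the paper's proof, which simply instantiates Proposition~\ref{prop:end2end-stability} with $\mathbf{w}=\mathbf{w}_{c}$ and $\mathbf{w}'=\mathbf{w}_{c'}$. Your unrolled chain (triangle inequality, then Assumption~\ref{assm:bounded}, then Lemma~\ref{lem:softmax-lipschitz}) is exactly the argument behind that proposition. Your remark on $-\infty$ entries goes beyond the paper, which implicitly takes the rows of $\mathbf{W}$ in $\mathbb{R}^{L}$; if both rows are $-\infty$ in the same coordinate the difference is undefined rather than infinite, but your finite-surrogate limit handles that case correctly.
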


\begin{proof}
Apply Proposition~\ref{prop:end2end-stability} by substituting the specific class vectors $\mathbf{w}=\mathbf{w}_c$ and $\mathbf{w}'=\mathbf{w}_{c'}$.
\end{proof}

% ============================================================
\subsection{Pruning as an explicit relevance--coverage optimization with redundancy-aware seeding and clustering}
\label{sec:app-prune}

We analyze the class-adaptive two-stage pruning under a fixed budget split $K_1=\lfloor a_n R\rfloor$ and $K_2=R-K_1$ where $a_n \triangleq a_{c_n}$.
Stage~I selects \emph{relevance pivots} using saliency scores $\phi_{n,t}$, and Stage~II selects \emph{diversity completion} tokens
from the non-pivot pool $\mathcal{U}_n=\mathcal{V}_n\setminus \mathcal{P}_n$, designed to capture semantic information missed by the saliency-based selection.

\paragraph{Pruning equations in execution order.}
We restate the pruning pipeline as a numbered ``recipe'' for easier reference.

\begin{enumerate}
\item \textbf{Attention matrix and saliency (relevance score).}
Given an attention block with query/key/value $Q,K,V$, define
\begin{equation}
\mathrm{Attn}(Q,K,V) = \mathrm{softmax}\!\Bigl(\frac{QK^{\top}}{\sqrt{d_k}}\Bigr)V,
\qquad
\mathbf{A}_n\triangleq \mathrm{softmax}\!\Bigl(\frac{QK^{\top}}{\sqrt{d_k}}\Bigr),
\label{eq:app-attn}
\end{equation}
where $\mathbf{A}_n$ is the attention matrix (typically averaged over heads in practice).
For a compact reference index set $\mathcal{S}$ (\textit{e.g.}, visual [CLS] or selected text instruction tokens), we score a visual token $t\in\mathcal{V}_n$ by
\begin{equation}
\phi_{n,t}\triangleq \frac{1}{|\mathcal{S}|}\sum_{i\in\mathcal{S}} A_{n,i,t},
\qquad t\in\mathcal{V}_n.
\label{eq:app-saliency}
\end{equation}
Here $A_{n,i,t}$ measures how much the reference token $i$ attends to visual token $t$; larger $\phi_{n,t}$ indicates higher instruction relevance, effectively highlighting the visual regions most aligned with the user prompt.

\item \textbf{Budget split into relevance pivots vs.\ diversity completion.}
Given a class-dependent ratio $a_n\in[0,1]$ and total budget $R$, determining the specific allocation size for the relevance and diversity stages:
\begin{equation}
K_1=\lfloor a_n R\rfloor,\qquad K_2=R-K_1.
\label{eq:app-split}
\end{equation}

\item \textbf{Stage I: pivots by top-$K_1$ saliency.}
We keep the $K_1$ most salient tokens as pivots:
\begin{equation}
\mathcal{P}_n=\mathrm{Top}_{K_1}\bigl(\{\phi_{n,t}\}_{t\in\mathcal{V}_n}\bigr).
\label{eq:app-pivots}
\end{equation}
This stage is ``relevance-first'': it aggressively preserves tokens strongly queried by the instruction.

\item \textbf{Pivot-relative redundancy.}
Let $\mathcal{U}_n=\mathcal{V}_n\setminus\mathcal{P}_n$ be the non-pivot pool.
Using unit features $\mathbf{u}_{n,t}$ (Eq.~\eqref{eq:app-unit}), define redundancy of a candidate token $t\in\mathcal{U}_n$ as
\begin{equation}
\rho_{n,t}\triangleq \max_{j\in\mathcal{P}_n}\mathbf{u}_{n,t}^{\top}\mathbf{u}_{n,j},
\qquad t\in\mathcal{U}_n.
\label{eq:app-redundancy}
\end{equation}
Because $\mathbf{u}_{n,t}$ are unit vectors, $\rho_{n,t}\in[-1,1]$ is cosine similarity.
Large $\rho_{n,t}$ means $t$ is highly duplicated by some pivot; small $\rho_{n,t}$ means $t$ lies in a direction poorly covered by pivots.

\item \textbf{Deterministic redundancy-aware seeding for Stage II.}
We view $\rho_{n,t}$ as a \emph{pivot-overlap cost}: it measures how much a candidate token $t\in\mathcal{U}_n$ resembles the already-selected pivot set $\mathcal{P}_n$ (worst-case cosine overlap).
To make our seeding rule explicit, for any seed set $\mathcal{C}\subseteq\mathcal{U}_n$ with $|\mathcal{C}|=K_2$, we define the total pivot-relative redundancy as
\begin{equation}
D(\mathcal{C}\mid \mathcal{P}_n)\triangleq \sum_{t\in\mathcal{C}} \rho_{n,t}.
\label{eq:app-seedred}
\end{equation}
Since Eq.~\eqref{eq:app-seedred} is additive over tokens, the global minimizer under $|\mathcal{C}|=K_2$ is obtained by selecting the $K_2$ tokens with the smallest $\rho_{n,t}$, yielding a deterministic initialization that starts from directions complementary to $\mathcal{P}_n$.
Note that $D$ targets \emph{seed-to-pivot} redundancy; redundancy \emph{among} completion tokens is then reduced by the subsequent $K$-means refinement.

\item \textbf{Bottom-$K_2$ seeding (optimal for $D$).}
The minimizer of Eq.~\eqref{eq:app-seedred} is obtained by selecting the $K_2$ least redundant tokens, since the sum is minimized by the smallest individual terms:
\begin{equation}
\mathcal{C}^{(0)}_n = \mathrm{Bottom}_{K_2}\bigl(\{\rho_{n,t}\}_{t\in\mathcal{U}_n}\bigr).
\label{eq:app-seed-bottom}
\end{equation}
We use the corresponding unit features as initial spherical $K$-means centers, \textit{i.e.}, set $\boldsymbol{\mu}^{(0)}_{n,k}=\mathbf{u}_{n,c_k}$ for $c_k\in\mathcal{C}^{(0)}_n$.

\item \textbf{Spherical $K$-means refinement (coverage).}
With cosine similarity, spherical $K$-means iterates between:
\begin{enumerate}
\item[(a)] \textbf{Assignment (nearest-center by cosine).}
\begin{equation}
s^{(r)}(t) = \arg\max_{k\in\{1,\dots,K_2\}} \mathbf{u}_{n,t}^{\top}\boldsymbol{\mu}^{(r-1)}_{n,k},
\qquad t\in\mathcal{U}_n.
\label{eq:app-kmeans-assign}
\end{equation}
\item[(b)] \textbf{Update (normalized mean direction).}
\begin{equation}
\boldsymbol{\mu}^{(r)}_{n,k}
=\frac{\sum_{t:s^{(r)}(t)=k}\mathbf{u}_{n,t}}{\left\|\sum_{t:s^{(r)}(t)=k}\mathbf{u}_{n,t}\right\|_2},
\qquad k\in\{1,\dots,K_2\}.
\label{eq:app-kmeans-update}
\end{equation}
\end{enumerate}
We run $r=1,\dots,T$ iterations, where small $T$ (\textit{e.g.}, $5$) is typically sufficient in practice.

\item \textbf{Discrete completion tokens via cluster medoids.}
After $T$ iterations, we select one representative per cluster:
\begin{equation}
q_{n,k}=\arg\max_{t:s^{(T)}(t)=k}\mathbf{u}_{n,t}^\top\boldsymbol{\mu}^{(T)}_{n,k},
\qquad
\mathcal{Q}_n=\{q_{n,k}\}_{k=1}^{K_2}\subseteq \mathcal{U}_n.
\label{eq:app-medoid}
\end{equation}
Eq.~\eqref{eq:app-medoid} returns \emph{token indices} (not continuous centers), so the retained set is directly usable by the decoder.
\end{enumerate}

\begin{proposition}[To initialize clustering away from pivots, Bottom-$K_2$ seeding minimizes total seed redundancy]
\label{prop:seed-opt}
Fix $\mathcal{P}_n$ and let $\mathcal{U}_n=\mathcal{V}_n\setminus\mathcal{P}_n$.
Among all seed sets $\mathcal{C}\subseteq\mathcal{U}_n$ with $|\mathcal{C}|=K_2$, the minimizer of $D(\mathcal{C}\mid\mathcal{P}_n)$ in Eq.~\eqref{eq:app-seedred} is
$\mathcal{C}^{(0)}_n$ in Eq.~\eqref{eq:app-seed-bottom}, providing the global optimum for the additive redundancy cost function.
\end{proposition}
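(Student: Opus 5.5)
The plan is a direct exchange argument, exploiting that $D(\cdot\mid\mathcal{P}_n)$ in Eq.~\eqref{eq:app-seedred} is additive over tokens. The pivot set $\mathcal{P}_n$ is fixed, so each $\rho_{n,t}$ for $t\in\mathcal{U}_n$ is a fixed real number in $[-1,1]$ that does not depend on which other tokens enter the seed set. The problem therefore reduces to choosing $K_2$ reals from a finite multiset so as to minimize their sum. I assume $K_2\le|\mathcal{U}_n|$ so that feasible sets exist; this holds since $R\le M_n$.

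First I would order $\mathcal{U}_n$ as $t_1,\dots,t_m$ with $\rho_{n,t_1}\le\cdots\le\rho_{n,t_m}$, breaking ties by the deterministic rule fixed in the notation section. This gives $\mathcal{C}^{(0)}_n=\{t_1,\dots,t_{K_2}\}$. Next, take any feasible $\mathcal{C}$ with $|\mathcal{C}|=K_2$ and write its elements in sorted order as $t_{i_1},\dots,t_{i_{K_2}}$ with $i_1<\cdots<i_{K_2}$. Since the indices are distinct and increasing, $i_k\ge k$ for every $k$. Hence $\rho_{n,t_{i_k}}\ge\rho_{n,t_k}$ termwise, and summing over $k$ gives $D(\mathcal{C}\mid\mathcal{P}_n)\ge D(\mathcal{C}^{(0)}_n\mid\mathcal{P}_n)$. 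As an equivalent alternative, one can argue by exchange: if $\mathcal{C}\neq\mathcal{C}^{(0)}_n$, swap some $t\in\mathcal{C}\setminus\mathcal{C}^{(0)}_n$ for some $s\in\mathcal{C}^{(0)}_n\setminus\mathcal{C}$. Because $\rho_{n,s}\le\rho_{n,t}$, the swap does not increase $D$, and each swap strictly increases $|\mathcal{C}\cap\mathcal{C}^{(0)}_n|$, so the process terminates at $\mathcal{C}^{(0)}_n$.

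The only point needing care is ties. I would state explicitly that $\mathcal{C}^{(0)}_n$ is \emph{a} global minimizer. It is the unique minimizer exactly when $\rho_{n,t_{K_2}}<\rho_{n,t_{K_2+1}}$. When these values tie, every minimizer has the same optimal value, and the deterministic tie-breaking simply selects one of them, which matches the remark in the notation section. No property of the cosine geometry is needed beyond the fact that each $\rho_{n,t}$ is a well-defined real number, which holds because $\mathcal{P}_n$ is nonempty whenever $K_1\ge1$. If $K_1=0$, the max over an empty set should be read as the convention $-\infty$ (or as a constant), and then every set is trivially optimal.
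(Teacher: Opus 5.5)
Your proposal is correct and takes essentially the same approach as the paper. The paper's proof just notes that $D(\mathcal{C}\mid\mathcal{P}_n)$ is a sum of fixed per-token costs $\rho_{n,t}$, so choosing the $K_2$ smallest values gives the minimum; your sorted-index comparison ($i_k\ge k$), the exchange variant, and your remarks on ties and the degenerate case $K_1=0$ make that one-line additivity argument fully explicit.
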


\begin{proof}
Eq.~\eqref{eq:app-seedred} is a sum of independent scalar costs $\rho_{n,t}$ over the chosen indices.
Thus the minimum over all $K_2$-subsets is achieved by selecting the $K_2$ smallest $\rho_{n,t}$ values, \textit{i.e.}, Eq.~\eqref{eq:app-seed-bottom}.
\end{proof}

\paragraph{A coverage objective for spherical $K$-means.}
Define the directional coherence (coverage) objective
\begin{equation}
\mathcal{J}(\{\boldsymbol{\mu}_{n,k}\}_{k=1}^{K_2})
\triangleq
\sum_{t\in\mathcal{U}_n}\max_{k\in\{1,\dots,K_2\}}\mathbf{u}_{n,t}^\top \boldsymbol{\mu}_{n,k}.
\label{eq:app-kmeans-obj}
\end{equation}
Each summand in Eq.~\eqref{eq:app-kmeans-obj} rewards a token by its cosine similarity to the nearest center.
Maximizing $\mathcal{J}$ therefore encourages the centers to cover multiple modes in $\mathcal{U}_n$ rather than collapsing to a single region.

\begin{proposition}[To monotonically improve coverage of $\mathcal{U}_n$, spherical $K$-means performs coordinate ascent on $\mathcal{J}$]
\label{prop:kmeans-mono}
The alternating updates in Eqs.~\eqref{eq:app-kmeans-assign}--\eqref{eq:app-kmeans-update} do not decrease $\mathcal{J}$ in Eq.~\eqref{eq:app-kmeans-obj}.
Moreover, since $\mathcal{J}$ is bounded above, the procedure converges to a stationary point (a local optimum) of $\mathcal{J}$.
\end{proposition}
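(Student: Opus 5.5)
The plan is to write spherical $K$-means as alternating maximization of a two-block surrogate and read off monotonicity of $\mathcal{J}$. For assignments $s:\mathcal{U}_n\to\{1,\dots,K_2\}$ and unit centers $\{\boldsymbol{\mu}_{n,k}\}$, define
\[
\mathcal{L}(s,\{\boldsymbol{\mu}_{n,k}\}) \triangleq \sum_{t\in\mathcal{U}_n}\mathbf{u}_{n,t}^\top\boldsymbol{\mu}_{n,s(t)} .
\]
Two facts drive the argument. First, for any fixed centers, $\mathcal{L}(s,\boldsymbol{\mu})\le\mathcal{J}(\boldsymbol{\mu})$ for every $s$, with equality when $s$ is the nearest-center assignment of Eq.~\eqref{eq:app-kmeans-assign}. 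Second, each step of the algorithm maximizes $\mathcal{L}$ exactly in one block.

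\textbf{Step 1 (assignment).} With $\boldsymbol{\mu}^{(r-1)}$ fixed, the rule in Eq.~\eqref{eq:app-kmeans-assign} maximizes each summand separately. Hence $\mathcal{L}(s^{(r)},\boldsymbol{\mu}^{(r-1)})=\mathcal{J}(\boldsymbol{\mu}^{(r-1)})$.

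\textbf{Step 2 (update).} With $s^{(r)}$ fixed, $\mathcal{L}$ decouples over clusters as $\sum_k \boldsymbol{\mu}_{n,k}^\top \mathbf{S}_k$, where $\mathbf{S}_k=\sum_{t:s^{(r)}(t)=k}\mathbf{u}_{n,t}$. By Cauchy--Schwarz, $\boldsymbol{\mu}^\top\mathbf{S}_k\le\|\mathbf{S}_k\|_2$ over unit vectors, with equality at $\mathbf{S}_k/\|\mathbf{S}_k\|_2$. This is exactly Eq.~\eqref{eq:app-kmeans-update}, so $\mathcal{L}(s^{(r)},\boldsymbol{\mu}^{(r)})\ge\mathcal{L}(s^{(r)},\boldsymbol{\mu}^{(r-1)})$.

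\textbf{Step 3 (chaining).} Combining the two steps with the upper-bound fact gives
\[
\mathcal{J}(\boldsymbol{\mu}^{(r)})\ \ge\ \mathcal{L}(s^{(r)},\boldsymbol{\mu}^{(r)})\ \ge\ \mathcal{L}(s^{(r)},\boldsymbol{\mu}^{(r-1)})\ =\ \mathcal{J}(\boldsymbol{\mu}^{(r-1)}),
\]
which is the monotonicity claim.

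The main obstacle is the degenerate case in which a cluster becomes empty, or $\mathbf{S}_k=\mathbf{0}$, so that Eq.~\eqref{eq:app-kmeans-update} is undefined. I would handle it with the convention that such a center keeps its previous value. An empty cluster contributes nothing to $\mathcal{L}$, so every inequality above still holds. The case $\mathbf{S}_k=\mathbf{0}$ with a nonempty cluster is similar, because then $\boldsymbol{\mu}^\top\mathbf{S}_k=0$ for every unit $\boldsymbol{\mu}$, so keeping the old center is already optimal in that block. Seeding from distinct tokens means each seed token can be assigned to its own center, but ties must be broken deterministically, as the paper's tie-breaking convention allows.

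\textbf{Convergence.} Each summand of $\mathcal{J}$ is at most $1$, so $\mathcal{J}\le|\mathcal{U}_n|$, and the monotone sequence $\mathcal{J}(\boldsymbol{\mu}^{(r)})$ converges. For a sharper statement, note that there are finitely many assignments $s$, and with deterministic tie-breaking the centers are a function of $s$. Take ties in Eq.~\eqref{eq:app-kmeans-assign} broken in favor of the current assignment; the two-block argument still applies. Then a strict increase is needed to change the assignment, so no assignment can recur. The iteration therefore becomes fixed after finitely many steps, at a pair $(s,\boldsymbol{\mu})$ that is optimal in each block separately. This is the sense of ``stationary point / local optimum'' in the proposition: a coordinate-wise optimum of $\mathcal{L}$, not a global maximum of $\mathcal{J}$.
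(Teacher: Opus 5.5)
Your proof is correct and follows the paper's coordinate-ascent argument: the nearest-center assignment maximizes each summand, Cauchy--Schwarz shows the normalized cluster sum is the optimal unit center, and $\mathcal{J}\le|\mathcal{U}_n|$ gives convergence of the objective values. Your surrogate $\mathcal{L}(s,\boldsymbol{\mu})$ makes the argument more precise than the paper's, since $\mathcal{J}$ depends only on the centers, so the paper's step (a) is really a statement about $\mathcal{L}$. Your empty-cluster convention, your finite-termination argument, and your caveat that the limit is a blockwise optimum of $\mathcal{L}$ (not necessarily a local maximum of $\mathcal{J}$) are also more careful than the paper's ``monotone ascent implies convergence,'' which by itself only gives convergence of the objective values.
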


\begin{proof}
We verify monotonicity in the same two-step order as the algorithm.

\begin{enumerate}
\item[(a)] \textbf{Assignment improves $\mathcal{J}$ for fixed centers.}
For fixed $\{\boldsymbol{\mu}_{n,k}\}$, choosing $s^{(r)}(t)=\arg\max_k \mathbf{u}_{n,t}^\top\boldsymbol{\mu}_{n,k}$
maximizes each summand in Eq.~\eqref{eq:app-kmeans-obj}, guaranteeing a non-decreasing objective value, hence cannot decrease $\mathcal{J}$.

\item[(b)] \textbf{Update improves $\mathcal{J}$ for fixed assignments.}
For fixed assignments, the contribution of cluster $k$ equals
$\sum_{t:s^{(r)}(t)=k}\mathbf{u}_{n,t}^\top \boldsymbol{\mu}_{n,k}$ subject to $\|\boldsymbol{\mu}_{n,k}\|_2=1$.
By Cauchy--Schwarz, this is maximized by setting $\boldsymbol{\mu}_{n,k}$ proportional to $\sum_{t:s^{(r)}(t)=k}\mathbf{u}_{n,t}$
and normalizing, yielding the optimal directional center, \textit{i.e.}, Eq.~\eqref{eq:app-kmeans-update}.
\end{enumerate}

Since each iteration is non-decreasing and $\mathcal{J}\le |\mathcal{U}_n|$ (each term $\le 1$), monotone ascent implies convergence.
\end{proof}

% ============================================================
\subsection{Geometric diversity guarantees induced by redundancy-aware seeding}
\label{sec:app-geometry}

\begin{lemma}[To translate cosine redundancy into geometry, we relate it to Euclidean distance on the unit sphere]
\label{lem:cos-euclid}
For any unit vectors $\mathbf{u},\mathbf{v}$, we directly have $\|\mathbf{u}-\mathbf{v}\|_2^2 = 2(1-\mathbf{u}^\top\mathbf{v})$.
\end{lemma}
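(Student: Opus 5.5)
The plan is a direct computation that expands the squared Euclidean norm through the inner product. No earlier result from the paper is needed beyond the unit-normalization convention in Eq.~\eqref{eq:app-unit}, which guarantees $\|\mathbf{u}\|_2=\|\mathbf{v}\|_2=1$ for the features to which the lemma is applied.

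First, write $\|\mathbf{u}-\mathbf{v}\|_2^2=(\mathbf{u}-\mathbf{v})^\top(\mathbf{u}-\mathbf{v})$. Using bilinearity and symmetry of the standard inner product on $\mathbb{R}^d$, expand this as
\[
\|\mathbf{u}-\mathbf{v}\|_2^2=\mathbf{u}^\top\mathbf{u}-2\,\mathbf{u}^\top\mathbf{v}+\mathbf{v}^\top\mathbf{v}.
\]
Second, substitute $\mathbf{u}^\top\mathbf{u}=\|\mathbf{u}\|_2^2=1$ and $\mathbf{v}^\top\mathbf{v}=1$. This gives $2-2\,\mathbf{u}^\top\mathbf{v}=2(1-\mathbf{u}^\top\mathbf{v})$, which is the claimed identity.

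There is no genuine obstacle; the only point requiring care is to invoke unit norm explicitly, since the identity fails otherwise. As a remark for later use in Section~\ref{sec:app-geometry}, I would note two consequences. Since $\mathbf{u}^\top\mathbf{v}\in[-1,1]$, the map $\mathbf{u}^\top\mathbf{v}\mapsto\sqrt{2(1-\mathbf{u}^\top\mathbf{v})}$ is strictly decreasing. Hence a bound $\rho_{n,t}\le\rho$ on pivot redundancy translates into the distance bound $\min_{j\in\mathcal{P}_n}\|\mathbf{u}_{n,t}-\mathbf{u}_{n,j}\|_2\ge\sqrt{2(1-\rho)}$, by applying the identity to each pivot and taking the maximum cosine.
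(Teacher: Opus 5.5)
Your proof is correct and matches the paper's argument: expand $\|\mathbf{u}-\mathbf{v}\|_2^2$ through the inner product and substitute $\|\mathbf{u}\|_2=\|\mathbf{v}\|_2=1$. Your closing remark, turning a bound $\rho_{n,t}\le\rho$ into the pivot-distance bound $\sqrt{2(1-\rho)}$, is also correct and is exactly how the paper uses the lemma in Corollary~\ref{cor:separation}.
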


\begin{proof}
Expand
$\|\mathbf{u}-\mathbf{v}\|_2^2=\|\mathbf{u}\|_2^2+\|\mathbf{v}\|_2^2-2\mathbf{u}^\top\mathbf{v}=2-2\mathbf{u}^\top\mathbf{v}$.
\end{proof}

\begin{corollary}[To ensure seeds start outside pivot neighborhoods, separation is bounded by redundancy thresholds]
\label{cor:separation}
Let $\mathcal{C}^{(0)}_n$ be the seed set in Eq.~\eqref{eq:app-seed-bottom}.
Define the (deterministic) redundancy threshold
\begin{equation}
\delta_n \triangleq \max_{t\in\mathcal{C}^{(0)}_n}\rho_{n,t},
\label{eq:app-delta}
\end{equation}
which is equivalently the $K_2$-th smallest value among $\{\rho_{n,t}\}_{t\in\mathcal{U}_n}$.
Then for every seed $\mathbf{c}\in\{\mathbf{u}_{n,t}:t\in\mathcal{C}^{(0)}_n\}$, which are chosen to minimize the redundancy with the pre-selected pivots, and any pivot $j\in\mathcal{P}_n$,
\begin{equation}
\mathbf{c}^\top \mathbf{u}_{n,j}\le \delta_n
\qquad\text{and}\qquad
\|\mathbf{c}-\mathbf{u}_{n,j}\|_2 \ge \sqrt{2(1-\delta_n)}.
\label{eq:app-sep}
\end{equation}
\end{corollary}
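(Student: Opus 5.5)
The plan has two steps: bound the cosine overlap of each seed with every pivot, then convert that bound into a Euclidean distance using Lemma~\ref{lem:cos-euclid}.

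\textbf{Step 1: cosine bound.} Fix a seed index $t\in\mathcal{C}^{(0)}_n$, its unit feature $\mathbf{c}=\mathbf{u}_{n,t}$, and any pivot $j\in\mathcal{P}_n$. By the definition of redundancy in Eq.~\eqref{eq:app-redundancy}, $\rho_{n,t}$ is the maximum of $\mathbf{u}_{n,t}^\top\mathbf{u}_{n,j'}$ over all $j'\in\mathcal{P}_n$. Hence
\[
\mathbf{c}^\top\mathbf{u}_{n,j}\le\rho_{n,t}.
\]
By the definition of $\delta_n$ in Eq.~\eqref{eq:app-delta}, $\rho_{n,t}\le\delta_n$, since $t\in\mathcal{C}^{(0)}_n$. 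This gives the first inequality in Eq.~\eqref{eq:app-sep}.

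I would also justify the remark that $\delta_n$ equals the $K_2$-th smallest value of $\{\rho_{n,t}\}_{t\in\mathcal{U}_n}$. The set $\mathcal{C}^{(0)}_n$ consists of the $K_2$ smallest $\rho$ values (Eq.~\eqref{eq:app-seed-bottom}, Proposition~\ref{prop:seed-opt}), so its maximum is the $K_2$-th order statistic. Ties are harmless, because the tie-breaking convention fixed earlier does not change this value.

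\textbf{Step 2: Euclidean bound.} Both $\mathbf{c}$ and $\mathbf{u}_{n,j}$ are unit vectors by Eq.~\eqref{eq:app-unit}, so Lemma~\ref{lem:cos-euclid} applies and gives
\[
\|\mathbf{c}-\mathbf{u}_{n,j}\|_2^2=2(1-\mathbf{c}^\top\mathbf{u}_{n,j}).
\]
Substituting the cosine bound from Step 1 yields $\|\mathbf{c}-\mathbf{u}_{n,j}\|_2^2\ge 2(1-\delta_n)$. Here $1-\delta_n\ge0$ because $\delta_n\le1$ for cosine similarities, so the square root is well defined. Since the square root is monotone on nonnegative reals, taking roots gives the second inequality in Eq.~\eqref{eq:app-sep}.

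The only points needing care are that these features are well-defined unit vectors, which requires $\tilde{\mathbf{z}}_{n,t}\neq0$ (implicit in Eq.~\eqref{eq:app-unit}), and that the pivot set is nonempty. If $K_1=0$, then $\mathcal{P}_n=\emptyset$ and the claim holds vacuously over $j\in\mathcal{P}_n$, though $\rho$ is undefined and would be set to $-\infty$ by convention. I would state this edge case briefly; otherwise there is no real obstacle.
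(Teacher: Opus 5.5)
Your proof is correct and follows the paper's argument: bound $\mathbf{c}^\top\mathbf{u}_{n,j}$ by $\rho_{n,t}$ using the max in its definition, bound $\rho_{n,t}$ by $\delta_n$ because $t$ is a seed, then convert to Euclidean distance via Lemma~\ref{lem:cos-euclid}. Your extra remarks (the order-statistic reading of $\delta_n$, $\delta_n\le 1$ so the square root is defined, and the empty-pivot edge case) are sound details that the paper leaves implicit.
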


\begin{proof}
By construction of $\mathcal{C}^{(0)}_n$, each selected seed index $t\in\mathcal{C}^{(0)}_n$ satisfies $\rho_{n,t}\le \delta_n$.
Since $\rho_{n,t}=\max_{j\in\mathcal{P}_n}\mathbf{u}_{n,t}^\top\mathbf{u}_{n,j}$, we have
$\mathbf{u}_{n,t}^\top\mathbf{u}_{n,j}\le \rho_{n,t}\le \delta_n$ for all pivots $j$.
Applying Lemma~\ref{lem:cos-euclid} yields the Euclidean bound.
\end{proof}

\paragraph{Interpretation.}
Eq.~\eqref{eq:app-sep} formalizes what redundancy-aware seeding enforces geometrically:
initial centers lie outside a cosine-similarity ``cap'' around each pivot, guaranteeing a minimum Euclidean distance to the pivot set.
Spherical $K$-means then refines these centers to better represent modes of $\mathcal{U}_n$ without relying on random initialization.

% ============================================================
\subsection{Computational complexity}
\label{sec:appendix-complexity}

Let $M_n=|\mathcal{V}_n|$, embedding dimension be $d$, and $K_1=\lfloor a_n R\rfloor$, $K_2=R-K_1$.
We summarize the additional overhead incurred by pruning beyond a standard forward pass in the same step-wise style.

\begin{enumerate}
\item \textbf{Layer fusion (Eq.~\eqref{eq:app-fusion}).}
Computing $\{\bar{\mathbf{z}}_{n,t}\}_{t\in\mathcal{V}_n}$ costs $O(L\,M_n\,d_v)$: a single weighted sum over cached layer outputs.

\item \textbf{Saliency top-$K_1$ (Eq.~\eqref{eq:app-pivots}).}
Selecting $\mathrm{Top}_{K_1}$ costs $O(M_n\log K_1)$ via partial sort / heap.

\item \textbf{Redundancy computation for seeding (Eq.~\eqref{eq:app-redundancy}).}
Naively, computing $\rho_{n,t}=\max_{j\in\mathcal{P}_n}\mathbf{u}_{n,t}^{\top}\mathbf{u}_{n,j}$ for all $t\in\mathcal{U}_n$ costs
$O((M_n-K_1)K_1 d)$.
This can be implemented as a matrix multiplication between
$\mathbf{U}_{\mathcal{U}}\in\mathbb{R}^{(M_n-K_1)\times d}$ and
$\mathbf{U}_{\mathcal{P}}\in\mathbb{R}^{K_1\times d}$ followed by a row-wise max, yielding the required redundancy values.

\item \textbf{Bottom-$K_2$ seeding (Eq.~\eqref{eq:app-seed-bottom}).}
Selecting $\mathrm{Bottom}_{K_2}$ costs $O((M_n-K_1)\log K_2)$.

\item \textbf{Spherical $K$-means refinement (Eqs.~\eqref{eq:app-kmeans-assign}--\eqref{eq:app-kmeans-update}).}
Each iteration costs $O((M_n-K_1)K_2 d)$ for similarity evaluation/assignment plus $O((M_n-K_1)d)$ for accumulating cluster sums and normalization.
Over $T$ iterations, the refinement cost is $O(T(M_n-K_1)K_2 d)$, which remains efficient for small iteration counts.

\item \textbf{Medoid selection (Eq.~\eqref{eq:app-medoid}).}
Computing similarities of each token to its final cluster center costs $O((M_n-K_1)d)$.
\end{enumerate}

Memory overhead is dominated by storing normalized token matrices, \textit{e.g.}, $O((M_n-K_1)d)$ if $\mathbf{U}_{\mathcal{U}}$ is materialized.
Since $K_2\le R$ and $T$ is a small constant (\textit{e.g.}, $T{=}5$), the refinement term remains lightweight compared to the quadratic attention cost
of processing long visual sequences in the decoder, typically dominating the total runtime.

\section{More Experiment}
\label{sec:appendix-method-2}

\begin{figure}[h]
    \centering
    \includegraphics[width=0.95\linewidth]{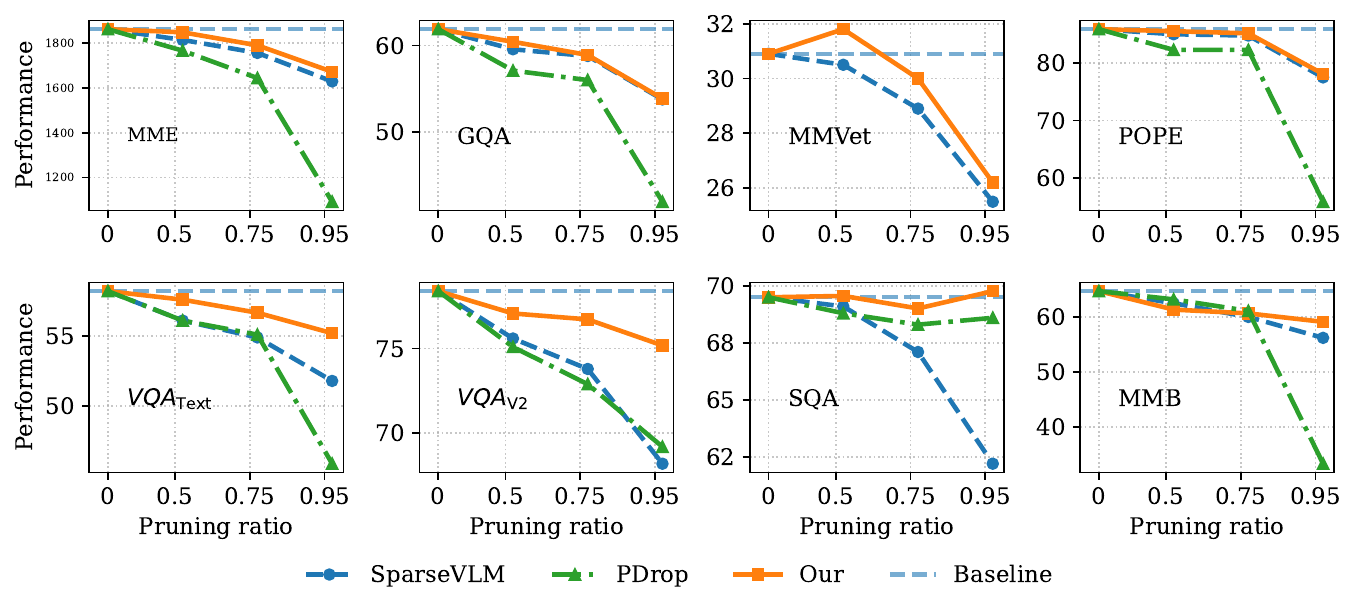}
    \caption{\textbf{Per-benchmark performance under increasing token pruning.}
    We plot performance as a function of pruning ratio on eight evaluation suites, comparing our method with representative pruning baselines (SparseVLM and PDrop).
    The dashed line denotes the unpruned model performance.
    Our method exhibits the slowest degradation and stays consistently closest to the unpruned upper bound, particularly at aggressive pruning ratios.}
    \label{fig-Visual-pDrop-sparevlm}
\end{figure}

\paragraph{Ablation of Core Components.}
To isolate and quantify the individual contributions of our proposed modules, we conduct a component-wise ablation study on LLaVA-v1.5-7B under a fixed retention budget of $R=192$.
Table~\ref{tab:core_ablation} presents the performance of the baseline pruning model, the model augmented solely with Class-Adaptive Layer Fusion (+Fusion), the model utilizing only the Category-Dependent Split Ratio (+Ratio), and the complete CLASP framework combining both (+Both).

The results demonstrate that each component independently provides robust improvements over the baseline. 
Introducing the class-adaptive layer fusion yields substantial gains (\textit{e.g.}, $+93$ points on MME and $+2.0$ on GQA), highlighting the critical importance of dynamically matching the visual abstraction depth to the user's intent. 
Similarly, adjusting the relevance-coverage pruning budget based on the prompt category (+Ratio) consistently boosts performance across all tasks. 
Ultimately, the integration of both components achieves the highest accuracy across all metrics (\textit{e.g.}, MME reaching 1848 and GQA reaching 60.4), proving that optimal visual representation extraction and adaptive budget allocation are highly complementary.

\begin{table}[ht]
    \centering
    \footnotesize
    \caption{\textbf{Ablation study of core components.} We evaluate the individual and synergistic effects of Class-Adaptive Layer Fusion (+Fusion) and Category-Dependent Split Ratio (+Ratio) on LLaVA-v1.5-7B ($R=192$). Both components contribute significantly, and their combination yields the best overall performance.}
    \label{tab:core_ablation}
    \begin{tabular}{l c c c c}
        \toprule
        \textbf{Config} & \textbf{MME} & \textbf{TextVQA} & \textbf{GQA} & \textbf{POPE} \\
        \midrule
        Baseline       & 1721 & 56.1 & 57.6 & 83.6 \\
        + Fusion       & 1814 & 56.9 & 59.6 & 85.0 \\
        + Ratio        & 1787 & 56.6 & 58.4 & 84.7 \\
        + Both (CLASP) & \textbf{1848} & \textbf{57.6} & \textbf{60.4} & \textbf{85.6} \\
        \bottomrule
    \end{tabular}
\end{table}

% \begin{figure}[t]
%     \centering
%     \includegraphics[width=0.98\linewidth]{figures/datasets_plots.pdf}
%     \caption{Example visualization of the original image and the corresponding segmentation2.}
%     \label{fig-dataset_plot}
% \endback

\paragraph{Sensitivity analysis of pruning ratios across diverse benchmarks.}
Figure~\ref{fig-Visual-pDrop-sparevlm} provides a comprehensive visualization of performance trajectories under varying degrees of pruning ratio (ranging from $0\%$ to $95\%$).
We compare our proposed method against two representative state-of-the-art baselines: SparseVLM and PDrop.
Three key observations can be drawn from these results.
First, our method (orange line) consistently forms the upper envelope of the performance curves across all eight distinct evaluation suites, demonstrating superior token retention capabilities compared to the baselines.
Second, we observe a phenomenon of graceful degradation: while competing methods, particularly PDrop (green), suffer from catastrophic performance collapse in the high-pruning regime (pruning ratio $>0.75$), our approach maintains remarkably robust efficacy.
For instance, on the MME and GQA benchmarks, even when discarding $95\%$ of visual tokens, our method retains a significant portion of the unpruned baseline accuracy, whereas PDrop approaches severely degraded performance.
Third, in challenging reasoning tasks such as MMVet (note that PDrop results are unavailable for this benchmark) and SQA, our method exhibits high resilience, significantly widening the performance gap against SparseVLM as the compression rate increases.
These findings corroborate that our similarity-based selection strategy effectively isolates and preserves semantically critical {visual regions, ensuring model reliability even under extreme computational constraints.

\paragraph{Performance analysis on a larger language model.}
To further evaluate the robustness of our method under extreme token scarcity, we compare CLASP against SparseVLM across four distinct pruning levels (retaining 192, 128, 64, and 32 tokens). 
Table~\ref{tab1:main_table-13B} details these results. 
CLASP consistently outperforms SparseVLM across all compression ratios, demonstrating superior information retention capabilities. 
Specifically, in the high-retention regime (192 tokens), CLASP achieves an average performance retention of 98.2\%, closely matching the upper bound.
At the aggressive pruning level of 64 tokens (88.9\% reduction), CLASP maintains 92.5\% of the original performance, significantly surpassing SparseVLM's 87.3\%.
Even under the extreme constraint of 32 tokens, where the baseline performance drops sharply to 80.3\%, CLASP sustains a robust 87.5\%. 
This trend is particularly evident in fine-grained tasks such as POPE and TextVQA, suggesting that our method effectively preserves critical visual semantics.
\begin{table*}[t]
    \centering
    \footnotesize
    \caption{Performance comparison of various methods on LLaVA-v1.5-13B across different benchmarks. Results are shown for different pruning ratios, with accuracy and average performance highlighted. Best results in \textcolor{MidnightBlue}{\textbf{blue}}.}
    \label{tab1:main_table-13B}
    % 根据需要调整缩放比例，标准全栏表格通常不需要 resizebox 或者只需微调
    \resizebox{0.7\linewidth}{!}{
    \begin{tabular}{l | *{6}{c} | c}
        \toprule
        \textbf{\;Methods} & \textbf{GQA} & \textbf{MMB} & \textbf{MME} & \textbf{POPE} & \textbf{SQA} & \textbf{VQA}$_{\text{Text}}$ & \textbf{Average}\\
        \midrule
        \textcolor{gray}{Upper Bound} & \textcolor{gray}{63.3} & \textcolor{gray}{68.9} & \textcolor{gray}{1818} & \textcolor{gray}{85.9} & \textcolor{gray}{72.8} & \textcolor{gray}{61.8} & \textcolor{gray}{100.0\%} \\
        \midrule
        
        \rowcolor{mygray}
        LLaVA-1.5 \textcolor{gray}{13B} & \multicolumn{7}{c}{\textit{Retain 192 Tokens} \ $\fg{(\downarrow 66.7\%)}$}\\
        SparseVLM & 58.7 & 67.4 & 1768 & 82.2 & \textcolor{MidnightBlue}{\textbf{73.1}} & \textcolor{MidnightBlue}{\textbf{59.6}} & 96.9\% \\
        \rowcolor{mygreen2}
        CLASP (ours) & \textcolor{MidnightBlue}{\textbf{60.1}} & \textcolor{MidnightBlue}{\textbf{68.0}} & \textcolor{MidnightBlue}{\textbf{1785}} & \textcolor{MidnightBlue}{\textbf{86.2}} & 72.8 & \textcolor{MidnightBlue}{\textbf{59.6}} & \textcolor{MidnightBlue}{\textbf{98.2\%}} \\
        
        \midrule
        \rowcolor{mygray}
        LLaVA-1.5 \textcolor{gray}{13B} & \multicolumn{7}{c}{\textit{Retain 128 Tokens} \ $\fg{(\downarrow 77.8\%)}$}\\
        SparseVLM & 57.9 & 65.8 & 1774 & 81.1 & 69.9 & 58.4 & 95.0\% \\
        \rowcolor{mygreen2}
        CLASP (ours) & \textcolor{MidnightBlue}{\textbf{59.6}} & \textcolor{MidnightBlue}{\textbf{68.0}} & \textcolor{MidnightBlue}{\textbf{1789}} & \textcolor{MidnightBlue}{\textbf{85.7}} & \textcolor{MidnightBlue}{\textbf{73.3}} & \textcolor{MidnightBlue}{\textbf{59.0}} & \textcolor{MidnightBlue}{\textbf{98.0\%}} \\
        
        \midrule
        \rowcolor{mygray}
        LLaVA-1.5 \textcolor{gray}{13B} & \multicolumn{7}{c}{\textit{Retain 64 Tokens} \ $\fg{(\downarrow 88.9\%)}$}\\
        SparseVLM & 54.0 & 61.3 & 1641 & 65.0 & 69.0 & 54.6 & 87.3\% \\
        \rowcolor{mygreen2}
        CLASP (ours) & \textcolor{MidnightBlue}{\textbf{56.2}} & \textcolor{MidnightBlue}{\textbf{64.4}} & \textcolor{MidnightBlue}{\textbf{1675}} & \textcolor{MidnightBlue}{\textbf{77.2}} & \textcolor{MidnightBlue}{\textbf{72.8}} & \textcolor{MidnightBlue}{\textbf{55.6}} & \textcolor{MidnightBlue}{\textbf{92.5\%}} \\
        
        \midrule
        \rowcolor{mygray}
        LLaVA-1.5 \textcolor{gray}{13B} & \multicolumn{7}{c}{\textit{Retain 32 Tokens} \ $\fg{(\downarrow 94.4\%)}$}\\
        SparseVLM & 50.2 & 56.2 & 1451 & 55.9 & 67.9 & 50.7 & 80.3\% \\
        \rowcolor{mygreen2}
        CLASP (ours) & \textcolor{MidnightBlue}{\textbf{53.5}} & \textcolor{MidnightBlue}{\textbf{60.7}} & \textcolor{MidnightBlue}{\textbf{1592}} & \textcolor{MidnightBlue}{\textbf{68.7}} & \textcolor{MidnightBlue}{\textbf{71.9}} & \textcolor{MidnightBlue}{\textbf{52.6}} & \textcolor{MidnightBlue}{\textbf{87.5\%}} \\
        \bottomrule
    \end{tabular}
    }
\end{table*}

\paragraph{Performance analysis across varying token budgets.}
Tables~\ref{tab1:main_table1-7B} and~\ref{tab1:main_table1-13B} compare CLASP against SparseVLM on LLaVA-NeXT-7B and LLaVA-NeXT-13B under three budgets (640/320/160).
Across all benchmarks, CLASP consistently surpasses the baseline and exhibits graceful degradation as the budget shrinks.
With high retention (640 tokens), CLASP approaches the unpruned upper bound, reaching 97.0\% (7B) and 97.2\% (13B) average performance, improving over SparseVLM by 2.6 and 2.3 points, respectively.
As sparsity increases, the advantage becomes more pronounced: at 320 tokens, CLASP attains 95.2\% vs.\ 90.0\% on 7B (+5.2) and 95.6\% vs.\ 92.7\% on 13B (+2.9);
under the most aggressive setting (160 tokens), CLASP maintains 92.2\% (7B) and 92.4\% (13B), while SparseVLM drops to 84.5\% and 88.5\%, widening the gap to 7.7 and 3.9 points.
Notably, CLASP yields consistent gains on grounding- and reasoning-sensitive benchmarks (\textit{e.g.}, GQA/MME) under heavy pruning, indicating that it better preserves critical visual evidence for multi-step inference.

\begin{table*}[t]
    \centering
    \footnotesize
    \caption{Performance comparison of various methods on LLaVA-NeXT-7B across different benchmarks. Results are shown for different pruning ratios, with accuracy and average performance highlighted. Best results in \textcolor{MidnightBlue}{\textbf{blue}}.}
    \label{tab1:main_table1-7B}
    % Resize to fit width
    \resizebox{0.7\linewidth}{!}{
    \begin{tabular}{l | *{6}{c} | c}
        \toprule
        \textbf{\;Methods} & \textbf{GQA} & \textbf{MMB} & \textbf{MME} & \textbf{POPE} & \textbf{SQA} & \textbf{VQA}$_{\text{Text}}$ & \textbf{Average}\\
        \midrule
        \textcolor{gray}{Upper Bound} & \textcolor{gray}{64.2} & \textcolor{gray}{67.4} & \textcolor{gray}{1851} & \textcolor{gray}{86.5} & \textcolor{gray}{70.1} & \textcolor{gray}{64.9} & \textcolor{gray}{100.0\%} \\
        \midrule
        
        \rowcolor{mygray}
        LLaVA-NeXT \textcolor{gray}{7B} & \multicolumn{7}{c}{\textit{Retain 640 Tokens}}\\
        SparseVLM & 61.2 & 62.2 & 1697 & 85.3 & 67.6 & 59.7 & 94.4\% \\
        \rowcolor{mygreen2}
        CLASP (ours) & \textcolor{MidnightBlue}{\textbf{63.1}} & \textcolor{MidnightBlue}{\textbf{62.8}} & \textcolor{MidnightBlue}{\textbf{1746}} & \textcolor{MidnightBlue}{\textbf{87.9}} & \textcolor{MidnightBlue}{\textbf{69.8}} & \textcolor{MidnightBlue}{\textbf{61.8}} & \textcolor{MidnightBlue}{\textbf{97.0\%}} \\
        
        \midrule
        \rowcolor{mygray}
        LLaVA-NeXT \textcolor{gray}{7B} & \multicolumn{7}{c}{\textit{Retain 320 Tokens}}\\
        SparseVLM & 56.1 & 60.6 & 1533 & 82.4 & 66.1 & 58.4 & 90.0\% \\
        \rowcolor{mygreen2}
        CLASP (ours) & \textcolor{MidnightBlue}{\textbf{62.7}} & \textcolor{MidnightBlue}{\textbf{61.2}} & \textcolor{MidnightBlue}{\textbf{1723}} & \textcolor{MidnightBlue}{\textbf{85.8}} & \textcolor{MidnightBlue}{\textbf{67.0}} & \textcolor{MidnightBlue}{\textbf{61.7}} & \textcolor{MidnightBlue}{\textbf{95.2\%}} \\
        
        \midrule
        \rowcolor{mygray}
        LLaVA-NeXT \textcolor{gray}{7B} & \multicolumn{7}{c}{\textit{Retain 160 Tokens}}\\
        SparseVLM & 55.8 & 56.9 & 1420 & 78.2 & 57.8 & 55.9 & 84.5\% \\
        \rowcolor{mygreen2}
        CLASP (ours) & \textcolor{MidnightBlue}{\textbf{62.1}} & \textcolor{MidnightBlue}{\textbf{59.8}} & \textcolor{MidnightBlue}{\textbf{1699}} & \textcolor{MidnightBlue}{\textbf{84.5}} & \textcolor{MidnightBlue}{\textbf{60.9}} & \textcolor{MidnightBlue}{\textbf{59.3}} & \textcolor{MidnightBlue}{\textbf{92.2\%}} \\
        \bottomrule
    \end{tabular}
    }
\end{table*}

\begin{table*}[t]
    \centering
    \footnotesize
    \caption{Performance comparison of various methods on LLaVA-NeXT-13B across different benchmarks. Results are shown for different pruning ratios, with accuracy and average performance highlighted. Best results in \textcolor{MidnightBlue}{\textbf{blue}}.}
    \label{tab1:main_table1-13B}
    % Resize to fit width
    \resizebox{0.7\linewidth}{!}{
    \begin{tabular}{l | *{6}{c} | c}
        \toprule
        \textbf{\;Methods} & \textbf{GQA} & \textbf{MMB} & \textbf{MME} & \textbf{POPE} & \textbf{SQA} & \textbf{VQA}$_{\text{Text}}$ & \textbf{Average}\\
        \midrule
        \textcolor{gray}{Upper Bound} & \textcolor{gray}{65.4} & \textcolor{gray}{70.0} & \textcolor{gray}{1901} & \textcolor{gray}{86.2} & \textcolor{gray}{73.5} & \textcolor{gray}{64.3} & \textcolor{gray}{100.0\%} \\
        \midrule
        
        \rowcolor{mygray}
        LLaVA-NeXT \textcolor{gray}{13B} & \multicolumn{7}{c}{\textit{Retain 640 Tokens}}\\
        SparseVLM & 62.7 & 62.0 & 1821 & 86.0 & 71.5 & 59.3 & 94.9\% \\
        \rowcolor{mygreen2}
        CLASP (ours) & \textcolor{MidnightBlue}{\textbf{64.2}} & \textcolor{MidnightBlue}{\textbf{63.2}} & \textcolor{MidnightBlue}{\textbf{1867}} & \textcolor{MidnightBlue}{\textbf{86.8}} & \textcolor{MidnightBlue}{\textbf{73.5}} & \textcolor{MidnightBlue}{\textbf{61.5}} & \textcolor{MidnightBlue}{\textbf{97.2\%}} \\
        
        \midrule
        \rowcolor{mygray}
        LLaVA-NeXT \textcolor{gray}{13B} & \multicolumn{7}{c}{\textit{Retain 320 Tokens}}\\
        SparseVLM & 60.9 & 60.8 & 1798 & 83.5 & 70.6 & 57.2 & 92.7\% \\
        \rowcolor{mygreen2}
        CLASP (ours) & \textcolor{MidnightBlue}{\textbf{63.0}} & \textcolor{MidnightBlue}{\textbf{62.8}} & \textcolor{MidnightBlue}{\textbf{1848}} & \textcolor{MidnightBlue}{\textbf{85.2}} & \textcolor{MidnightBlue}{\textbf{71.1}} & \textcolor{MidnightBlue}{\textbf{60.8}} & \textcolor{MidnightBlue}{\textbf{95.6\%}} \\
        
        \midrule
        \rowcolor{mygray}
        LLaVA-NeXT \textcolor{gray}{13B} & \multicolumn{7}{c}{\textit{Retain 160 Tokens}}\\
        SparseVLM & 59.4 & 56.5 & 1755 & 81.1 & 66.2 & 53.3 & 88.5\% \\
        \rowcolor{mygreen2}
        CLASP (ours) & \textcolor{MidnightBlue}{\textbf{61.8}} & \textcolor{MidnightBlue}{\textbf{59.8}} & \textcolor{MidnightBlue}{\textbf{1801}} & \textcolor{MidnightBlue}{\textbf{83.7}} & \textcolor{MidnightBlue}{\textbf{68.1}} & \textcolor{MidnightBlue}{\textbf{57.8}} & \textcolor{MidnightBlue}{\textbf{92.4\%}} \\
        \bottomrule
    \end{tabular}
    }
\end{table*}

\textbf{Performance analysis on InternVL2-26B.} 
To further validate the scalability and robustness of our class-adaptive pruning framework on more recent and massive multimodal foundation models, we extended our evaluation to InternVL2-26B~\cite{chen2024internvl}. As shown in Table~\ref{tab:internvl_results}, we compared CLASP against existing token reduction baselines (FastV and ToMe) under a strict retention budget of $R=35\%$. Our method demonstrates exceptional information preservation capabilities at this massive scale. For example, on TextVQA and MMVet, CLASP retains high accuracy ($81.7$ and $63.2$), substantially outperforming both FastV ($75.6$ and $45.0$) and ToMe ($75.7$ and $52.5$). Notably, on GQA, CLASP ($65.0$) even marginally exceeds the unpruned upper bound ($64.9$), suggesting that our redundancy-aware pruning effectively acts as a noise filter, benefiting complex compositional reasoning. These supplementary results confirm that CLASP remains highly effective and generalizable, regardless of the underlying model's parameter size.

\begin{table*}[t]
    \centering
    \footnotesize
    \caption{Performance comparison on InternVL2-26B across widely-used benchmarks. Results are shown for an aggressive pruning ratio (retaining $35\%$ of visual tokens), with accuracy highlighted. Best results in \textcolor{MidnightBlue}{\textbf{blue}}.}
    \label{tab:internvl_results}
    % Resize to fit width (adjusted slightly for fewer columns)
    \resizebox{0.6\linewidth}{!}{
    \begin{tabular}{l | *{4}{c}}
        \toprule
        \textbf{\;Methods} & \textbf{TextVQA} & \textbf{MME} & \textbf{GQA} & \textbf{MMVet} \\
        \midrule
        \textcolor{gray}{Upper Bound} & \textcolor{gray}{82.5} & \textcolor{gray}{2270} & \textcolor{gray}{64.9} & \textcolor{gray}{64.0} \\
        \midrule
        
        \rowcolor{mygray}
        InternVL2 \textcolor{gray}{26B} & \multicolumn{4}{c}{\textit{Retain 35\% Tokens}}\\
        FastV & 75.6 & 2140 & 61.2 & 45.0 \\
        ToMe  & 75.7 & 2178 & 63.6 & 52.5 \\
        \rowcolor{mygreen2}
        CLASP (ours) & \textcolor{MidnightBlue}{\textbf{81.7}} & \textcolor{MidnightBlue}{\textbf{2262}} & \textcolor{MidnightBlue}{\textbf{65.0}} & \textcolor{MidnightBlue}{\textbf{63.2}} \\
        \bottomrule
    \end{tabular}
    }
\end{table*}

\newcommand{\best}[1]{\textcolor{MidnightBlue}{\textbf{#1}}}

\begin{table*}[t]
    \centering
    \footnotesize
    \caption{
        \textbf{Ablation on layer-mixture strategies across different task categories with $R=192$.}
        The columns correspond to specific task types: 
        (0)~Object Identification, (1)~Attribute/Breed ID, (2)~Text/Symbol Recognition, 
        (3)~Scene Understanding, (4)~Spatial Relations, (5)~Counting, 
        (6)~Action/Interaction, (7)~Intention/Function, and (8)~Default.
        \textbf{Layer Mixture Legend:} 
        A:~$0.2 L_2 + 0.3 L_{6} + 0.5 L_{11}$; 
        B:~$0.2 L_5 + 0.3 L_{15} + 0.5 L_{22}$; 
        C:~$0.2 L_{12} + 0.3 L_{15} + 0.5 L_{19}$; 
        D:~$0.2 L_5 + 0.8 L_{22}$; 
        E:~$0.2 L_{20} + 0.8 L_{22}$.
        Best results are highlighted in \textcolor{MidnightBlue}{\textbf{blue}}, and the row with the highest average score in each dataset is shaded.
    }
    \label{tab:ablate_layer_mix_qtype_vbars}
    \resizebox{\linewidth}{!}{
    \begin{tabular}{l|ccccccccc|c}
        \toprule
        \textbf{Layers} &
        \textbf{Class 0} & \textbf{Class 1} & \textbf{Class 2} & \textbf{Class 3} & \textbf{Class 4} &
        \textbf{Class 5} & \textbf{Class 6} & \textbf{Class 7} & \textbf{Class 8} &
        \textbf{Avg} \\
        \midrule

        % ---------------- MMVet ----------------
        \rowcolor{mygray}
        \multicolumn{1}{c|}{} & \multicolumn{9}{c|}{\textit{MMVet ($R=192$)}} & \multicolumn{1}{c}{} \\
        A & 28.0 & 28.2 & 28.6 & 28.5 & 27.8 & 28.4 & 28.3 & \best{28.7} & 28.3 & 28.3 \\
        B & \best{28.9} & 28.3 & 27.1 & \best{28.8} & 16.2 & 27.8 & 28.2 & 28.1 & 28.2 & 26.8 \\
        C & 28.4 & \best{28.6} & 25.8 & 28.5 & 14.0 & 28.1 & \best{28.9} & 28.4 & \best{28.4} & 26.6 \\
        \rowcolor{mygreen2}
        D & 28.7 & 28.0 & \best{29.1} & 28.3 & 28.3 & \best{29.5} & 28.8 & 28.5 & 28.2 & \best{28.6} \\
        E & 28.2 & 28.3 & 27.9 & 28.4 & \best{28.6} & 27.9 & 28.1 & 28.0 & 28.2 & 28.2 \\
        \midrule

        % ---------------- TextVQA ----------------
        \rowcolor{mygray}
        \multicolumn{1}{c|}{} & \multicolumn{9}{c|}{\textit{TextVQA ($R=192$)}} & \multicolumn{1}{c}{} \\
        A & 57.56 & 57.56 & 55.94 & 57.51 & \best{57.60} & 57.19 & 57.55 & 57.58 & 57.51 & 57.33 \\
        B & \best{57.62} & 57.58 & 45.43 & 57.54 & 57.57 & 55.57 & 57.57 & 57.55 & \best{57.57} & 56.00 \\
        C & 57.61 & 57.59 & 12.03 & 57.56 & 57.55 & 52.64 & 57.53 & 57.49 & 57.53 & 51.95 \\
        D & 57.58 & 57.58 & 57.44 & \best{57.60} & 57.57 & 57.53 & 57.56 & \best{57.59} & 57.55 & 57.56 \\
        \rowcolor{mygreen2}
        E & 57.59 & \best{57.62} & \best{57.74} & 57.54 & 57.54 & \best{57.54} & \best{57.60} & \best{57.59} & 57.34 & \best{57.57} \\
        \midrule

        % ---------------- SQA ----------------
        \rowcolor{mygray}
        \multicolumn{1}{c|}{} & \multicolumn{9}{c|}{\textit{SQA ($R=192$)}} & \multicolumn{1}{c}{} \\
        A & 68.62 & 68.47 & 63.41 & \best{68.52} & \best{68.72} & \best{68.77} & \best{68.57} & 68.57 & \best{68.67} & 68.04 \\
        B & \best{68.72} & \best{68.67} & 39.43 & 68.27 & 68.57 & 68.17 & \best{68.57} & 68.57 & \best{68.67} & 65.29 \\
        C & 68.52 & 68.62 & 14.83 & 68.17 & 68.67 & 64.35 & 68.52 & 68.57 & 68.57 & 62.09 \\
        D & 68.47 & 68.52 & 68.27 & \best{68.52} & 68.57 & 68.62 & \best{68.57} & 68.62 & 68.52 & 68.52 \\
        \rowcolor{mygreen2}
        E & 68.67 & 68.62 & \best{68.42} & 68.47 & 68.62 & 68.57 & 68.52 & \best{68.67} & \best{68.67} & \best{68.58} \\
        \midrule

        % ---------------- MME ----------------
        \rowcolor{mygray}
        \multicolumn{1}{c|}{} & \multicolumn{9}{c|}{\textit{MME ($R=192$)}} & \multicolumn{1}{c}{} \\
        A & \best{1793} & 1790 & 1702 & 1774 & 1778 & 1785 & \best{1788} & 1780 & 1778 & 1774.2 \\
        B & 1780 & 1783 & 731  & 1787 & \best{1788} & 1777 & 1783 & 1781 & 1786 & 1666.2 \\
        C & 1786 & 1791 & 1433 & 1776 & 1785 & 1784 & \best{1788} & \best{1788} & 1777 & 1745.3 \\
        \rowcolor{mygreen2}
        D & 1788 & \best{1792} & \best{1806} & 1785 & 1784 & 1779 & 1785 & 1784 & 1785 & \best{1787.6} \\
        E & 1782 & 1786 & 1790 & \best{1793} & 1783 & \best{1788} & 1778 & \best{1788} & \best{1794} & 1786.9 \\
        \bottomrule
    \end{tabular}}
\end{table*}

\paragraph{Supplementary Evaluations on Diverse Open Datasets.}
To further validate the generalization and robustness of our approach beyond standard academic benchmarks, we conduct supplementary evaluations on a diverse set of open datasets. These include open-ended conversational evaluation (LLaVA-Bench, MMVet), real-world noisy visual perception (VizWiz), cross-lingual multimodal understanding (MMBench-Chinese), and fine-grained spatial reasoning (SEED-Bench). 

As summarized in Table \ref{tab:supp_open_datasets}, we compare CLASP against a strong token reduction baseline, SparseVLM, across different token retention budgets ($R=192, 128, 64$). While both methods perform comparably well at a higher budget ($R=192$), CLASP demonstrates significantly superior robustness under aggressive pruning conditions. For instance, at an extreme sparsity level of $R=64$, CLASP outperforms SparseVLM by a massive margin of $+6.5\%$ on LLaVA-Bench and $+9.5\%$ on MMB-CN. This substantial gap underscores that our class-adaptive layer fusion and dual-stage pruning mechanisms effectively preserve the critical visual tokens necessary for complex, open-ended generation and cross-lingual semantic alignment, whereas static pruning methods suffer severe degradation.

\begin{table*}[h]
    \centering
    \footnotesize
    \caption{Supplementary evaluations on diverse open datasets using LLaVA-v1.5-7B under varying token budgets. We compare our CLASP framework against SparseVLM. The best results between the two pruning methods at each budget are highlighted in bold.}
    \label{tab:supp_open_datasets}
    \resizebox{0.85\linewidth}{!}{%
    \begin{tabular}{c l c c c c c c}
        \toprule
        \textbf{Budget} & \textbf{Method} & \textbf{LLaVA-B} & \textbf{MMVet} & \textbf{VizWiz} & \textbf{MMB-CN} & \textbf{MMB} & \textbf{SEED} \\
        \midrule
        Unpruned & LLaVA-v1.5-7B & 66.8 & 30.9 & 50.0 & 58.1 & 64.7 & 66.2 \\
        \midrule
        \multirow{2}{*}{R=192} & SparseVLM & 66.1 & 33.1 & 50.5 & 53.7 & \textbf{62.5} & 64.2 \\
         & CLASP (Ours) & \textbf{66.7} & \textbf{33.3} & \textbf{52.1} & \textbf{57.9} & 61.3 & \textbf{65.4} \\
        \midrule
        \multirow{2}{*}{R=128} & SparseVLM & 62.7 & 29.0 & 51.4 & 51.1 & 60.0 & \textbf{63.6} \\
         & CLASP (Ours) & \textbf{65.6} & \textbf{30.0} & \textbf{51.9} & \textbf{57.1} & \textbf{60.7} & 63.0 \\
        \midrule
        \multirow{2}{*}{R=64} & SparseVLM & 57.5 & 24.9 & 50.1 & 46.1 & 56.2 & 56.8 \\
         & CLASP (Ours) & \textbf{64.0} & \textbf{26.2} & \textbf{51.6} & \textbf{55.6} & \textbf{59.1} & \textbf{58.5} \\
        \bottomrule
    \end{tabular}%
    }
\end{table*}

\paragraph{Ablation study of layer-mixture strategies across different task categories.} 
Table~\ref{tab:ablate_layer_mix_qtype_vbars} and Figure~\ref{fig-mme_heatmap-motiavtion} present the performance variations across nine distinct task categories under five representative layer-fusion strategies with a fixed token budget ($R=192$). These strategies (A--E) vary the source of visual features, ranging from shallow-biased mixtures to deep-layer integration. As visualized in the heatmaps and detailed in the table, while deeper layers generally provide superior semantic abstraction, specific tasks exhibit distinct sensitivities to feature depth. For instance, in the MMVet benchmark, Class 4 (Spatial Relations) experiences a severe performance drop with the shallow-focused Strategy B (16.2) compared to the deep-focused Strategy E (28.6), highlighting the necessity of high-level features for spatial reasoning. Similarly, in TextVQA, Class 2 (Text/Symbol Recognition) performs poorly under Strategy C (12.03), whereas Strategy E achieves optimal accuracy (57.74). Notably, the hybrid Strategy D, which fuses early visual cues ($L_5$) with deep semantics ($L_{22}$), consistently achieves the highest average scores across benchmarks like MMVet (28.6) and MME (1787.6), demonstrating that a balanced integration of low-level detail and high-level semantics offers the most robust generalization.

\paragraph{Ablation study of similarity and attention for different classes.} 
Table~\ref{tab:cat_results_r192_colored} and Figure~\ref{fig:ratio_heatmap-motivation} present the performance variations across nine distinct task categories under varying attention selection ratios $p$, where $p$ denotes the proportion of retained tokens based on attention scores (ranging from $0.1$ for high sparsity to $1.0$ for full attention). 
Our results indicate that the optimal attention ratio is highly task-dependent.
First, we observe that the model maintains competitive performance even at lower ratios (\textit{e.g.}, $p=0.3$ or $p=0.5$) across benchmarks like MMVet and SQA, demonstrating the redundancy in standard visual tokens. 
More notably, reducing $p$ can lead to performance gains over the baseline ($p=1.0$). 
For instance, in the MME dataset, Class 3 (Scene Understanding) achieves a peak score of 1828 at $p=0.7$, surpassing the full-attention score of 1814. 
Similarly, TextVQA shows improved accuracy in Class 5 (Counting) at $p=0.3$.
These results demonstrate that the optimal attention ratio is highly category-dependent, with specific tasks achieving peak performance at intermediate $p$ values rather than the full-attention baseline.

\begin{table*}[t]
    \centering
    \footnotesize
    \caption{
        \textbf{Ablation study on the sensitivity of different task categories to the attention selection ratio $p$ (with fixed $R=192$).}
        The columns correspond to specific task types mapped as follows: 
        (0)~Object Identification, (1)~Attribute/Breed ID, (2)~Text/Symbol Recognition, 
        (3)~Scene Understanding, (4)~Spatial Relations, (5)~Counting, 
        (6)~Action/Interaction, (7)~Intention/Function, and (8)~Default.
        Best results are highlighted in \textcolor{MidnightBlue}{\textbf{blue}}, and the row with the robust average score in each dataset is shaded.
    }
    \label{tab:cat_results_r192_colored}
    \resizebox{0.9\linewidth}{!}{%
    \begin{tabular}{l | *{9}{c}}
        \toprule
        \textbf{$p$} 
        & \textbf{Class 0} & \textbf{Class 1} & \textbf{Class 2} & \textbf{Class 3} & \textbf{Class 4} 
        & \textbf{Class 5} & \textbf{Class 6} & \textbf{Class 7} & \textbf{Class 8} \\
        \midrule

        % ---------------- MME ----------------
        \rowcolor{mygray}
        \multicolumn{1}{c|}{} & \multicolumn{9}{c}{\textit{MME ($R=192$)}}\\
        0.1 & 1815 & 1810 & \best{1815} & 1812 & 1815 & 1782 & 1812 & 1812 & 1813 \\
        0.3 & 1809 & \best{1814} & 1770 & 1814 & \best{1818} & 1802 & 1805 & 1809 & 1804 \\
        0.5 & 1799 & 1813 & 1791 & 1812 & 1811 & 1801 & 1813 & \best{1815} & 1811 \\
        \rowcolor{mygreen2}
        0.7 & \best{1817} & 1812 & 1813 & \best{1828} & \best{1818} & 1812 & 1812 & 1814 & 1813 \\
        0.9 & 1816 & 1813 & 1809 & 1816 & 1817 & \best{1816} & 1813 & 1814 & \best{1814} \\
        1.0 & 1814 & \best{1814} & 1814 & 1814 & 1814 & 1814 & \best{1814} & 1814 & \best{1814} \\
        \midrule

        % ---------------- MMVet ----------------
        \rowcolor{mygray}
        \multicolumn{1}{c|}{} & \multicolumn{9}{c}{\textit{MMVet ($R=192$)}}\\
        0.1 & 29.6 & 30.2 & 30.4 & 29.6 & 30.0 & 29.9 & 29.0 & 29.8 & 29.2 \\
        0.3 & 29.4 & 31.0 & \best{30.6} & 29.7 & 30.1 & 30.3 & 29.0 & 30.1 & 30.0 \\
        0.5 & 29.9 & 29.7 & 30.1 & \best{31.0} & 29.4 & 30.0 & 28.8 & 30.1 & 30.4 \\
        0.7 & \best{30.2} & 30.4 & 29.5 & 29.7 & 29.2 & \best{30.5} & 28.6 & 30.2 & 30.5 \\
        \rowcolor{mygreen2}
        0.9 & 29.3 & \best{31.2} & 29.5 & 29.9 & \best{30.4} & 29.6 & \best{30.2} & \best{30.5} & \best{30.7} \\
        1.0 & 29.8 & 29.8 & 29.8 & 29.8 & 29.8 & 29.8 & 29.8 & 29.8 & 29.8 \\
        \midrule

        % ---------------- TextVQA ----------------
        \rowcolor{mygray}
        \multicolumn{1}{c|}{} & \multicolumn{9}{c}{\textit{TextVQA ($R=192$)}}\\
        0.1 & 56.88 & 56.48 & 55.89 & 56.86 & 56.88 & 56.97 & 56.97 & 56.98 & 56.92 \\
        0.3 & 56.96 & 56.66 & 56.24 & 56.90 & 56.90 & \best{57.48} & \best{56.99} & \best{57.02} & \best{57.10} \\
        0.5 & 56.96 & 56.90 & 56.50 & 56.93 & 56.93 & 57.01 & 56.94 & 56.95 & 56.98 \\
        0.7 & 56.96 & 57.07 & 56.85 & 56.93 & 56.98 & 57.07 & 56.92 & 56.99 & 56.90 \\
        0.9 & 56.96 & \best{57.09} & \best{57.00} & 56.94 & 56.96 & 57.00 & 56.97 & \best{57.02} & 56.91 \\
        \rowcolor{mygreen2}
        1.0 & \best{56.99} & 56.99 & 56.99 & \best{56.99} & \best{56.99} & 56.99 & \best{56.99} & 56.99 & 56.99 \\
        \midrule

        % ---------------- SQA ----------------
        \rowcolor{mygray}
        \multicolumn{1}{c|}{} & \multicolumn{9}{c}{\textit{SQA ($R=192$)}}\\
        0.1 & 68.37 & 68.27 & \best{68.37} & \best{69.56} & 68.02 & \best{68.32} & 68.27 & 68.32 & 65.26 \\
        0.3 & 68.28 & 68.12 & 68.17 & 68.82 & 68.22 & \best{68.32} & 68.27 & \best{68.42} & 64.95 \\
        0.5 & 68.32 & 68.27 & \best{68.37} & 68.47 & 68.27 & \best{68.32} & 68.22 & 65.26 & 64.35 \\
        \rowcolor{mygreen2}
        0.7 & \best{68.42} & \best{68.32} & \best{68.37} & 68.22 & 68.37 & \best{68.32} & 68.27 & 65.32 & 65.65 \\
        0.9 & 68.17 & 68.22 & 68.27 & 68.32 & \best{68.67} & 68.22 & \best{68.32} & 65.21 & 66.41 \\
        1.0 & 68.27 & 68.27 & 68.27 & 68.27 & 68.27 & 68.27 & 68.27 & 68.27 & \best{68.27} \\
        \bottomrule
    \end{tabular}%
    }
\end{table*}

\begin{figure}[t]
    \centering
    \includegraphics[width=0.9\linewidth]{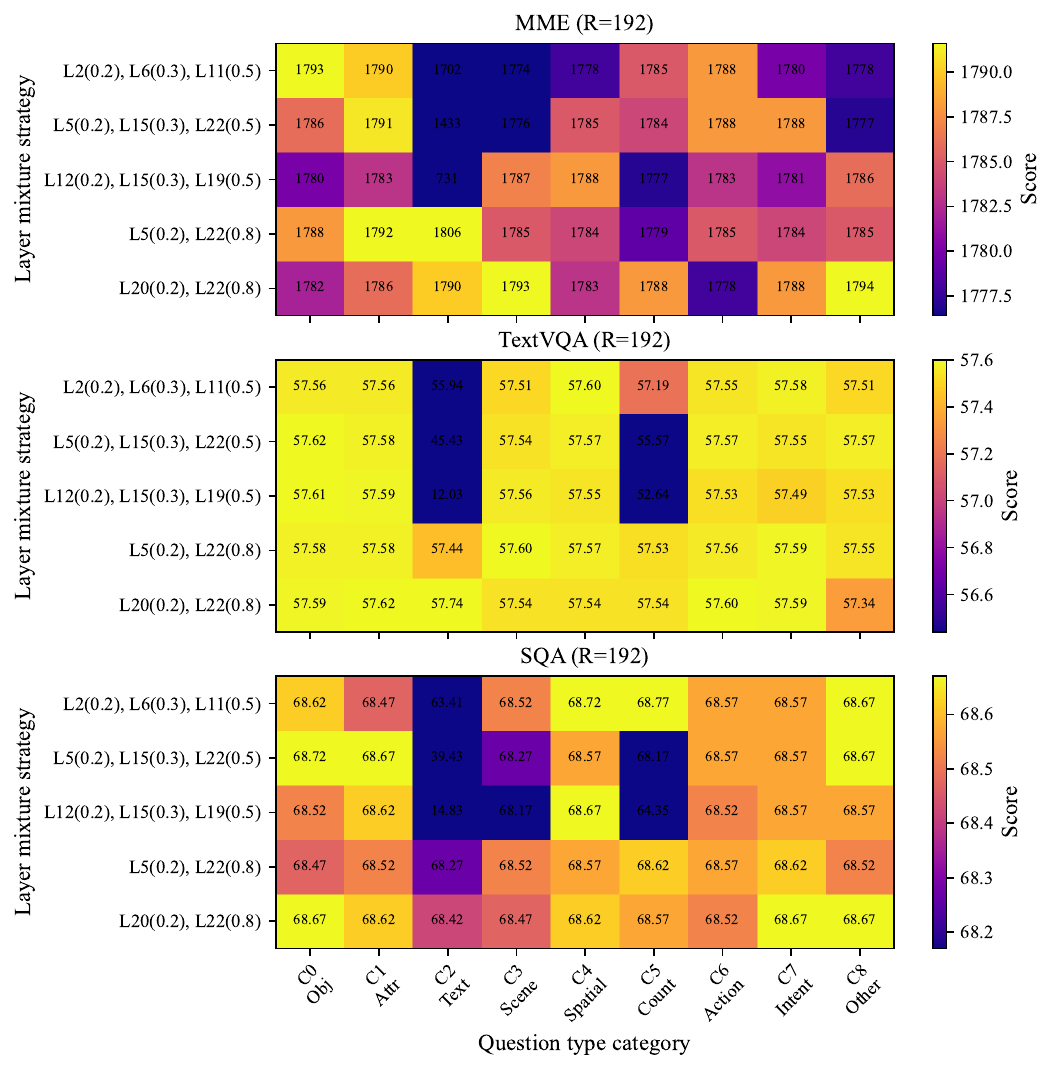}
    \caption{Layer mixture ablation under a fixed token budget ($R=192$) on MME, TextVQA and SQA.
    Rows are layer mixture strategies with weights in parentheses.
    Columns are question types: C0 object identification, C1 attribute or breed identification, C2 text or symbol recognition, C3 scene understanding, C4 spatial relations, C5 counting, C6 action or interaction, C7 intention or function, C8 default.}
    \label{fig-mme_heatmap-motiavtion}
\end{figure}

\begin{figure}[t]
    \centering
    \includegraphics[width=0.9\linewidth]{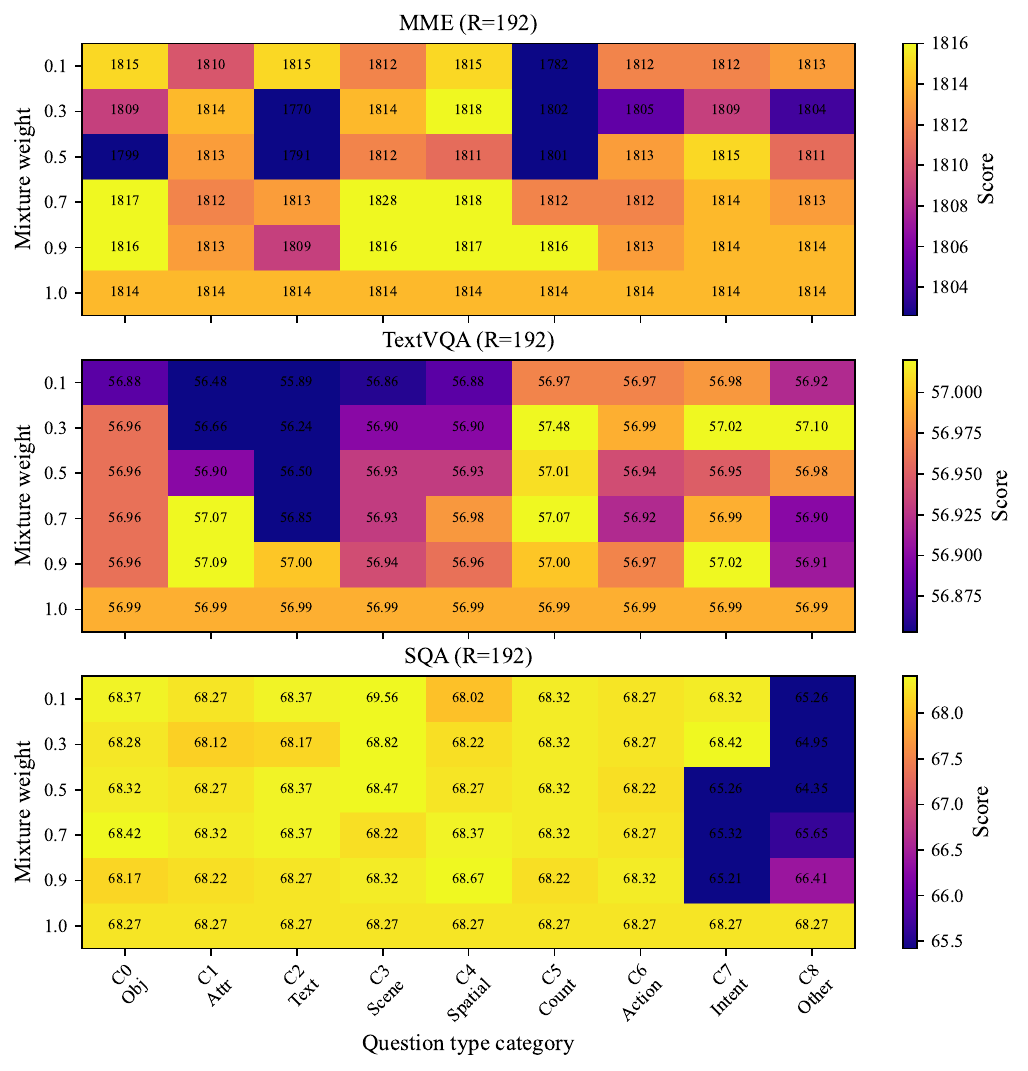}
    \vspace{-0.25em}
    \caption{Ablation on attention and similarity mixture weight under a fixed token budget ($R=192$) on MME, TextVQA and SQA.
    Rows are the mixture ratios. Columns are question types: C0 object identification, C1 attribute or breed identification, C2 text or symbol recognition, C3 scene understanding, C4 spatial relations, C5 counting, C6 action or interaction, C7 intention or function, C8 default.
    }
    \label{fig:ratio_heatmap-motivation}
\end{figure}

\begin{table*}[h]
    \centering
    \footnotesize
    \caption{Impact of the number of clustering iterations on performance ($R=192$).}
    \label{tab:ablate_cluster_iter}
    % 修改此处：将 resizebox 的参数改为 0.7\linewidth
    \resizebox{0.65\linewidth}{!}{%
    \begin{tabular}{c c c c c c c}
        \toprule
        \textbf{Iterations} & \textbf{MME} & \textbf{GQA} & \textbf{MMVet} & \textbf{POPE} & \textbf{SQA} & \textbf{TextVQA} \\
        \midrule
        3  & 1828 & 60.41 & 31.1 & 85.57 & 69.58 & 56.83 \\
        \underline{5}  & \underline{1848} & \underline{60.44} & \underline{31.8} & \underline{85.55} & \underline{69.56} & \underline{57.59} \\
        7  & 1823 & 59.08 & 31.3 & 85.48 & 68.32 & 56.81 \\
        9  & 1762 & 59.02 & 30.5 & 84.33 & 68.37 & 56.92 \\
        11 & 1757 & 58.61 & 30.1 & 84.41 & 68.22 & 56.94 \\
        \bottomrule
    \end{tabular}%
    }
\end{table*}

\paragraph{Clustering iterations.}
Under a fixed retention budget ($R=192$), we ablate the number of clustering refinement steps used in the similarity-driven stage of our pruning (Stage~II), where token affinity is computed by cosine similarity on $\ell_2$-normalized aligned features, \textit{i.e.}, $\mathrm{sim}(t,t')=\mathbf{u}_{n,t}^\top \mathbf{u}_{n,t'}$ (Eq.~\eqref{eq:redundancy-inst}). 
As shown in Table~\ref{tab:ablate_cluster_iter}, $5$ iterations achieves the best overall trade-off, yielding the highest MME score (1848) and the best/near-best performance on GQA (60.44), MMVet (31.8), and TextVQA (57.59), while leaving POPE/SQA essentially unchanged. 
With fewer iterations (\textit{e.g.}, $3$), clustering is under-refined and redundant tokens are insufficiently consolidated.
With more iterations ($\geq 7$), performance drops consistently (\textit{e.g.}, MME decreases to 1823/1762/1757), indicating over-merging or over-smoothing that can erase fine-grained evidence. 
We therefore adopt $5$ clustering iterations as the default setting.

\begin{table*}[t]
    \centering
    \footnotesize
    \caption{\textbf{Effect of initialization under a fixed budget ($R{=}192$).}
    We compare our \textbf{search-derived weight configuration} with three initialization schemes while keeping the training protocol and budget fixed.
    Evaluation covers three general multimodal benchmarks (MMBench, MMVet, MME), four VQA/QA benchmarks (TextVQA, VQA-v2, GQA, SQA), and the hallucination benchmark POPE.
    Higher is better for all metrics. Best and second-best results are \textbf{bold} and \underline{underlined}, respectively.}
    \label{tab:init_r192}
    \resizebox{0.98\linewidth}{!}{%
    \begin{tabular}{l ccc cccc c}
        \toprule
        \textbf{Methods}
        & \textbf{MMBench} & \textbf{MMVet} & \textbf{MME}
        & \textbf{TextVQA} & \textbf{VQA-v2} & \textbf{GQA} & \textbf{SQA}
        & \textbf{POPE} \\
        \midrule
        Search-derived Weights (fixed $W$)
            & \textbf{61.34} & \textbf{33.3} & \textbf{1848}
            & \textbf{57.59} & \underline{77.08} & \textbf{60.44} & \textbf{69.56}
            & 85.55 \\
        Search-derived Weights \textit{init} + Calibrated $W$
            & \underline{60.13} & 32.2 & \underline{1759}
            & \underline{56.15} & \underline{77.08} & \underline{60.22} & \underline{69.39}
            & \underline{85.58} \\
        Penultimate-biased \textit{init} + Calibrated $W$
            & 57.7 & \underline{33.2} & 1722
            & 56.05 & \textbf{77.14} & 59.78 & 68.69
            & \textbf{85.67} \\
        Uniform \textit{init} + Calibrated $W$
            & 17.70 & 12.8 & 1127
            & 30.28 & 62.43 & 43.13 & 47.63
            & 78.03 \\
        \bottomrule
    \end{tabular}%
    }
\end{table*}

\paragraph{Qualitative Analysis.}
Figure~\ref{fig:appendix_case_1_4} and Figure~\ref{fig:appendix_case_5_8} present a qualitative comparison of visual token pruning.
We visualize the results across different network layers and varied pruning ratios ($R$).
In these visualizations, the red bounding boxes denote the ground-truth target regions.
The retained tokens are visualized to illustrate the rationale behind their selection.
Specifically, blue points indicate regions preserved primarily due to their attention significance.
Meanwhile, red points represent regions selected based on similarity metrics.
As clearly observed, our method demonstrates superior capability in semantic preservation compared to state-of-the-art baselines such as PDrop and SparseVLM.
Methods like PDrop and SparseVLM often struggle to maintain the semantic structure.
They tend to lose critical foreground information or retain excessive background noise.
This limitation is particularly evident at a high pruning ratio of $R=88.9\%$.
In contrast, our approach consistently aligns the retained tokens with the target object within the red boxes.
This robust alignment is maintained across all depths, including Layers 2, 6, and 15.
These results indicate that our method effectively filters redundancy.
It successfully maintains a focused representation of the region of interest throughout the inference process.

\paragraph{Search-derived Weights vs.\ Further Calibration.}
To assess the robustness of the learned parameters, we compare our \emph{Search-derived Weights} (fixed $W$) against a variant that starts from the same \emph{Search-derived Weights init} but further calibrates $W$ on the held-out calibration set (\emph{Search-derived Weights init + Calibrated $W$}), under a fixed token budget ($R=192$).
Table~\ref{tab:init_r192} shows that the fixed \emph{Search-derived Weights} yield the strongest overall performance: it improves MMBench (61.34 vs.\ 60.13), MMVet (33.3 vs.\ 32.2), MME (1848 vs.\ 1759), and TextVQA (57.59 vs.\ 56.15), and provides consistent gains on GQA/SQA, while matching VQA-v2 and remaining comparable on POPE.
Moreover, naive alternatives are clearly suboptimal: a penultimate-biased initialization underperforms, and uniform averaging across layers severely degrades all benchmarks, highlighting the effectiveness of our discrete search strategy.

We hypothesize that this gap reflects a mismatch between \emph{continuous optimization behavior} and \emph{what token pruning needs to preserve}.
Vision encoders exhibit a strong depth hierarchy: shallow layers retain high-frequency local evidence (\textit{e.g.}, edges, strokes, and fine layouts) that is crucial for OCR- and counting-heavy queries, whereas deep layers emphasize invariant semantics.
Our \emph{Search-derived Weights} effectively capture this inductive bias by finding an optimal allocation of mixture mass to shallow/mid layers, ensuring that fine-grained evidence is not washed out.
In contrast, further continuous calibration of $W$ on limited data can exhibit shortcut behavior: optimization is naturally attracted to deep-layer features that yield strong, easy-to-fit semantic signals, resulting in a deep-biased mixture that is locally optimal for coarse semantics but dilutes the precise visual cues required by detail-sensitive tasks (\textit{e.g.}, TextVQA).
This explanation is consistent with common ablation patterns reported in recent VLM pruning studies, where shallow/early cues often matter disproportionately for fine-grained perception, and overly aggressive averaging or smoothing across layers harms OCR-centric performance.

\paragraph{Comparison with Static Multi-Layer Fusion Baseline.} 
To further validate the necessity and superiority of our class-adaptive layer fusion strategy, we compare CLASP against a strong static multi-layer fusion baseline. Specifically, we adopt the optimal static layer combination $[2, 17, 23]$ (0-indexed, corresponding to layers 3, 18, and 24) identified as the empirical best practice in recent literature \cite{multilayer_fusion}. 

As shown in Table \ref{tab:static_vs_adaptive}, we evaluate both methods on LLaVA-v1.5-7B across various token retention budgets ($R=192, 128, 64$). While the optimally searched static fusion provides a reasonable baseline, CLASP consistently outperforms it across all budgets and benchmarks. For instance, at a moderate budget of $R=192$, CLASP achieves a significant $+92$ point improvement on MME and $+3.0\%$ on GQA compared to the static baseline. The performance gap widens further under extreme sparsity ($R=64$), where CLASP retains $1709$ on MME versus the static baseline's $1614$ ($+95$ points). These results strongly suggest that different instruction intents inherently require distinct levels of visual abstraction; a fixed combination of layers, even when globally optimized, is fundamentally limited compared to our dynamic, class-conditioned routing approach.

\begin{table*}[h]
    \centering
    \footnotesize
    \caption{Performance comparison between the optimal Static Fusion baseline and our Class-Adaptive Fusion (CLASP) on LLaVA-v1.5-7B under varying token budgets. Best results are highlighted in bold.}
    \label{tab:static_vs_adaptive}
    \resizebox{0.65\linewidth}{!}{%
    \begin{tabular}{c c c c c c}
        \toprule
        \textbf{Budget} & \textbf{Method} & \textbf{MME} & \textbf{GQA} & \textbf{POPE} & \textbf{TextVQA} \\
        \midrule
        Unpruned & LLaVA-v1.5-7B & 1862 & 61.9 & 85.9 & 58.2 \\
        \midrule
        \multirow{2}{*}{R=192} & Static [3, 18, 24] & 1756 & 57.4 & 83.9 & 57.6 \\
         & CLASP (Ours) & \textbf{1848} & \textbf{60.4} & \textbf{85.6} & \textbf{57.6} \\
        \midrule
        \multirow{2}{*}{R=128} & Static [3, 18, 24] & 1701 & 56.2 & 82.7 & 55.3 \\
         & CLASP (Ours) & \textbf{1790} & \textbf{58.9} & \textbf{85.2} & \textbf{56.7} \\
        \midrule
        \multirow{2}{*}{R=64} & Static [3, 18, 24] & 1614 & 54.1 & 78.2 & 51.9 \\
         & CLASP (Ours) & \textbf{1709} & \textbf{57.0} & \textbf{82.8} & \textbf{55.2} \\
        \bottomrule
    \end{tabular}%
    }
\end{table*}

\paragraph{Robustness Analysis across Random Seeds.}
To rigorously verify that the performance improvements achieved by CLASP are statistically significant and not artifacts of random noise or specific initialization configurations, we conducted a variance analysis on the video understanding benchmarks. 
Specifically, we evaluated our method under a $50\%$ retention budget using five different random seeds ($[42, 43, 44, 45, 46]$) for the redundancy-aware clustering initialization. 
As shown in Table~\ref{tab:random_seeds}, the standard deviations across all three video benchmarks (TGIF, MSVD, MSRVTT) are extremely small (ranging from $\pm0.13$ to $\pm0.21$). 
Furthermore, the mean performance across all seeds consistently outperforms the baseline methods evaluated under similar computational constraints. 
This confirms that the dual-stage pruning mechanism---particularly the similarity-driven completion stage---is highly stable and introduces negligible variance, ensuring robust representation compression.

\begin{table}[ht]
    \centering
    \footnotesize
    \caption{\textbf{Variance analysis of CLASP on video benchmarks (TGIF, MSVD, MSRVTT).} 
    Results are reported as mean $\pm$ standard deviation across five random seeds ($42, 43, 44, 45, 46$) under a $50\%$ retention budget.}
    \label{tab:random_seeds}
    \begin{tabular}{l c c c}
        \toprule
        \textbf{Retention Budget ($R$)} & \textbf{TGIF} & \textbf{MSVD} & \textbf{MSRVTT} \\
        \midrule
        $50\%$ & 45.53 $\pm$ 0.18 & 61.72 $\pm$ 0.21 & 51.45 $\pm$ 0.13 \\
        \bottomrule
    \end{tabular}
\end{table}

\paragraph{Robustness of the Prompt-to-Class Router.}
To further evaluate the robustness of our intent classifier and understand the impact of potential routing errors, we conduct two simulated stress tests. 
First, we simulate classifier uncertainty by forcing a specific proportion of samples to be assigned to the ``Default'' category. 
Second, we simulate critical routing failures by randomly misclassifying a certain ratio of samples into incorrect non-default categories.

As shown in Table~\ref{tab:router_robustness}, falling back to the ``Default'' category yields a highly graceful degradation. 
Even when $50\%$ of the samples are forced to the default class, the model maintains competitive performance (\textit{e.g.}, MME merely drops to 1756, and POPE to 83.6). 
In contrast, random misclassification leads to severe performance penalties, with MME dropping to 1722 at a $50\%$ error rate, and experiencing a catastrophic collapse to 1627 at a $100\%$ error rate. 
These findings not only demonstrate the robustness of our routing mechanism but also strongly justify our design choice: when the router encounters ambiguous or out-of-distribution prompts, falling back to a generalized ``Default'' prototype effectively preserves a robust baseline representation. This strategy is significantly safer and more effective than making arbitrary categorical guesses.

\begin{table}[ht]
    \centering
    \footnotesize
    \caption{\textbf{Ablation on router robustness and classification errors.} We compare the impact of forcing a specific ratio of samples to the ``Default'' class (top) versus randomly misclassifying them (bottom). Results demonstrate that falling back to the default class under uncertainty is significantly safer than random assignment.}
    \label{tab:router_robustness}
    \begin{tabular}{l c c c c c}
        \toprule
        \textbf{Forced ``Default'' Ratio} & \textbf{MME} & \textbf{TextVQA} & \textbf{SQA} & \textbf{MMVet} & \textbf{POPE} \\
        \midrule
        $5\%$ & 1840 & 57.4 & 69.5 & 33.3 & 85.5 \\
        $20\%$ & 1799 & 55.0 & 68.7 & 32.8 & 84.7 \\
        $50\%$ & 1756 & 55.1 & 68.0 & 31.3 & 83.6 \\
        $100\%$ & 1715 & 53.7 & 66.7 & 29.6 & 82.2 \\
        \midrule
        \midrule
        \textbf{Random Error Ratio} & \textbf{MME} & \textbf{TextVQA} & \textbf{SQA} & \textbf{MMVet} & \textbf{POPE} \\
        \midrule
        $5\%$ & 1834 & 57.4 & 69.4 & 33.1 & 85.3 \\
        $10\%$ & 1819 & 56.0 & 68.9 & 32.2 & 84.6 \\
        $20\%$ & 1789 & 54.1 & 67.2 & 29.3 & 83.1 \\
        $50\%$ & 1722 & 51.5 & 64.9 & 26.2 & 80.7 \\
        $100\%$ & 1627 & 49.7 & 60.1 & 23.8 & 77.4 \\
        \bottomrule
    \end{tabular}
\end{table}

\paragraph{Detailed Latency and Overhead Breakdown.}
To provide a more granular view of the computational efficiency of our framework and to empirically validate the theoretical complexity discussed in Appendix~C.5, we conduct a micro-benchmark analysis of the inference latency. 
Table~\ref{tab:latency_breakdown} details the time allocation across different components of the pipeline, including Total Prefilling, the Prompt-to-Class Classifier, Multi-layer Weight Fusion, and Spherical K-means clustering, under varying token retention budgets ($R$).

The results clearly demonstrate that the additional operations introduced by CLASP contribute negligibly to the overall inference time. 
Specifically, the text-only intent classifier executes in under $2$ms, and the token-wise weight fusion takes less than $0.5$ms across all settings. 
The K-means clustering refinement (Stage II pruning) requires only $27.2$ms at $R=192$, and its cost naturally scales down to $17.2$ms at $R=64$ as the number of candidate tokens decreases. 
Importantly, these minor overheads are heavily offset by the immense savings in the vision-language decoder. 
By aggressively reducing the sequence length, the Total Prefilling time drops from $125.2$ms to $74.7$ms, and the FLOPs are nearly halved (from $2.67$T to $1.39$T), leading to a highly favorable end-to-end speedup. This confirms that CLASP is an operationally lightweight plug-and-play module.

\begin{table*}[ht]
    \centering
    \footnotesize
    \caption{\textbf{Detailed latency and computational overhead breakdown.} We report the end-to-end Total Time, Memory footprint, FLOPs, and the specific time consumed by distinct pipeline components (Total Prefilling, Classifier, Weight Fusion, and K-means) under varying retention budgets ($R$). The overhead of our proposed components is minimal compared to the overall prefilling savings.}
    \label{tab:latency_breakdown}
    \begin{tabular}{l c c c c c c c}
        \toprule
        \textbf{Retained Tokens ($R$)} & \textbf{Total Time} & \textbf{Memory} & \textbf{FLOPs} & \textbf{Total Prefill} & \textbf{Classifier Time} & \textbf{Weight Fusion} & \textbf{K-means} \\
        \midrule
        192 & 223.1 ms & 17.62 GB & 2.67 T & 125.2 ms & $<2$ ms & $<0.5$ ms & 27.2 ms \\
        128 & 201.7 ms & 17.59 GB & 1.97 T & 109.6 ms & $<2$ ms & $<0.5$ ms & 23.1 ms \\
        64  & 155.8 ms & 17.45 GB & 1.39 T & 74.7 ms  & $<2$ ms & $<0.5$ ms & 17.2 ms \\
        \bottomrule
    \end{tabular}
\end{table*}

\paragraph{Robustness to Router Model Choice.}
To demonstrate that the CLASP framework is robust and not heavily reliant on the specific instruction-following model used for routing, we conduct an ablation study comparing our default text-only router against alternative architectures. 
Table~\ref{tab:router_ablation} details the downstream performance across diverse multimodal benchmarks (MME, TextVQA, SQA, MMVet, and POPE) when substituting the default Qwen3-8B router with Llama3-8B~\cite{grattafiori2024llama} and a larger Qwen3-32B model.

The results clearly demonstrate that while varying the router model introduces slight differences in the intent assignment distribution (indicated by the Count Variance), the overall multimodal performance remains remarkably stable. 
Specifically, the MME score fluctuates marginally between $1840$ and $1848$, and the POPE accuracy remains tightly bounded between $85.3\%$ and $85.7\%$ across all settings. 
Importantly, this stability indicates that our discrete subspace search strategy and category-conditioned fusion mechanisms are highly resilient to minor routing shifts. This confirms that CLASP is effectively model-agnostic with respect to the prompt-to-class router, ensuring reliable and consistent deployment in diverse environments.

\begin{table*}[ht]
    \centering
    \footnotesize
    \caption{\textbf{Performance comparison of CLASP across different prompt-to-class router models.} We report the Count Variance (measuring the intent assignment deviation relative to the default Qwen3-8B router) alongside task accuracy across five representative benchmarks. The downstream performance remains highly stable regardless of the chosen router.}
    \label{tab:router_ablation}
    \begin{tabular}{l c c c c c c}
        \toprule
        \textbf{Router Model} & \textbf{Count Variance} & \textbf{MME} & \textbf{TextVQA} & \textbf{SQA} & \textbf{MMVet} & \textbf{POPE} \\
        \midrule
        Qwen3-8B (Default) & 0    & 1848 & 57.6 & 69.6 & 33.3 & 85.6 \\
        Llama3-8B          & 37.2 & 1840 & 57.5 & 69.4 & 32.9 & 85.7 \\
        Qwen3-32B          & 7.9  & 1848 & 57.7 & 69.4 & 33.1 & 85.3 \\
        \bottomrule
    \end{tabular}
\end{table*}

\paragraph{Comparison with Concurrent State-of-the-Art Methods.}
To further contextualize our contributions against the most recent advancements in the field, we compare CLASP with several concurrent and newly introduced visual token reduction methods: Nuwa~\cite{nuwa2026}, Holov~\cite{holov2025}, and BTP~\cite{btp2025}. We conduct this evaluation on the LLaVA-v1.5-7B architecture under a retention budget of $R=192$ tokens.

As detailed in Table~\ref{tab:concurrent_sota}, CLASP maintains a strong competitive edge against these latest baselines. Most notably, CLASP achieves the highest performance on complex multi-step reasoning benchmarks, securing $1848$ on MME and $33.3$ on MMVet, which substantially outperforms the recent Nuwa model ($1834$ and $30.5$, respectively). On OCR-centric and general scientific QA tasks like TextVQA and SQA, CLASP consistently ranks at or near the top ($57.6$ and $69.6$), demonstrating that our dual-stage class-adaptive pruning preserves essential local visual cues better than fixed-strategy concurrent works. Even on hallucination metrics (POPE), our method remains extremely robust ($85.6$). These results affirm that CLASP represents a leading solution in the rapidly evolving landscape of efficient MLLM inference.

\begin{table}[ht]
    \centering
    \caption{\textbf{Comparison with concurrent state-of-the-art methods.} Evaluation is performed on LLaVA-v1.5-7B with a token retention budget of $R=192$. ``-'' indicates that the result is not reported by the respective authors.}
    \label{tab:concurrent_sota}
    \begin{tabular}{l c c c c c}
        \toprule
        \textbf{Method} & \textbf{MME} & \textbf{TextVQA} & \textbf{SQA} & \textbf{MMVet} & \textbf{POPE} \\
        \midrule
        Nuwa (ICLR 2026)~\cite{nuwa2026}      & 1834 & 57.4 & 68.2 & 30.5 & \textcolor{black}{\textbf{86.4}} \\
        Holov (NeurIPS 2025)~\cite{holov2025} & 1820 & 57.4 & \textcolor{black}{\textbf{69.8}} & -    & 85.6 \\
        BTP (NeurIPS 2025)~\cite{btp2025}     & 1816 & -    & 69.1 & 29.1 & 85.6 \\
        \midrule
        CLASP (ours)                          & \textcolor{black}{\textbf{1848}} & \textcolor{black}{\textbf{57.6}} & 69.6 & \textcolor{black}{\textbf{33.3}} & 85.6 \\
        \bottomrule
    \end{tabular}
\end{table}

\begin{figure*}[t]
    \centering

    \includegraphics[width=0.98\linewidth]{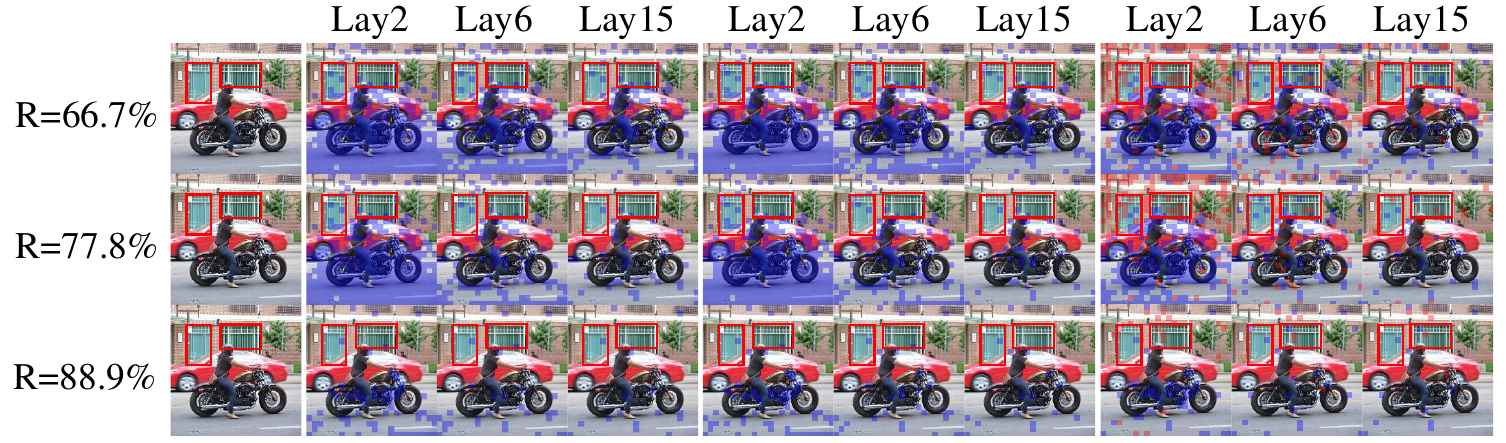}\par\vspace{2pt}
    \includegraphics[width=0.98\linewidth]{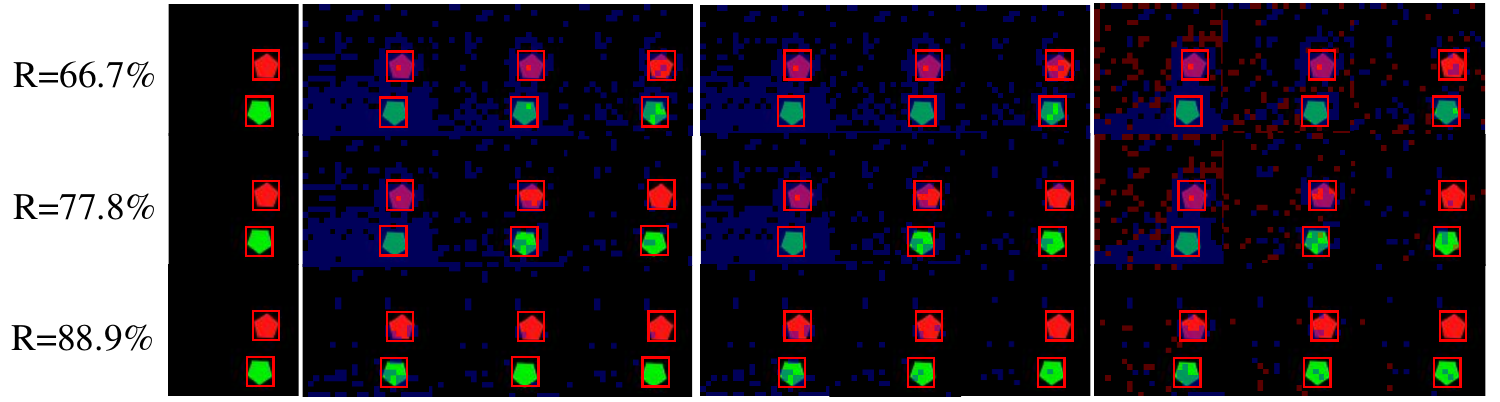}\par\vspace{2pt}
    \includegraphics[width=0.98\linewidth]{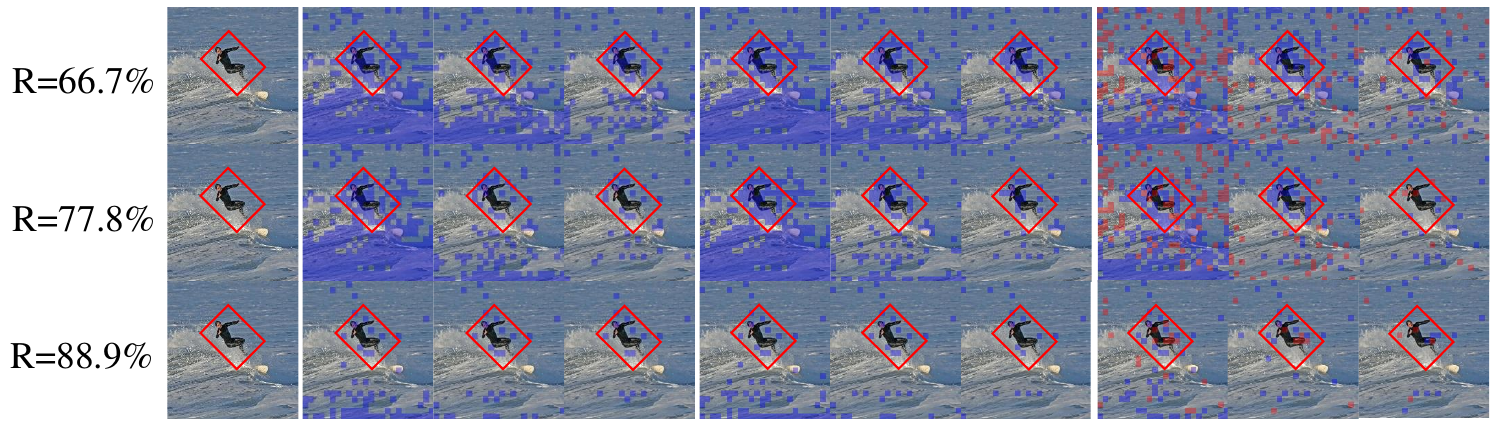}\par\vspace{2pt}
    \includegraphics[width=0.98\linewidth]{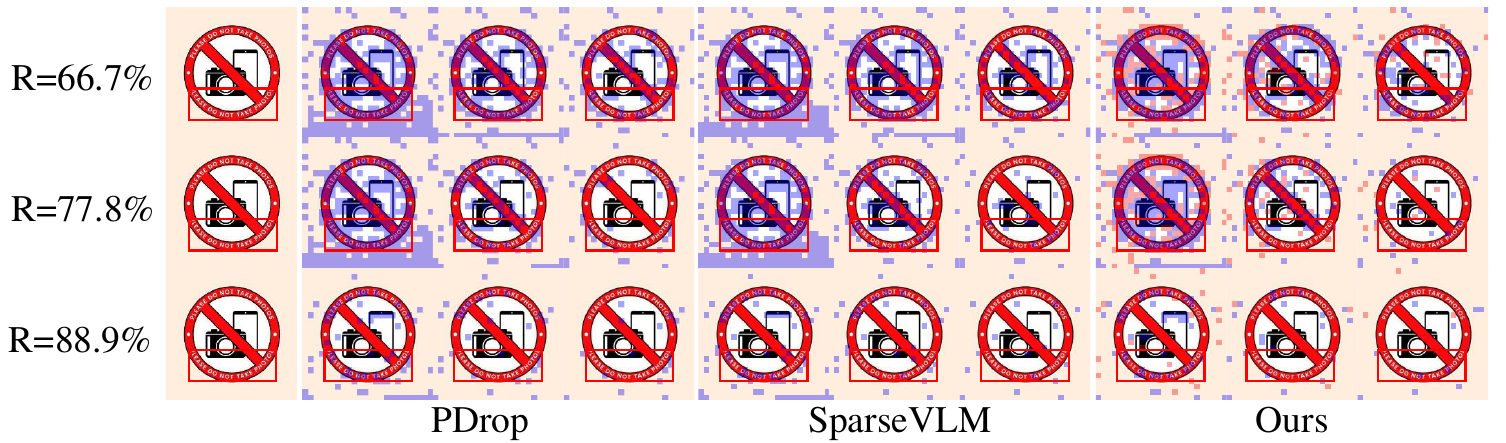}

    \caption{Qualitative comparisons on Cases 1--4. Each case visualizes the original image and the corresponding token-retention map results under three pruning ratios ($R=66.7\%, 77.8\%, 88.9\%$) at layers 2, 6, and 15.}
    \label{fig:appendix_case_1_4}
\end{figure*}

% ===================== Group 2: case_5 - case_8 (4x1) =====================
\begin{figure*}[t]
    \centering

    \includegraphics[width=0.98\linewidth]{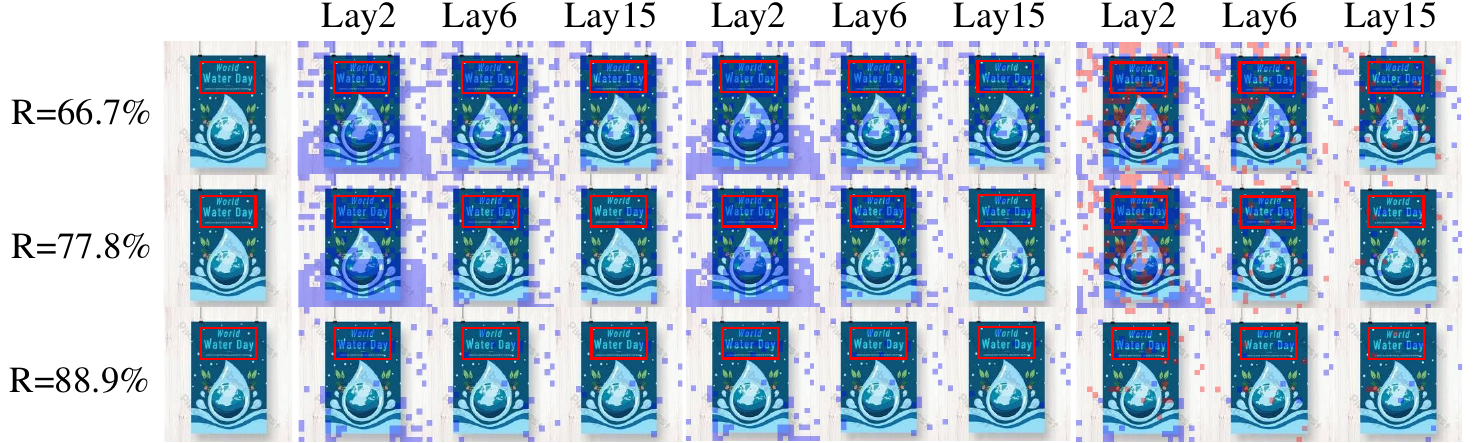}\par\vspace{2pt}
    \includegraphics[width=0.98\linewidth]{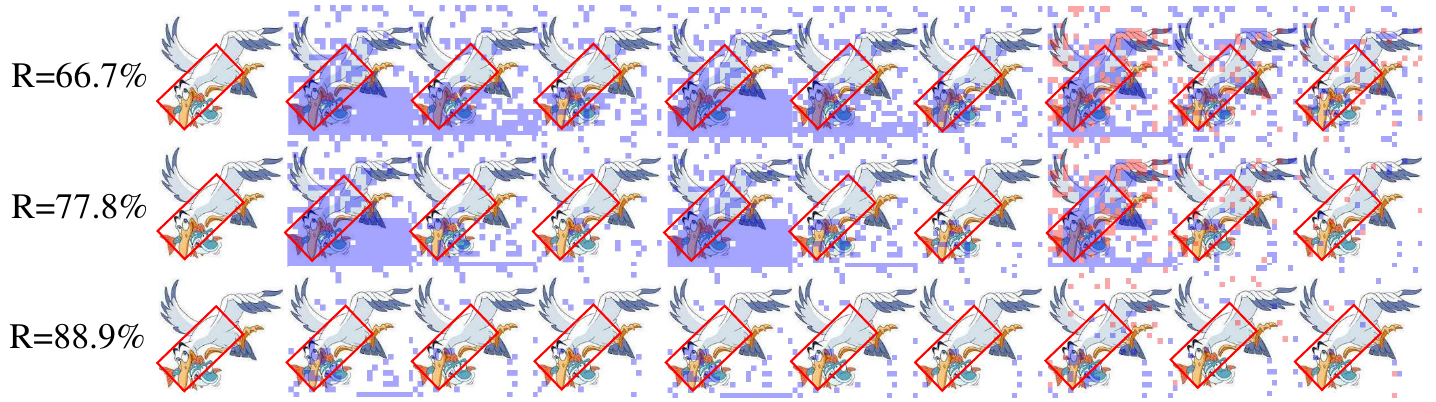}\par\vspace{2pt}
    \includegraphics[width=0.98\linewidth]{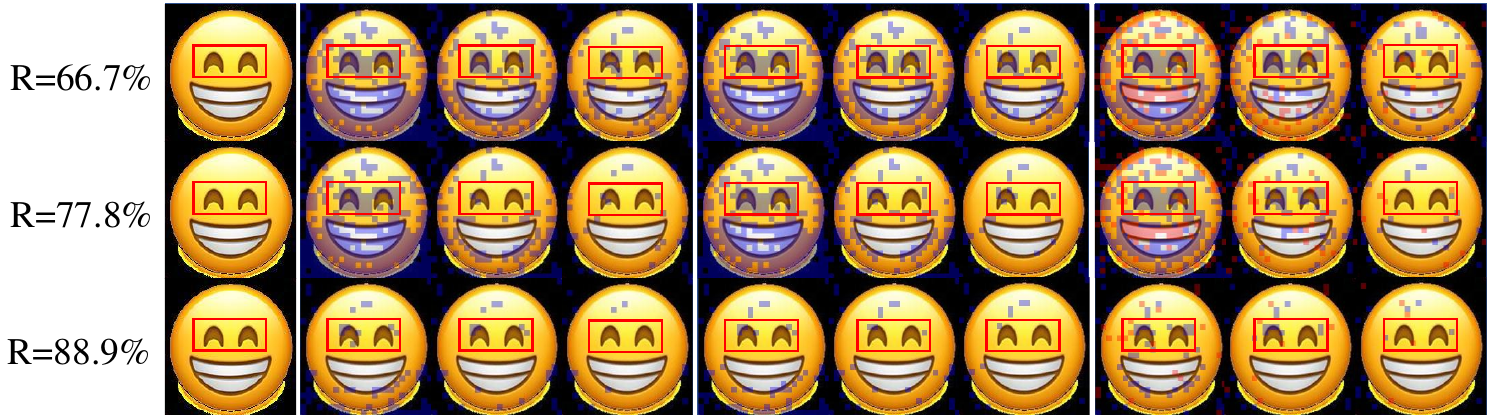}\par\vspace{2pt}
    \includegraphics[width=0.98\linewidth]{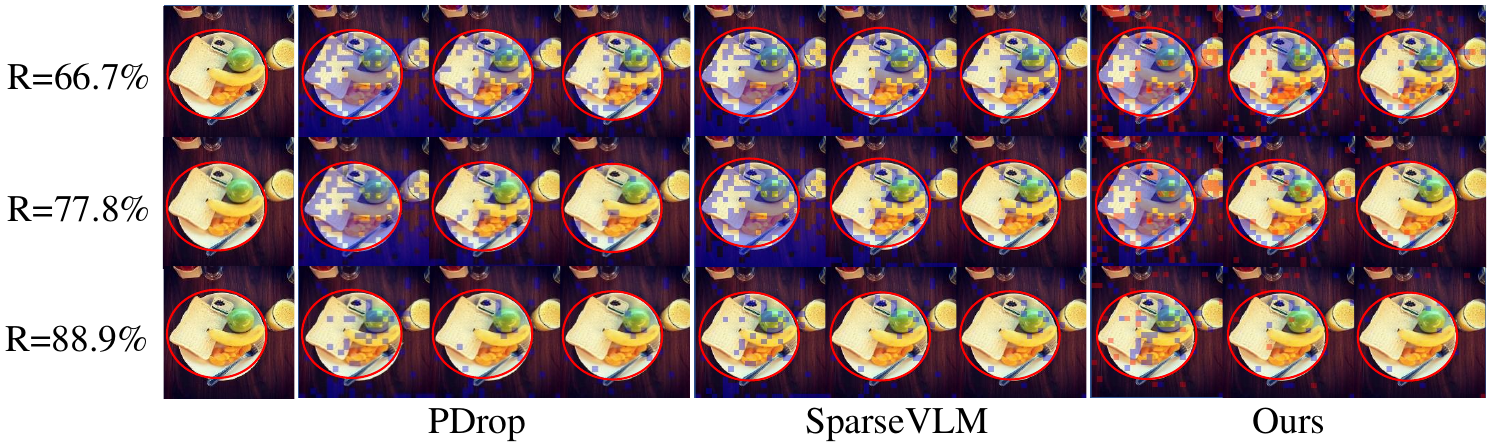}

    \caption{Qualitative comparisons on Cases 5--8. Each case visualizes the original image and the corresponding token-retention map results under three pruning ratios ($R=66.7\%, 77.8\%, 88.9\%$) at layers 2, 6, and 15.}
    \label{fig:appendix_case_5_8}
\end{figure*}

\end{document}